\newtheorem{theorem}{Theorem}
\newtheorem{claim}{Claim}
\crefname{section}{\S}{\S}
\Crefname{section}{\S}{\S}
\crefname{appendix}{App.}{Apps.}
\Crefname{appendix}{App.}{Apps.}
\crefname{theorem}{Thm.}{Thms.}
\Crefname{theorem}{Thm.}{Thms.}
\crefname{proposition}{Prop.}{Props.}
\Crefname{proposition}{Prop.}{Props.}
\crefname{algorithm}{Alg.}{Algs.}
\Crefname{algorithm}{Alg.}{Algs.}
\crefname{assumption}{Asm.}{Asms.}
\Crefname{assumption}{Asm.}{Asms.}
\crefname{mechanism}{Mech.}{Mechs.}
\Crefname{mechanism}{Mech.}{Mechs.}
\newcommand\footnoteref[1]{\protected@xdef\@thefnmark{\ref{#1}}\@footnotemark}
\newcommand{\highlightblock}[1]{\begin{center}\vspace{-0.1cm}\emph{#1}\vspace{-0.1cm}\end{center}}
\newcommand{\myparatightestn}[1]{ \noindent\textbf{{#1}}}
\newcommand{\myparaemphtightestn}[1]{\noindent\emph{{#1}}}
\newcounter{packednmbr}
\newenvironment{packedenumerate}{\begin{list}{\thepackednmbr.}{\usecounter{packednmbr}\setlength{\itemsep}{0.5pt}\addtolength{\labelwidth}{-4pt}\setlength{\leftmargin}{\labelwidth}\setlength{\listparindent}{\parindent}\setlength{\parsep}{1pt}\setlength{\topsep}{0pt}}}{\end{list}}
\newenvironment{packeditemize}{\begin{list}{$\bullet$}{\setlength{\itemsep}{0.5pt}\addtolength{\labelwidth}{-4pt}\setlength{\leftmargin}{\labelwidth}\addtolength{\leftmargin}{-2pt}\setlength{\listparindent}{\parindent}\setlength{\parsep}{1pt}\setlength{\topsep}{0pt}}}{\end{list}}
\NewDocumentCommand{\codeword}{v}{%
\texttt{\textcolor{blue}{#1}}%
}
\newcommand{\revision}[1]{#1}
\newcommand{\size}[2]{{\fontsize{#1}{0}\selectfont#2}}
 \newcommand\blfootnote[1]{%
  \begingroup
  \renewcommand\thefootnote{}\footnote{#1}%
  \addtocounter{footnote}{-1}%
  \endgroup
}
\newcommand{\DPWA}{DPWA}
\newcommand{\DPWAfull}{DP Wasserstein Approximation}
\newcommand{\probname}{DP Synthetic Data via APIs}
\newcommand{\probnamewithunderline}{\underline{D}\underline{P} \underline{S}ynthetic \underline{D}ata via \underline{A}PIs}
\newcommand{\probnameshort}{DPSDA}
\newcommand{\algname}{Private Evolution}
\newcommand{\algnameshort}{PE}
\newcommand{\dpvotingname}{DP Nearest Neighbors Histogram}
\newcommand{\codeurl}{\url{https://github.com/microsoft/DPSDA}}
\newcommand{\cifar}{CIFAR10}
\newcommand{\imagenet}{ImageNet}
\newcommand{\camelyon}{Camelyon17}
\newcommand{\catcookie}{Cat Cookie}
\newcommand{\catruby}{Cat Doudou}
\newcommand{\numiterations}{T}
\newcommand{\numgensamples}{N_\syn}
\newcommand{\numprisamples}{N_\priv}
\newcommand{\privatesampleset}{S_\priv}
\newcommand{\privatesampleclassset}{C}
\newcommand{\generatedsampleset}{S_\syn}
\newcommand{\noisemultiplier}{\sigma}
\newcommand{\threshold}{H}
\newcommand{\alg}{\calM}
\newcommand{\randomsampleapiname}{\size{9}{\textsf{RANDOM\_API}}}
\newcommand{\randomsampleapi}[1]{\randomsampleapiname{}\bra{#1}}
\newcommand{\samplevariationapiname}{\size{9}{\textsf{VARIATION\_API}}}
\newcommand{\samplevariationapi}[1]{\samplevariationapiname{}\bra{#1}}
\newcommand{\embeddingnetworkname}{\Phi}
\newcommand{\embeddingnetwork}[1]{\embeddingnetworkname\bra{#1}}
\newcommand{\packingdegree}{k}
\newcommand{\packing}{lookahead}
\newcommand{\Packing}{Lookahead}
\newcommand{\variationdegree}{v}
\newcommand{\dpvotingfunctionname}{\size{9}{\textsf{DP\_NN\_HISTOGRAM}}}
\newcommand{\dpvotingfunction}[1]{\dpvotingfunctionname{}\bra{#1}}
\newcommand{\distancefunctionname}{d}
\newcommand{\distancefunction}[1]{\distancefunctionname{}\bra{#1}}
\newcommand{\normaldistribution}[2]{\calN\bra{#1,#2}}
\newcommand{\priv}{\mathrm{priv}}
\newcommand{\syn}{\mathrm{syn}}
\newcommand{\supp}{\mathrm{supp}}
\newcommand{\probnotation}{\mathbb{P}}
\newcommand{\probof}[1]{\probnotation\bra{#1}}
\newcommand{\real}{\mathbb{R}}
\newcommand{\bra}[1]{\left( #1 \right)}
\newcommand{\brb}[1]{\left[ #1 \right]}
\newcommand{\brc}[1]{\left\{ #1 \right\}}
\newcommand{\brd}[1]{\left| #1 \right|}
\newcommand{\brn}[1]{\left\lVert #1 \right\rVert}
\newcommand{\norm}[1]{\brn{#1}}
\newcommand{\brnt}[1]{\left\lVert #1 \right\rVert_2}
\newcommand{\calD}{\mathcal{D}}
\newcommand{\calM}{\mathcal{M}}
\newcommand{\calN}{\mathcal{N}}
\newcommand{\calP}{\mathcal{P}}
\newcommand{\tw}{\tilde{w}}
\newcommand{\eps}{\varepsilon}
\newcommand{\E}{\mathbb{E}}
\newcommand{\inpro}[2]{\left\langle #1, #2 \right\rangle}
\newcommand{\intr}{\mathrm{intrinsic}}
\newcommand{\polylog}{\mathrm{polylog}}
\let\cite\citep
\title{\vspace{-1cm}Differentially Private Synthetic Data \\via Foundation Model APIs 1: Images\vspace{-0.4cm}}
\author{%
  Zinan Lin \\
  Microsoft Research\\
  \texttt{\small zinanlin@microsoft.com} \\
  \And
  Sivakanth Gopi \\
  Microsoft Research\\
  \texttt{\small sigopi@microsoft.com} \\
  \And
  Janardhan Kulkarni \\
  Microsoft Research\\
  \texttt{\small jakul@microsoft.com} \\
  \And
  Harsha Nori \\
  Microsoft Research\\
  \texttt{\small hanori@microsoft.com} \\
  \And
  Sergey Yekhanin \\
  Microsoft Research\\
  \texttt{\small yekhanin@microsoft.com}
}
\begin{document}

\maketitle

\vspace{-0.8cm}
\begin{abstract}
\vspace{-0.2cm}
   Generating \emph{differentially private (DP) synthetic data} that closely resembles the original private data %
   is a scalable way to mitigate privacy concerns in the current data-driven world.
   In contrast to current practices that train customized models for this task, 
   we aim to %
   \emph{generate \probname{} 
 (\probnameshort{})}, where we treat foundation models as blackboxes and only utilize their inference APIs.
   Such API-based, training-free approaches are easier to deploy as exemplified by the recent surge in the number of API-based apps. 
   These approaches can also leverage the power of large foundation models which are %
   only accessible %
   via their inference APIs. %
   However, this comes with greater challenges due to strictly more restrictive model access and the %
   need to protect privacy from the API provider.
   
   In this paper, we present a new framework called \emph{\algname{} (\algnameshort{})} to solve this problem and show its initial promise on synthetic images.
   Surprisingly, \algnameshort{} can match or even outperform state-of-the-art (SOTA) methods \emph{without any model training.}
   For example, on \cifar{} (with \imagenet{} as the public data), we achieve FID$\leq$7.9 with privacy cost $\epsilon$ = $0.67$, significantly improving the previous SOTA from $\epsilon$ = $32$. 
   We further demonstrate the promise of applying \algnameshort{} on large foundation models such as Stable Diffusion to tackle challenging private datasets with a small number of high-resolution images. 
   The code and data are released at \codeurl{}.
\end{abstract}

\blfootnote{$^\dagger$ Main updates in arXiv V4: Fix typos in \cref{alg:main,alg:main_full,alg:main_privacy}. }

\vspace{-0.55cm}
\vspace{-0.4cm}
\section{Introduction}
\label{sec:intro}
\vspace{-0.2cm}

\begin{wrapfigure}[12]{R}{0.42\linewidth}
	\centering
        \vspace{-1.7cm}
        \includegraphics[width=1\linewidth]{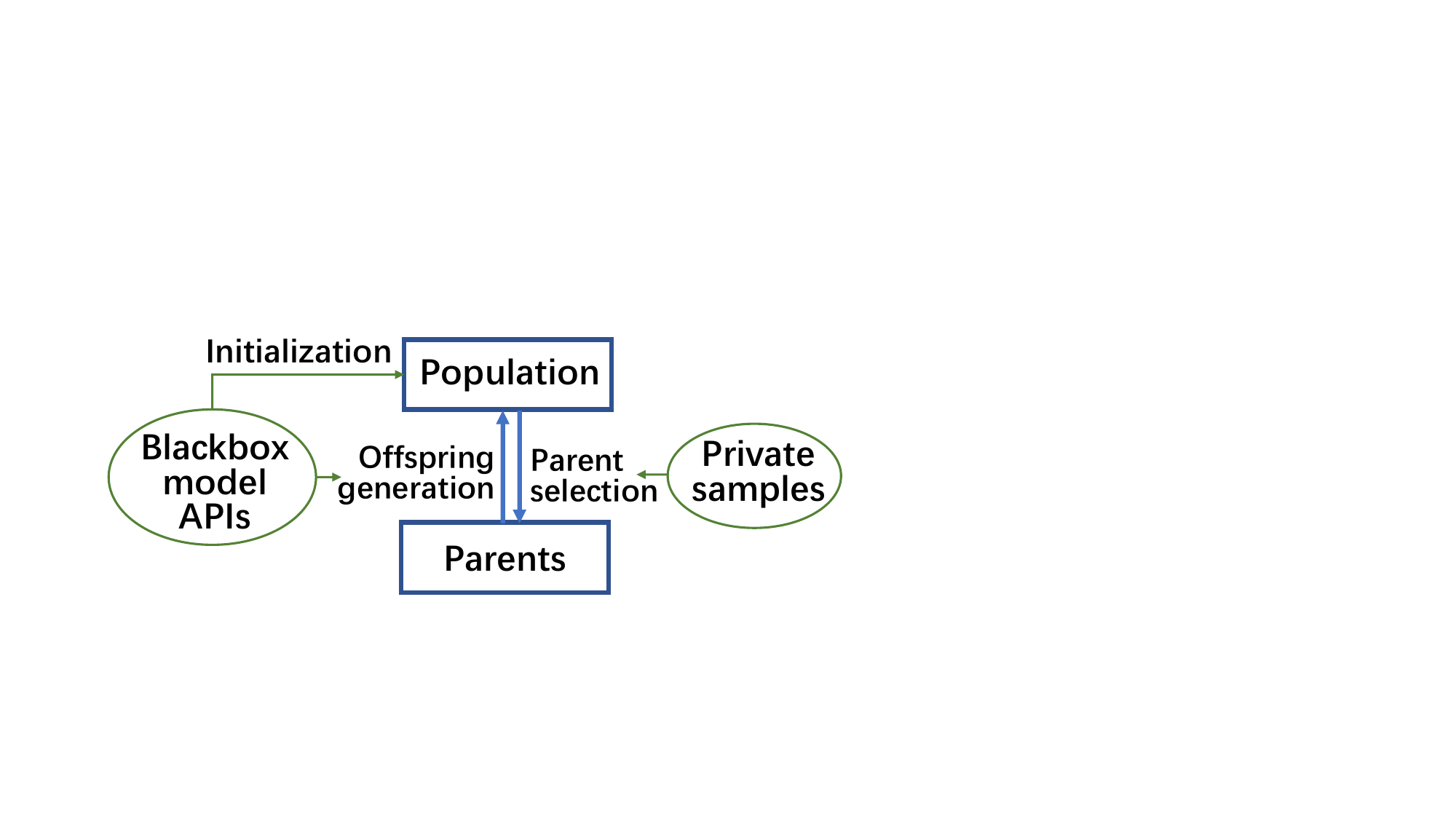}
	\vspace{-0.5cm}
	\caption{We consider the problem of generating DP synthetic data with API access to pre-trained models without any model training. This is in contrast to prior work which assumes full access to pre-trained models and requires training.}
	\label{fig:problem}
\end{wrapfigure}

While data-driven approaches have been successful, privacy is a major concern. For example, statistical queries of a dataset may leak sensitive information about individual users \cite{dwork2014algorithmic}. Entire training samples can be reconstructed from deep learning models \cite{haim2022reconstructing,fredrikson2015model,carlini2021membership,carlini2023extracting,carlini2023quantifying,CarliniLEKS19,CarliniTWJHLRBSEOR21,ChoquetteChooTCP21,TramerSJLJHC22,wang2023decodingtrust}. %
\emph{Differential privacy (DP)} is the gold
standard in quantifying and mitigating these concerns \cite{dwork2006calibrating}. DP algorithms ensure that information about individual samples in the original data cannot be inferred with high confidence from algorithm outputs. 
\emph{Differentially private synthetic data} is the holy grail of DP research~\cite{hu2023sok,jordon2019pate,lin2020using,beaulieu2019privacy,dockhorn2022differentially,yin2022practical,yu2021differentially,he2022exploring,li2021large,ghalebikesabi2023differentially,yue2022synthetic,harder2022differentially,harder2021dp,Savage2023,lin2022data,tang2023privacy}. The goal is to generate a synthetic dataset that is statistically similar to the original data while ensuring DP. The benefits are: 
(1) Thanks to the post-processing property of DP \cite{dwork2014algorithmic}, we can use any existing \emph{non-private} algorithm (e.g., training machine learning (ML) models) as-is on the synthetic data without incurring additional privacy loss. This is more scalable than redesigning and reimplementing every algorithm for DP.
(2) Synthetic data can be shared freely with other parties without violating privacy. This is useful in situations when sharing data is necessary, such as when organizations (e.g., hospitals) want to release datasets to support open research initiatives \cite{beaulieu2019privacy,lin2020using}. (3) Since synthetic data is DP, developers can look at the data directly, which makes algorithm development and debugging a lot easier.

At the same time, with the recent advancement of %
large foundation models, API-based solutions are gaining tremendous popularity, exemplified by the surge of GPT4-based applications. 
In contrast to the traditional paradigm that trains/fine-tunes customized ML models for each application, API-based solutions treat ML models as blackboxes and only utilize APIs\footnote{\label{footnote:openai_api}See \url{https://platform.openai.com/docs/introduction} for examples of APIs. For example, a text completion API can complete a text prompt using a foundation model such as GPT4. An image variation API can produce variations of a given image using a foundation model such as DALLE2.} that provide the input/output functions of the models. 
In fact, many foundation models including GPT4, Bard, and DALLE2 only provide API access %
without releasing model weights or code.
Key reasons for the success of API-based solutions are that APIs offer a clean abstraction of ML and are readily available and scalable. 
Therefore, implementing and deploying these API-based algorithms is easier and faster even for developers without ML expertise. %
Such an approach can also leverage %
powerful foundation models that are only accessible through APIs.
Unfortunately, SOTA DP synthetic data algorithms today are still in the old paradigm \cite{ghalebikesabi2023differentially,li2021large}: they need a customized training process for each dataset, whose implementation requires significant ML engineering efforts (\cref{sec:motivation}).

\begin{figure}[!t]
    \centering
    \vspace{-1.1cm}
     \begin{subfigure}[b]{.44\textwidth}
       \centering      \includegraphics[width=0.9\linewidth]{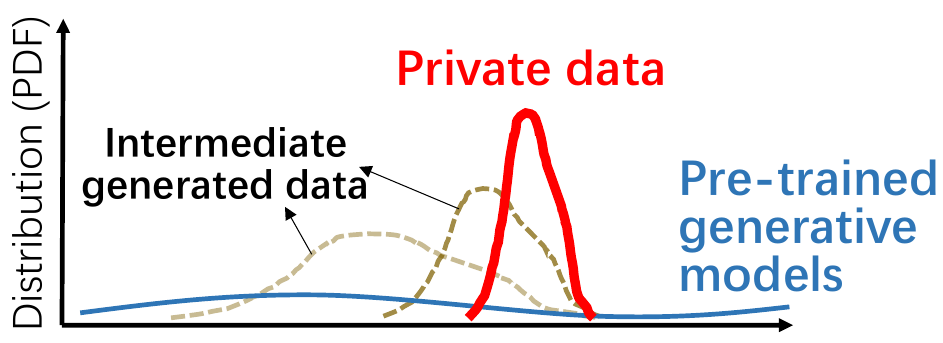}
     \end{subfigure}
     ~~~
     \begin{subfigure}[b]{.41\textwidth}
       	\centering      \includegraphics[width=\linewidth]{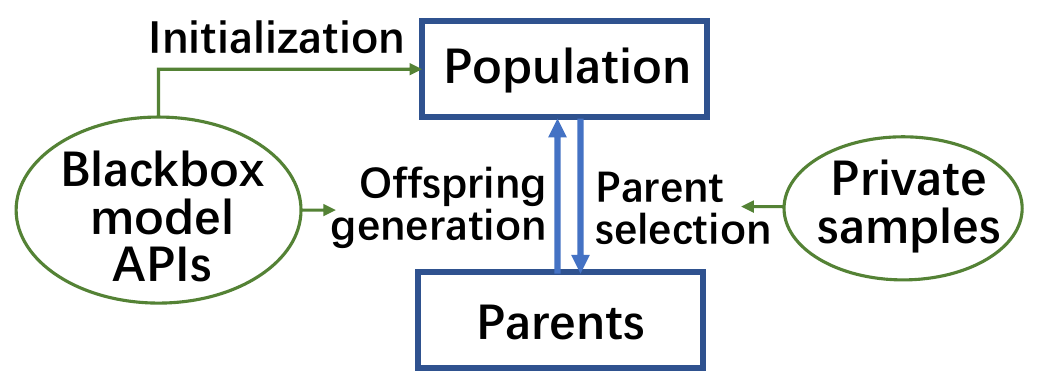}
     \end{subfigure}
     
    \vspace{-0.2cm}
     \caption{\algname{} (\algnameshort{}) framework for DP synthetic data. 
     \textbf{Left: Intuition of \algnameshort{}.} Though private data and pre-trained generative models have very different distributions, the support of the former is likely to be covered by the support of the latter. We gradually shift the distribution of generated data toward private data through \algnameshort{}.
     \textbf{Right: Algorithm of \algnameshort{}.} We maintain a sample set (population), and iteratively select the most similar ones to the private samples (parents) and mutate them to generate the next population (offspring). The initial population and offspring are generated with foundation model APIs. Parent selection is done in DP using private samples.}
    \label{fig:alg}
    \label{fig:dist}
    \vspace{-0.2cm}
\end{figure}
Motivated from these observations, we ask the following ambitious question (\cref{fig:problem}):
\vspace{-0.1cm}
\highlightblock{Can we generate DP synthetic data using blackbox APIs of foundation models?}
\vspace{-0.1cm}

\begin{figure}[t]
    \centering
    \begin{minipage}{0.45\linewidth}
        \centering
        \includegraphics[width=0.88\linewidth]{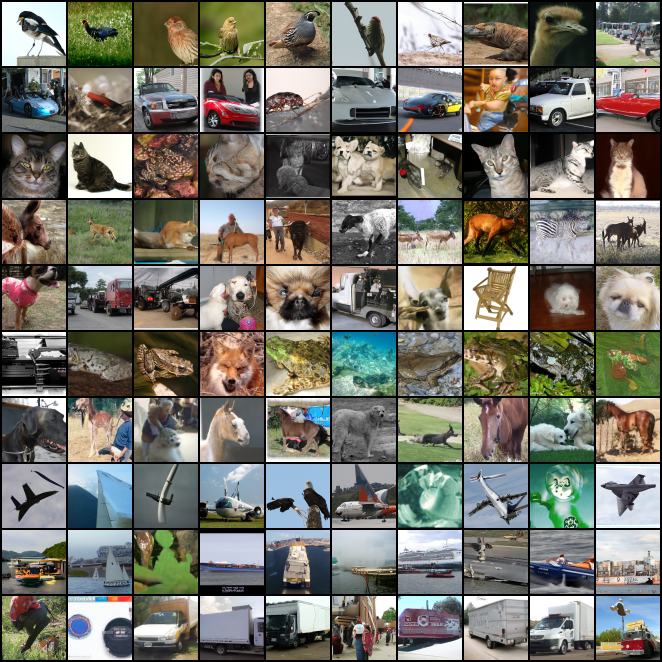}
        \vspace{-0.2cm}
        \caption{Generated samples on \cifar{} with $\bra{0.67,10^{-5}}$-DP. Each row corresponds to one class. FID=7.87. See \cref{app:cifar} for real and generated images side-by-side.}
        \label{fig:cifar_gen_samples}
        \vspace{-0.4cm}
    \end{minipage}
    \hfill
    \begin{minipage}{0.5\linewidth}
	\includegraphics[width=0.9\linewidth]{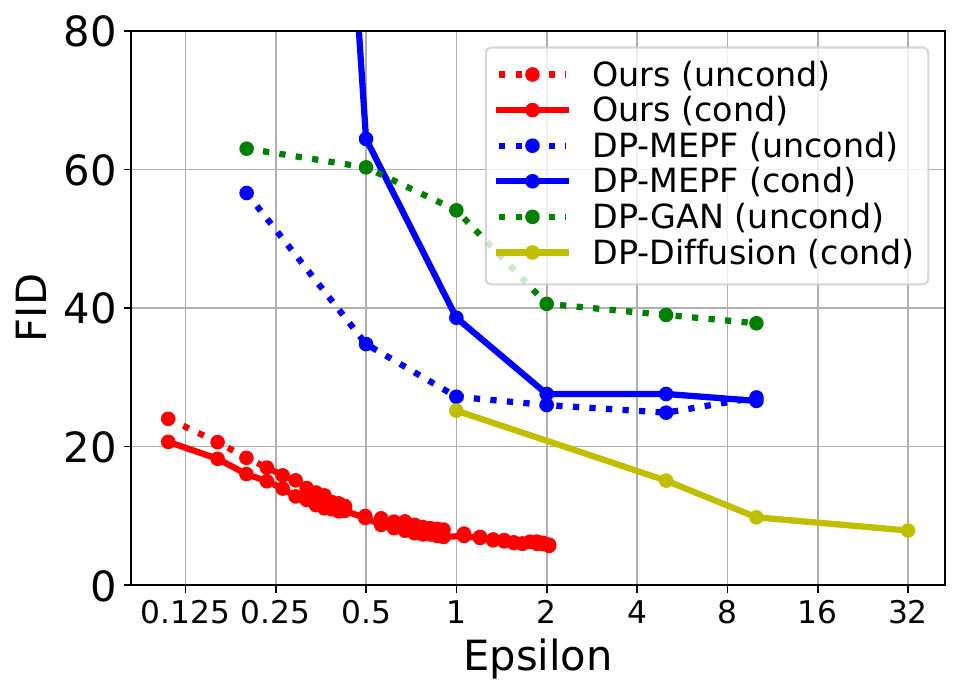}
        \caption{FID \cite{heusel2017gans} (lower is better) v.s. privacy cost $\epsilon$ on \cifar{} ($\delta=10^{-5}$). (Un)cond means (un)conditional generation. Ours achieves the best privacy-quality trade-off compared to DP-MEPF \cite{harder2022differentially}, DP-GAN, DP-Diffusion \cite{ghalebikesabi2023differentially}.}
        \label{fig:cifar_fid_epsilon}
        \vspace{-0.4cm}
    \end{minipage}
\end{figure}

We treat API providers %
as {\em untrusted} entities so we also want to protect user privacy from them, i.e., the API queries we make during generation should also be DP.
If successful, we can potentially democratize the deployment of DP synthetic data in the industry similar to how API-based solutions have facilitated other applications.
This is a challenging task, however, as we do not have access to model weights and gradients by assumption. In this paper, we conduct the first exploration of the potential and limits of this vision on DP synthetic \emph{images}. 
Surprisingly, we show that not only is such a vision realizable, but that it also has the potential to match or improve SOTA training-based DP synthetic image algorithms despite more restrictive model access. Our contributions are:

    \vspace{-0.15cm}
    \myparatightestn{(1) New problem (\cref{sec:problem}).} %
    We highlight the importance of  \emph{\probnamewithunderline{} (\probnameshort)}. Such algorithms are easy to implement and deploy and can leverage the foundation models behind APIs.
    
    \vspace{-0.15cm}
    \myparatightestn{(2) New framework (\cref{sec:alg}).} We propose %
    \emph{\algname{} (\algnameshort{})} algorithm for achieving our goal (\cref{fig:alg}). 
    We consider using 2 popular APIs: random generation %
    and sample variation (i.e., generating a sample similar to the given one).\cref{footnote:dalle,footnote:stablediffusion}
    The key idea is to iteratively use private samples to vote for the most similar samples generated from the blackbox model and ask the blackbox models to generate more of those similar samples. 
    We theoretically prove that the distribution of the generated samples from \algnameshort{} will converge to the private distribution under some modeling assumptions (\cref{sec:theory}). \algnameshort{} only requires (existing) APIs of the foundation models, and does not need any model training.
    
    \vspace{-0.15cm}
    \myparatightestn{(3) Experimental results (\cref{sec:exp}).}\footnote{\revision{In the experiments of this paper, we only experimented with APIs from local models where the user has full control of the model weights and runs it in a controlled environment.}} Some key results are: (a) Surprisingly, without any training, \algnameshort{} can still outperform SOTA training-based DP image generation approaches on some datasets (\cref{fig:cifar_fid_epsilon,fig:cifar_gen_samples}). 
    For example, to obtain FID$\leq7.9$ on \cifar{} dataset, \algnameshort{} (with blackbox access to an \imagenet{}-pre-trained model) only needs $\epsilon=0.67$. In contrast, DP fine-tuning of an \imagenet{}-pre-trained model (prior SOTA) requires $\epsilon=32$~\cite{ghalebikesabi2023differentially}.\\ %
    (b) We show that \algnameshort{} works even when there is significant distribution shift between private and public data. We create a DP synthetic version (with $\eps=7.58$) of Camelyon17, a medical dataset for classification of breast cancer metastases, using the same \imagenet{}-pre-trained model. A downstream classifier trained on our DP synthetic data achieves a classification accuracy of 79.56\% (prior SOTA based on DP fine-tuning is 91.1\% with $\epsilon=10$ \cite{ghalebikesabi2023differentially}).\\ %
    (c) We set up new challenging benchmarks that the DP synthetic image literature has not studied before. We show that with powerful foundation models such as Stable Diffusion \cite{rombach2022high}, \algnameshort{} can work with high-resolution (512x512) image datasets with a small size (100 images), which are common in practice but challenging for current DP synthetic image algorithms.

\vspace{-0.2cm}
\section{Background and Related Work}
\label{sec:background}
\vspace{-0.3cm}

\myparatightestn{Differential Privacy (DP).} 
We say a mechanism $\alg$ is $(\epsilon,\delta)$-DP if for any two neighboring datasets $\calD$ and $\calD'$ which differ in a single entry (i.e., $\calD'$ has one extra entry compared to $\calD$ or vice versa) and for any set $S$ of outputs of $\alg$, we have 
$\probof{\alg\bra{\calD}\in S}\leq e^{\epsilon}\probof{\alg\bra{\calD'}\in S} + \delta$. Intuitively, this means that any single sample cannot influence the mechanism output too much. 

\vspace{-0.1cm}
\myparatightestn{DP synthetic data.} Given a private dataset $\calD$, the goal is to generate a DP synthetic dataset $\alg\bra{\calD}$ which is statistically similar to $\calD$. %
One %
method is to \emph{train generative models from scratch on private data} \cite{lin2020using,beaulieu2019privacy,dockhorn2022differentially} with DP-SGD \cite{abadi2016deep}, a DP variant of stochastic gradient descent. 
Later studies show that \emph{pre-training generative models on public data} before fine-tuning them on private data with DP-SGD \cite{yin2022practical,yu2021differentially,he2022exploring,li2021large,ghalebikesabi2023differentially,yue2022synthetic} gives better privacy-utility trade-offs due to knowledge transfer from public data \cite{yin2022practical,ganesh2023public}, smaller gradient spaces \cite{li2022does}, or better initialization \cite{ganesh2023public}. 
This approach achieves SOTA results on several data modalities such as text and images.
In particular, DP-Diffusion \cite{ghalebikesabi2023differentially} achieves SOTA results on DP synthetic images by pre-training diffusion models %
\cite{sohl2015deep,ho2020denoising} 
on public datasets and fine-tuning them on the private dataset.
Some other methods do not depend on DP-SGD \cite{jordon2019pate,harder2022differentially,harder2021dp,vinaroz2022hermite,cao2021don}. For example, DP-MEPF \cite{harder2022differentially} %
trains generative models to produce synthetic data that matches the (privatized) statistics of the private features. 

Note that all these methods obtain generative models \emph{whose weights are DP}, which can then be used to draw DP synthetic data. It is stronger than our goal which only requires DP synthetic data \cite{lin2021privacy}. In this paper, we do not do any model training and only produce DP synthetic data.

\vspace{-0.3cm}
\section{\probname{} (\probnameshort)}
\label{sec:problem}
\vspace{-0.3cm}

\subsection{Motivation}
\label{sec:motivation}
\vspace{-0.2cm}

As discussed in \cref{sec:background}, SOTA DP synthetic data algorithms require training or fine-tuning generative models with DP-SGD.
There are some obstacles to deploying them in practice.

\myparaemphtightestn{(1) Significant engineering effort.}
Deploying normal ML training pipelines is hard;
deploying \emph{DP} training pipelines is even harder because most ML infrastructure is not built around this use case. %
Recently, there has been significant progress in making DP training more efficient~\cite{li2021large,he2022exploring} and easy to use (Opacus \cite{opacus} and Tensorflow Privacy). 
However, incorporating them in new codebases and new models is highly non-trivial. %
For example, Opacus requires us to implement our own per-sample gradient calculator %
for new layers. 
Common layers and loss functions that depend on multiple samples (e.g., batch normalization) are often not supported.%

\vspace{-0.1cm}
\myparaemphtightestn{(2) Inapplicability of API-only models.} It may be appealing to take advantage of the powerful foundation models in DP synthetic data generation. However, due to the high commercial value of foundation models, many companies choose to only release inference APIs of the models but not the weights or code. Examples include popular models such as DALLE 2 \cite{ramesh2022hierarchical} and GPT 3/4 \cite{brown2020language,openai2023gpt4} from OpenAI and Bard from Google. In such cases, existing training-based approaches are not applicable.\footnote{Some companies also provide model fine-tuning APIs, e.g., \url{https://platform.openai.com/docs/guides/fine-tuning}. However, they do not support DP fine-tuning and do not provide gradients. Also, uploading sensitive data to these APIs controlled by other companies can lead to privacy violations.} 

In contrast, DP synthetic data approaches that only require model inference APIs could potentially be deployed more easily,
as they do not require ML or DP expertise to conduct modifications inside the model and require minimal modifications when switching to a different model (as long as they support the same APIs).  
In addition, such an approach %
is compatible with \revision{both the models running locally and the models behind APIs.}

\vspace{-0.3cm}
\subsection{Problem Formulation}
\label{sec:problem_formulation}
\vspace{-0.2cm}

We now give a formal statement of \probnameshort{}. We first define a core primitive for DP synthetic data.

\vspace{-0.1cm}
\myparatightestn{\DPWAfull{} (\DPWA).} %
Given a private dataset  $\privatesampleset=\brc{x_i: i\in [\numprisamples]}$ with $\numprisamples$ samples (e.g., images), a distance function $d(\cdot,\cdot)$ between samples and some $p\ge 1$, the goal is to design an $(\epsilon,\delta)$-DP algorithm $\alg$ that outputs a synthetic dataset $\generatedsampleset=\brc{x_i':i\in [\numgensamples]}$
with $\numgensamples$ samples (as a multiset) whose %
distance to $\privatesampleset$, $W_p(\privatesampleset, \generatedsampleset)$,
is minimized. Here $W_p$ is the Wasserstein $p$-distance w.r.t. the distance function $d(\cdot,\cdot)$ (see \cref{app:wasserstein} for the definition).

\vspace{-0.1cm}
\myparatightestn{\probnameshort{}.} We want to solve \DPWA{} where $\alg$ is given blackbox access to foundation models trained on public data via APIs.\cref{footnote:openai_api}  API queries should also be $(\epsilon,\delta)$-DP as API providers cannot be trusted.

In some applications, besides the raw samples $x_i$, we may also care about some auxiliary information such as class labels of images. In such cases, we may write  $\privatesampleset=\brc{(x_i,y_i): i\in [\numprisamples]}$ (and $\generatedsampleset=\brc{(x_i',y_i'): i\in [\numgensamples])}$) where $y_i$ (and $y_i'$) is the auxiliary information of $i$-th sample.

When the distance function $d(\cdot,\cdot)$ is $\ell_2$ (in the sample space), \DPWA{} is closely related to DP Clustering~\cite{ghazi2020differentially,su2016differentially,balcan2017differentially} and DP Heatmaps~\cite{ghazi2022differentially}. But a direct application of 
them %
does not work in our setting; see \cref{app:dpclustering} for more discussions. %

\vspace{-0.3cm}
\subsection{Scope of This Work}
\label{sec:scope}
\vspace{-0.3cm}

\myparatightestn{Data type.} %
\revision{In this paper, we instantiate the above framework on \emph{images}.} %
We consider both unconditional (i.e., no $y_i$) and conditional generation (e.g., $y_i$ can be image categories such as cats/dogs). 

\vspace{-0.1cm}
\myparatightestn{APIs.} In our algorithm design and experiments,  we use 2 APIs, both of which are either directly provided in the APIs of popular models (e.g., DALLE 2,\footnote{\label{footnote:dalle}See \url{https://platform.openai.com/docs/guides/images/usage}.} Stable Diffusion\footnote{\label{footnote:stablediffusion}See \url{https://huggingface.co/docs/diffusers/api/pipelines/stable_diffusion/overview}.}) or can be easily implemented by adapting current APIs (e.g., using appropriate text prompts in GPT APIs\cref{footnote:openai_api}): 

    \vspace{-0.1cm}
    (1) $\randomsampleapi{n}$ that randomly generates $n$ samples. Some APIs %
    accept condition information such as text prompts in text-to-image generation,\cref{footnote:dalle,footnote:stablediffusion} which we omit in the notation for simplicity.
    
    \vspace{-0.1cm}
    (2) $\samplevariationapi{S}$ that generates variations for each  sample in $S$. For images, it means to generate similar images to the given one, e.g., with similar colors or objects.%
    \cref{footnote:dalle,footnote:stablediffusion} Some APIs also support setting the variation degree: $\samplevariationapi{S, \variationdegree}$, where larger $\variationdegree$ indicates more variation.\footnote{If this is not implemented, we can simply compose the \samplevariationapiname{} $\variationdegree$ times to achieve it.} %

\vspace{-0.4cm}
\section{\algname{} (\algnameshort{})}
\label{sec:alg}
\vspace{-0.3cm}

Foundation models have a broad and general model of our world from their extensive training data. Therefore, we expect that foundation models can generate samples close to private data with non-negligible probability. The challenge is that by naively calling the APIs, the probability of drawing %
such samples
is quite low.  We need a way to \emph{guide} the generation towards private samples.

\vspace{-0.1cm}
Inspired by \emph{evolutionary algorithms (EA)}  \cite{davis1987genetic} (\cref{app_evolutionary}), we propose \emph{\algname{} (\algnameshort{})} framework for generating DP synthetic data via APIs. See \cref{fig:alg} for the intuition behind \algnameshort{}.
The complete algorithm is in \cref{alg:main}.   %
Below, we discuss the components %
in detail.

\label{sec:implementation}

\setlength{\textfloatsep}{0pt} %
\setlength{\floatsep}{0pt} 
\vspace{-0.4cm}
\begin{algorithm}[thpb]
    \DontPrintSemicolon
    \LinesNumbered
	\BlankLine
	\SetKwInOut{Input}{Input}
	\SetKwInOut{Output}{Output}
	\caption{\algname{} (\algnameshort{})}
    \label{alg:main}
	\Input{
 Private samples: $\privatesampleset=\brc{x_i}_{i=1}^{\numprisamples}$\\
 Number of iterations: $\numiterations$\\
 Number of generated samples: $\numgensamples$\\
 Noise multiplier for \dpvotingname{}: $\noisemultiplier$\\
 Threshold for \dpvotingname{}: $\threshold$
	}
        \Output{
        Synthetic data: $\generatedsampleset$
        }
	\BlankLine
            $S_0 \leftarrow \randomsampleapi{\numgensamples}$ \; \label{line:initial}
    	\For{$t \leftarrow 1, \ldots, \numiterations$}
    	{
                $histogram_t \leftarrow \dpvotingfunction{\privatesampleset,S_{t-1}, \noisemultiplier,\threshold}$ \label{line:dp_voting} \tcp*{See \cref{alg:voting}} 
                $\calP_t \leftarrow histogram_t/\mathrm{sum}(histogram_t)$ \tcp*{$\calP_t$ is a distribution on $S_t$}
                $S_{t}'\leftarrow $ draw $N_\syn$ samples with replacement from $\calP_t$ \label{line:histogram_sampling} \tcp*{$S_t'$ is a multiset}
                $S_{t}\leftarrow\samplevariationapi{S_t'}$ \label{line:offspring}
            }
	\Return{ $S_T$}
\end{algorithm}
\vspace{-0.8cm}
\begin{algorithm}[thpb]
    \DontPrintSemicolon
    \LinesNumbered
	\BlankLine
	\SetKwInOut{Input}{Input}
	\SetKwInOut{Output}{Output}
	\caption{\dpvotingname{} (\dpvotingfunctionname{})}
    \label{alg:voting}
	\Input{Private samples: $S_{\priv}$\\
 Generated samples: $S=\brc{z_i}_{i=1}^{n}$\\
 Noise multiplier: $\noisemultiplier$\\
 Threshold: $\threshold$\\
 Distance function: $\distancefunction{\cdot,\cdot}$
	}
 \Output{DP nearest neighbors histogram on $S$}
	\BlankLine
        $histogram\leftarrow[0,\ldots,0]$\label{line:hist_init}\; 
        \For{$x_{\priv} \in S_{\priv}$\label{line:hist_for}}{ 
            $i=\arg\min_{j\in\brb{n}} \distancefunction{x_{\priv},z_j}$ \label{line:distance}\;
            $histogram[i] \leftarrow histogram[i] + 1$ \label{line:hist_assign}
        }
        $histogram \leftarrow histogram + \normaldistribution{0}{\noisemultiplier I_n}$ \label{line:add_noise}\tcp*{Add noise to ensure DP}
        $histogram \leftarrow \max\bra{histogram -\threshold, 0}$ \tcp*{`max', `-' are element-wise} \label{line:threshold}
	\Return{ $histogram$}
\end{algorithm}
\vspace{-0.4cm}

\myparatightestn{Initial population (\cref{line:initial}).} We use \randomsampleapiname{} to generate the initial population. If there is public information about the private samples (e.g., they are dog images), we can use this information as prompts to the API to seed a better initialization.

\myparatightestn{Fitness function (\cref{line:dp_voting} or \cref{alg:voting})}. 
We need to evaluate how useful each sample in the population is for modeling the private distribution. Our idea is that, if a sample in the population is surrounded by many private samples, then we should give it a high score. 
To implement this, we define the fitness function of a sample $x$ as \emph{the number of private samples whose nearest neighbor in the population is $x$}. A higher fitness value means that more private samples are closest to it. More details are below:

\vspace{-0.1cm}
\myparaemphtightestn{(1) Distance function (\cref{line:distance}, \cref{alg:voting}).} To define ``nearest neighbor'', we need a distance function that measures the %
similarity of two samples. A naive way is to use $\ell_2$ distance $\distancefunction{x,z}=\brnt{x-z}$, where $x$ is from the private dataset and $z$ is from the population. However, it is well-known that $\ell_2$ distance on pixel space is not a good metric for images. For example, 
a small shift of an object can
result in a high $\ell_2$ distance. We therefore compute the $\ell_2$ distance in the embedding space:
\begin{align}
    \distancefunction{x,z}=\brnt{\embeddingnetwork{x}-\embeddingnetwork{z}} \label{eq:distance_no_packing}
\end{align}
where $\embeddingnetworkname$ is a network for extracting image embeddings such as inception embedding 
\cite{szegedy2016rethinking} or CLIP embedding \cite{radford2021learning}. 

\vspace{-0.1cm}
\myparaemphtightestn{(2) \Packing{}.} 
The above approach gives high scores to the good samples in the \emph{current} population. However, as we will see later, these good samples will be modified through \samplevariationapiname{} for the next population. Therefore, it is better to ``look ahead'' to compute the distance based on the modified samples as if they are kept in the population. We modify \cref{eq:distance_no_packing} to compute the distance between the embedding of $x$ and the \emph{mean embedding of $k$ variations of $z$}:
$\distancefunction{x,z}=\brnt{\embeddingnetwork{x}-\frac{1}{\packingdegree{}}\sum_{i=1}^k \embeddingnetwork{z^i}}$, %
where $\packingdegree$ is called \emph{lookahead degree}, and $z^1,\dots,z^k$ are variations of $z$ obtained via $\samplevariationapiname{}$.

\vspace{-0.1cm}
\myparaemphtightestn{(3) Noise for DP (\cref{line:add_noise}, \cref{alg:voting}).}
Because this step utilizes private samples, we need to add noise to ensure DP. We add i.i.d. Gaussian noise from $\normaldistribution{0}{\noisemultiplier}$. %
The privacy analysis is presented in \cref{sec:privacy_analysis}.

\vspace{-0.1cm}
\myparaemphtightestn{(4) Thresholding (\cref{line:threshold}, \cref{alg:voting}).} When the number of generated samples is large, the majority of the histogram will be DP noise added above. To make the signal-noise ratio larger, we set a threshold $\threshold$ to each bin of the histogram. Similar ideas have been used in DP set union \cite{gopi2020differentially}.

\vspace{-0.1cm}
In summary, we called the above fitness function \emph{\dpvotingname{}}. Note that it is not the traditional ``histogram'' on a continuous space that requires binning. Instead, it is a histogram built on the generated samples: the value of $i$-th bin means the (privatized) number of \revision{private samples} whose nearest neighbor among the generated ones is the $i$-th sample. See \cref{app:related_work} for related work.

\myparatightestn{Parent selection (\cref{line:histogram_sampling}).}  We sample from the population according to the \dpvotingname{} so that a sample with more private samples around is more likely to be selected. %

\myparatightestn{Offspring generation (\cref{line:offspring}).}  We use \samplevariationapiname{} to get variants of the parents as offsprings.

Please see \cref{sec:theory} for the convergence analysis of \algnameshort{}.

\vspace{-0.3cm}
\subsection{Conditional Generation}
\label{sec:conditional}
\vspace{-0.3cm}

The above procedure is for unconditional generation. To support conditional generation, i.e., each generated sample is associated with a label such as an image class (e.g., cats v.s. dogs),  we take a simple approach: \emph{we repeat the above process for each class of samples in the private dataset separately}. See \cref{alg:main_full} for the full algorithm.

\vspace{-0.3cm}
\subsection{Generating Unlimited Number of Samples}
\label{sec:algorithm_unlimited}
\vspace{-0.3cm}
Our algorithm in \cref{alg:main} is preset with a fixed number of generated samples $\numgensamples$. 
What if users want more samples afterward?
In prior training-based methods \cite{ghalebikesabi2023differentially}, this is easy to achieve: one can draw an arbitrary number of samples from the trained generative models without additional privacy cost.
In this section, we want to highlight that \algnameshort{} can also do that, again \emph{with API access only}. We can simply generate an unlimited number of samples by \revision{calling variation API multiple times, each with the generated dataset as input: $[\samplevariationapi{\generatedsampleset},\ldots,\samplevariationapi{\generatedsampleset}]$}. In \ref{sec:exp_unlimited_number_of_samples}, we will see that this simple algorithm is sufficient to provide more useful samples for downstream applications.

\vspace{-0.3cm}
\subsection{Privacy Analysis}
\label{sec:privacy_analysis}
\vspace{-0.3cm}
Unlike the analysis of DP-SGD which requires complicated DP composition theorems (e.g., \citet{gopi2021numerical,mironov2017renyi}) due to subsampling, \algnameshort{} does not have subsampling steps and therefore the privacy analysis is rather straightforward.
The DP guarantee of  the \textbf{unconditional version of \algnameshort{} (\cref{alg:main})} can be reasoned as follows:
\vspace{-0.15cm}
\begin{packeditemize}
    \item \textbf{Step 1: The sensitivity of \dpvotingname{} (\cref{line:hist_init,line:hist_for,line:distance,line:hist_assign} in \cref{alg:voting}).} %
    Each private sample only contributes one vote. If we add or remove one sample, the resulting histogram will change by 1 in the $\ell_2$ norm. Therefore, the sensitivity is 1.
    \item \textbf{Step 2: Regarding each \algnameshort{} iteration as a Gaussian mechanism.} \cref{line:add_noise} adds i.i.d. Gaussian noise with standard deviation $\noisemultiplier$ to each bin. This is a standard Gaussian mechanism \cite{dwork2014algorithmic} with noise multiplier $\noisemultiplier$. %
    \item \textbf{Step 3: Regarding the entire \algnameshort{} algorithm as $\numiterations$ \revision{adaptive} compositions of Gaussian mechanisms}, as \algnameshort{} is simply applying \cref{alg:voting}  $\numiterations$ times sequentially.
    \item \textbf{Step 4: Regarding the entire \algnameshort{} algorithm as one Gaussian mechanism with noise multiplier $\noisemultiplier/\sqrt{\numiterations}$.} It is a standard result from \citet{dong2022gaussian} (see Corollary 3.3 therein).
    \item \textbf{Step 5: Computing DP parameters $\epsilon$ and $\delta$.} Since the problem is simply computing $\epsilon$ and $\delta$ for a standard Gaussian mechanism, we use the formula from \citet{balle2018improving} directly.
\end{packeditemize}

\vspace{-0.15cm}
For the \textbf{conditional version of \algnameshort{} (\cref{alg:main_full})}, since it does the unconditional version for each class separately (discussed in \cref{sec:conditional}), adding or removing one sample will only influence the results of one class. For that class, the impact due to the added/removed sample is also bounded, as seen in the privacy analysis above. Therefore, \cref{alg:main_full} is also DP. In fact, we can show that \emph{the privacy guarantee of \cref{alg:main_full} is the same as \cref{alg:main}, and it protects the labels of samples in the same level as DP-SGD \cite{ghalebikesabi2023differentially}}. Please refer to \cref{app:alg} for more details.

This privacy analysis implies that releasing all the (intermediate) generated sets $S_1,\dots,S_T$ also satisfies the same DP guarantees. Therefore \algnameshort{} provides the same privacy even from the API provider.

\vspace{-0.4cm}
\section{Experiments}
\label{sec:exp}
\vspace{-0.4cm}

In \cref{sec:exp_sota}, we compare \algnameshort{} with SOTA training-based methods on standard benchmarks to understand its promise and limitations. 
In \cref{sec:exp_highreso}, we present proof-of-concept experiments to show 
how \algnameshort{} can utilize the power of large foundation models.
We did (limited) hyper-parameter tunings in the above experiments; following prior DP synthetic data work \cite{yu2021differentially,ghalebikesabi2023differentially}, we ignore the privacy cost of hyper-parameter tuning. However, as we will see in the ablation studies (\cref{sec:exp_ablation,app:ablation}), \algnameshort{} stably outperforms SOTA across a wide range of hyper-parameters, and the results can be further improved with better hyper-parameters than what we used. Detailed hyper-parameter settings and more results such as \emph{generated samples} and \emph{their nearest images in the private dataset} are in \cref{app:cifar,app:camelyon,app:stable_diffusion}.

\vspace{-0.3cm}
\subsection{Comparisons to State-of-the-Art}
\label{sec:exp_sota}
\vspace{-0.3cm}

\myparatightestn{Public information.} We use standard benchmarks \cite{ghalebikesabi2023differentially} which treat \imagenet{} \cite{deng2009imagenet} as public data. For fair comparisons, we only use \imagenet{} as public information in \algnameshort{}: \emph{(1) Pre-trained model.} Unlike the SOTA \cite{ghalebikesabi2023differentially} which trains customized diffusion models, we simply use public  \imagenet{} pre-trained diffusion models (pure image models without text prompts) \cite{nichol2021improved}. %
\revision{\randomsampleapiname{} and \samplevariationapiname{} are implemented using the same pre-trained model (see \cref{app:cifar}).}
\emph{(2) Embedding (\cref{eq:distance_no_packing}).} We use   \imagenet{} inception embedding \cite{szegedy2016rethinking}.
\algnameshort{} is not sensitive to embedding choice though and we get good results even with CLIP embeddings (\cref{fig:cifar_ablation_clip} in \cref{app:ablation}).

\myparatightestn{Baselines.} We compare with DP-Diffusion \cite{ghalebikesabi2023differentially}, DP-MEPF \cite{harder2022differentially}, and DP-GAN \cite{harder2022differentially,goodfellow2020generative}.
DP-Diffusion \cite{ghalebikesabi2023differentially} is the current SOTA that achieves the best results on these benchmarks.  Baseline results are taken from their paper.

\begin{wrapfigure}[18]{R}{0.45\linewidth}
    \centering
    \vspace{-0.9cm}
    \includegraphics[width=0.9\linewidth]{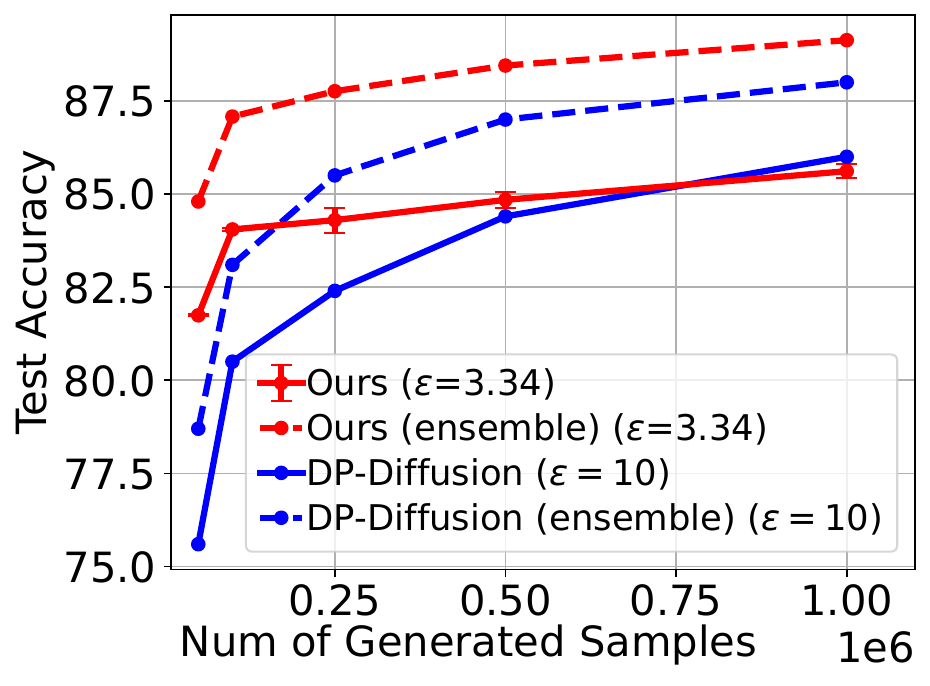}
    \vspace{-0.3cm}
    \caption{Downstream classification accuracy (higher is better) on \cifar{} ($\delta=10^{-5}$). The baseline results are taken from \citet{ghalebikesabi2023differentially}. Two "ensemble" lines are from ensembles of 5 classifiers. The other two lines show the average accuracy of 5 independently trained classifiers with error bars. Our \algnameshort{} achieves better accuracy across almost all settings with smaller privacy costs.}
    \label{fig:cifar_acc_numsamples_more_runs}
\end{wrapfigure}

\myparatightestn{Outline.} 
We test \algnameshort{} on private datasets that are either similar to or differ a lot from \imagenet{} in \cref{sec:exp_cifar,sec:exp_camelyon}.  We demonstrate that \algnameshort{} can generate an unlimited number of useful samples in \cref{sec:exp_unlimited_number_of_samples}. We show that \algnameshort{} is computationally cheaper than train-based methods in \cref{sec:exp_computation}. %

\vspace{-0.2cm}
\subsubsection{Moderate Distribution Shift (\imagenet{} $\rightarrow$ \cifar{})}
\label{sec:exp_cifar}
\vspace{-0.2cm}

We treat \cifar{} \cite{krizhevsky2009learning} as private data. 
Given that both \imagenet{} and \cifar{} are natural images, it is a relatively easy task for \algnameshort{} (and also for the baselines). \cref{fig:cifar_fid_epsilon,fig:cifar_acc_numsamples_more_runs,fig:cifar_gen_samples} show the results. Surprisingly, despite the fact that we consider strictly more restrictive model access and do not need training, \algnameshort{} still outperforms the SOTA training-based methods. Details are below.

\myparatightestn{Sample quality.} \cref{fig:cifar_fid_epsilon} shows the trade-off between privacy cost and FID, a popular metric for image quality \cite{heusel2017gans}. For either conditional or unconditional generation, \algnameshort{} outperforms the baselines significantly. For example, to reach FID$\leq 7.9$, \citeauthor{ghalebikesabi2023differentially} requires $\epsilon=32$, \citeauthor{harder2022differentially} cannot achieve it even with infinity $\epsilon$, whereas our \algnameshort{} only needs $\epsilon=0.67$.

\vspace{-0.1cm}
\myparatightestn{Downstream classification accuracy.} 
We train a downstream WRN-40-4 classifier \cite{zagoruyko2016wide} from scratch on 50000 generated samples and test the accuracy on \cifar{} test set. This simulates how users would use synthetic data, and a higher accuracy means better utility. 
\cref{fig:cifar_acc_numsamples_more_runs} shows the results (focus on the left-most points with num of generated samples = 50000 for now).
\citet{harder2022differentially} achieves 51\% accuracy with $\epsilon=10$ (not shown). Compared with the SOTA \cite{ghalebikesabi2023differentially}, \algnameshort{} achieves better accuracy (+6.1\%) with less privacy cost. Further with an ensemble of 5 classifiers trained on the same data, \algnameshort{} is able to reach an accuracy of 84.8\%. 
\revision{For reference, the SOTA DP classifier pre-trained on ImageNet (without DP) and fine-tuned on CIFAR10 (with DP) \cite{de2022unlocking} achieves 94.8\% and 95.4\% accuracies with epsilon=1 and 2 respectively. It is not surprising that DP classifiers outperform \algnameshort{} (and other DP synthetic data approaches) on classification tasks, as DP classifiers are targeted at and optimized for a single task whereas DP synthetic data is general-purpose.} 

The above results suggest that when private and public images are similar, \algnameshort{} is a promising framework given its better privacy-utility trade-off and the API-only requirement.

\vspace{-0.2cm}
\subsubsection{Large Distribution Shift (\imagenet{} $\rightarrow$ \camelyon{})}
\label{sec:exp_camelyon}
\vspace{-0.3cm}

\begin{figure}[t]
    \centering
    \vspace{-1.1cm}
    \includegraphics[width=0.45\linewidth]{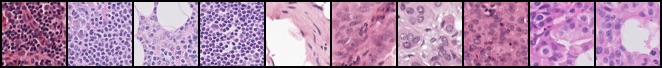}
    \put(-100,20){Real}
    ~
    \includegraphics[width=0.45\linewidth]{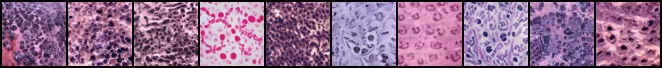}
    \put(-180,20){Generated ($\bra{9.92,3\cdot 10^{-6}}$-DP, FID=10.66)}
    \caption{Real and generated images from \camelyon{}. More in \cref{app:camelyon}.}
    \vspace{0.2cm}
    \label{fig:camelyon_gen_few}
\end{figure}

Next, we consider a hard task for \algnameshort{}, where the private dataset is very different from \imagenet{}. %
We use \camelyon{} dataset \cite{bandi2018detection,koh2021wilds} as private data which contains 302436 images of histological lymph node sections with labels on whether it has cancer (real images in \cref{fig:camelyon_gen_few,fig:camelyon_real}). %
Despite the large distribution shift, training-based methods can update the model weights to adapt to the private distribution (given enough samples). However, \algnameshort{} can only draw samples from APIs as is. %

We find that even in this challenging situation, \algnameshort{} can still achieve non-trivial results. Following  \citet{ghalebikesabi2023differentially}, we train a WRN-40-4 classifier from scratch on 302436 generated samples  and compute the test accuracy. We achieve %
\revision{80.33\%}
accuracy with %
\revision{$(10.00,3\cdot 10^{-6})$-DP}. Prior SOTA \cite{ghalebikesabi2023differentially} is 91.1\% with $(10,3\cdot 10^{-6})$-DP. Random guess is 50\%. \cref{fig:camelyon_gen_few} (more in \cref{fig:camelyon_gen_samples}) shows that generated images from \algnameshort{} are very similar to \camelyon{} despite that the pre-trained model is on \imagenet{}. \cref{fig:camelyon_gen_samples_iteration} further shows how the generated images are gradually moved towards \camelyon{} across iterations. 

\revision{\myparatightestn{Why \algnameshort{} works under large distribution shifts.} 
Even though the diffusion model is trained on natural images, the support of the generated distribution spans the entire sample space. \algnameshort{} is effective in guiding the model to generate samples from the region that is low-density in the original pre-trained distribution but high-density in the private distribution. See \cref{app:camelyon} for more explanations.}

\revision{\myparatightestn{Limitation.}} These results demonstrate the effectiveness of \algnameshort{}. But when public models that are similar to private data are not available and when there is enough private data, the traditional training-based methods are still more promising at this point if the privacy-utility trade-off is the only goal. However, given the benefit of API-only assumption and the non-trivial results that \algnameshort{} already got, it is worth further exploiting the potential of \algnameshort{} in future work. Indeed, we expect these results can be improved with further refinement of \algnameshort{} (\cref{app:ablation}).

\vspace{-0.2cm}
\subsubsection{Generating Unlimited Number of Samples}
\label{sec:exp_unlimited_number_of_samples}
\vspace{-0.3cm}

We use the approach in \cref{sec:algorithm_unlimited} to generate more synthetic samples from \cref{sec:exp_cifar} and train classifiers on them. The results are in \cref{fig:cifar_acc_numsamples_more_runs}. Similar as \citet{ghalebikesabi2023differentially}, the classifier accuracy improves as more generated samples are used. 
With an ensemble of 5 classifiers, we reach 89.13\% accuracy with 1M samples.
\emph{This suggests that \algnameshort{} has the same capability as training-based methods in generating an unlimited number of useful samples.} 

We also see two interesting phenomena: \emph{(1) The gap between \algnameshort{} and DP-Diffusion diminishes as more samples are used.} We hypothesize that it is due to the limited improvement space: As shown in  \citet{ghalebikesabi2023differentially}, even using an \imagenet{} pre-trained classifier, the best accuracy DP-Diffusion achieves is %
close to the best points in \cref{fig:cifar_acc_numsamples_more_runs}. 
\emph{(2) The benefit of \algnameshort{} is more evident over ensembling, especially when having more generated samples.} We hypothesize it is due to different ways of generating more samples. In \citet{ghalebikesabi2023differentially}, the newly generated samples are from the same distribution as the first 50000 samples. In contrast, the newly generated samples in \algnameshort{} are from a different distribution (see \cref{sec:algorithm_unlimited}), which could be more diverse and therefore are more beneficial for ensembling approaches.

    \begin{wrapfigure}[12]{R}{0.45\linewidth}
        \centering
        \vspace{-0.8cm}
        \includegraphics[width=0.8\linewidth]{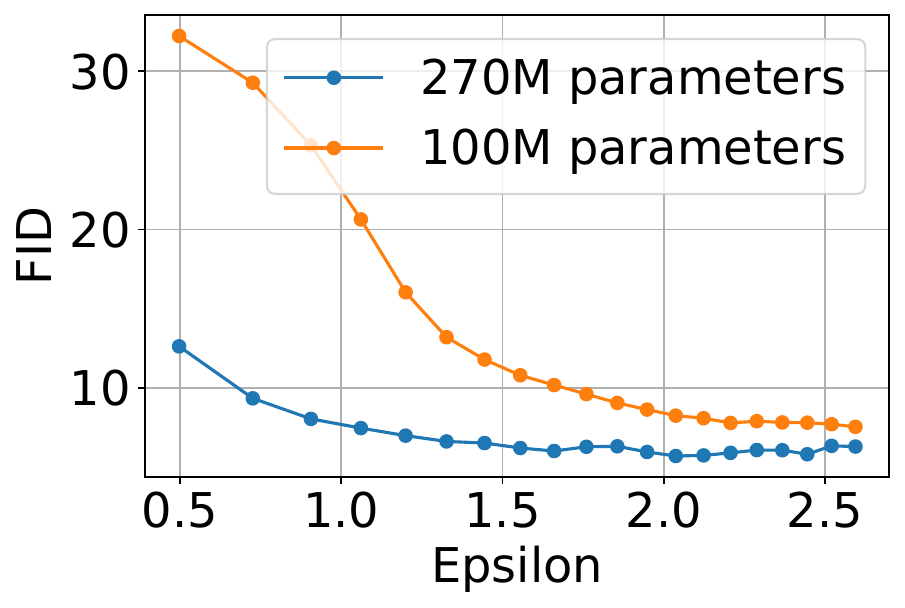}
        \vspace{-0.3cm}
        \captionof{figure}{Ablation studies on the pre-trained model. Both are public diffusion models trained on \imagenet{} \cite{nichol2021improved}. The 270M network is conditional, whereas the 100M network is unconditional.}
        \label{fig:cifar_ablation_pretrained_network}
    \end{wrapfigure}

\vspace{-0.1cm}
\subsection{More Challenging Benchmarks with Large Foundation Models}
\label{sec:exp_highreso}
\vspace{-0.3cm}
We demonstrate the feasibility of applying \algnameshort{} on large foundation models with Stable Diffusion \cite{rombach2022high}.

\myparatightestn{Data.} Ideally we want to experiment with a dataset that has no overlap with Stable Diffusion's training data.\footnote{The training set of Stable Diffusion is public. However, it is hard to check if a public image or its variants (e.g., cropped, scaled) have been used to produce images in it. Therefore, we resort to our own private data.} We take the safest approach: we construct two datasets with photos of the author's two cats that have never been posted online. Each dataset has 100 512x512 images. Such high-resolution datasets with a small number of samples represent a common need in practice (e.g., in health care), %
but are challenging for DP synthetic data: to the best of our knowledge, no prior training-based methods have reported results on datasets with a similar resolution or number of samples.
The dataset is released at \codeurl{} as a new benchmark. %
See \cref{app:stable_diffusion} for all images.

\myparatightestn{API implementation.} We use off-the-shelf Stable Diffusion APIs (see \cref{app:stable_diffusion}).

\myparatightestn{Results.} We run \algnameshort{} for these two datasets with the same hyperparameters. \cref{fig:cat_gen} show examples of generated images for each of the cat datasets. We can see that \algname{} correctly captures the key characteristics of these two cats. See  \cref{app:stable_diffusion} for all generated images.

\begin{figure}[t]
    \centering
    \vspace{-1.1cm}
    \includegraphics[width=0.09\linewidth]{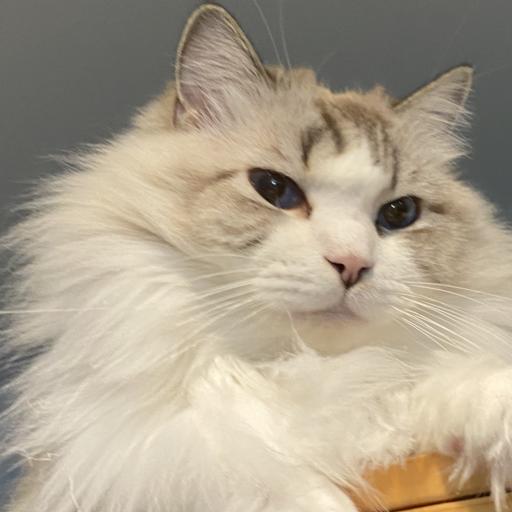}
    \put(-25,40){Real}
    ~
    \includegraphics[width=0.36\linewidth]{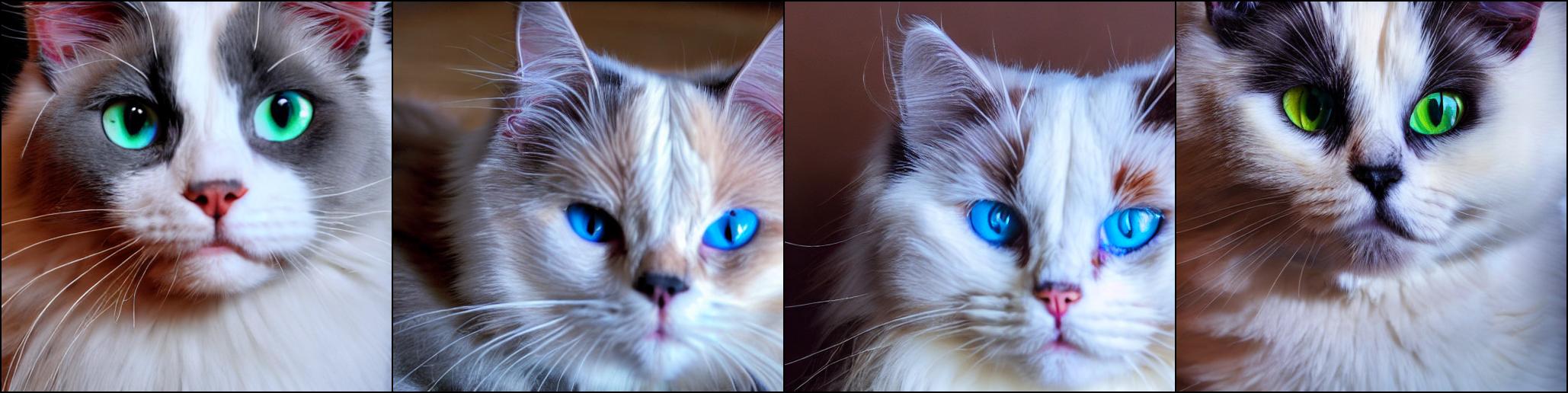}
    \put(-130,40){Generated ($(6.62,10^{-3})$-DP)}
    ~
    \includegraphics[width=0.09\linewidth]{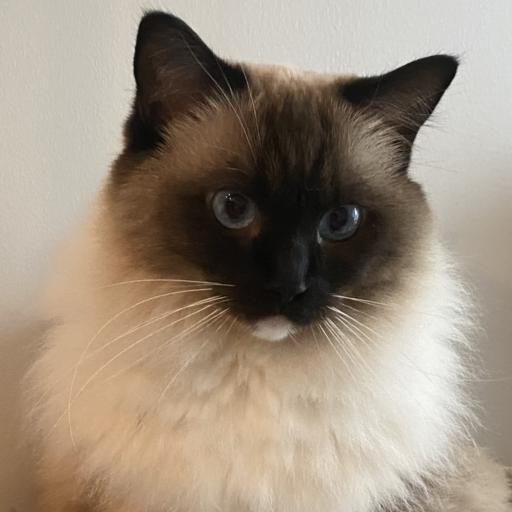}
    \put(-25,40){Real}
    ~
    \includegraphics[width=0.36\linewidth]{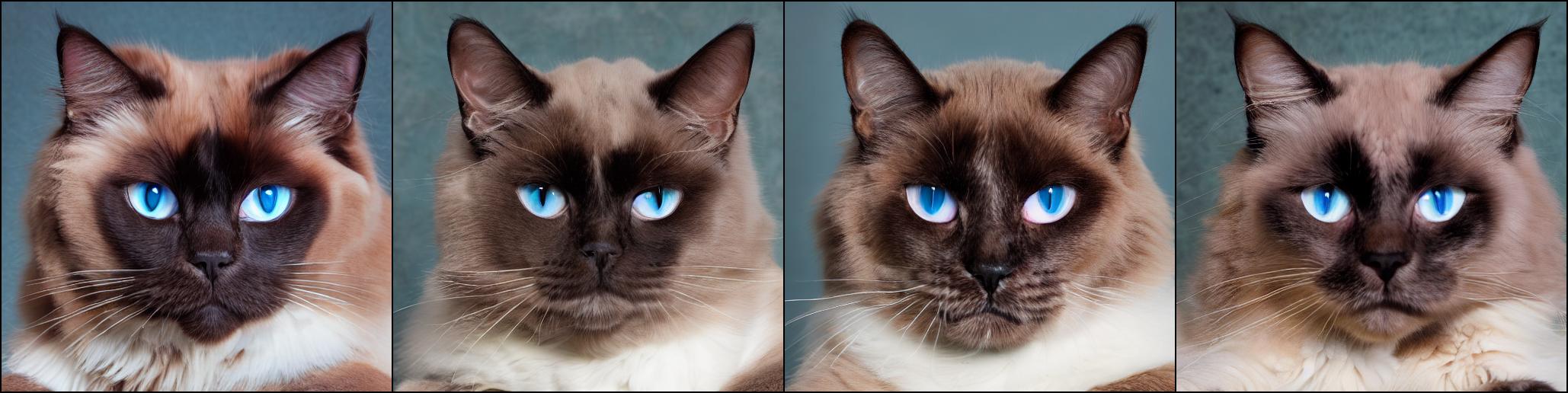}
    \put(-130,40){Generated ($(6.62,10^{-3})$-DP)}
    ~
    \vspace{-0.2cm}
    \caption{Real \& generated images from \catcookie{} (left) and \catruby{} (right). More in \cref{app:stable_diffusion}.}
    \vspace{0.2cm}
    \label{fig:cat_gen}
\end{figure}

\vspace{-0.3cm}
\subsection{Ablation Studies}
\label{sec:exp_ablation}
\vspace{-0.3cm}

\myparatightestn{Pre-trained network.}
\cref{fig:cifar_ablation_pretrained_network} shows the results with two different \imagenet{} pre-trained networks: one is larger (270M) with \imagenet{} class labels as input; the other one is smaller (100M) without label input (see \cref{app:conditional_unconditional} for implementation details). 
In all experiments in \cref{sec:exp_sota}, we used the 270M network. Two takeaways are: (1) The 270M network (trained on the same dataset) improves the results. This is expected as larger and more powerful models can learn public distributions better. %
 This suggests the potential of \algnameshort{} with future foundation models with growing capabilities.
(2) Even with a relatively weak model (100M), \algnameshort{} can still obtain good results that beat the baselines (though with a slower convergence speed), suggesting the effectiveness of \algnameshort{}.

More ablation studies on the \packing{} degree $\packingdegree$, the number of generated samples $\numgensamples$, the threshold $\threshold$, and the embedding network are in \cref{app:ablation}, where we see that \algnameshort{} obtains good results across a wide range of hyper-parameters, and the main results %
can be improved with better hyper-parameters.

\vspace{-0.4cm}
\section{Limitations and Future Work}
\label{sec:discussion}
\vspace{-0.4cm}

\myparatightestn{Algorithms.} (1) %
We did not take the number of API calls into account when optimizing \algnameshort{}.
One future work is to optimize the number of API calls along with privacy-utility tradeoffs.
(2) We considered two APIs: \randomsampleapiname{} and \samplevariationapiname{}. It is interesting to consider \algnameshort{} variants that leverage the large set of APIs.\cref{footnote:dalle,footnote:stablediffusion}
(3) When the distributions of private data and foundation models are too different, \algnameshort{} achieved non-trivial classification results, but was still worse than SOTA (\cref{sec:exp_camelyon}). It is interesting to understand the limits of \algnameshort{} %
and explore potential improvements.
(4) \algnameshort{} requires an embedding network (\cref{eq:distance_no_packing}) that projects the samples into a space for measuring the similarity between samples. While for images there are plenty of open-source embedding networks to choose, %
it may not be the case 
for other modalities. 
(5) Recent papers show the phenomenon of Model Autophagy Disorder (MAD), where repeatedly \emph{training} the next generative model using synthetic data from the previous one can result in degraded sample quality \cite{alemohammad2023self}. While \algnameshort{} also repeatedly uses synthetic data to create new synthetic data in the main loop (\cref{alg:main}), it is different in two aspects: (a) Instead of purely relying on the synthetic data, \algnameshort{} utilizes the signals from private data to guide the generation; (b) \algnameshort{} does repeated \emph{inference} instead of repeated \emph{training}. It would be interesting to study the MAD effect in the context of \algnameshort{}. 
(6) Solving \probnameshort{} in the Local/Shuffle DP model and in federated learning settings.

\vspace{-0.1cm}
\myparatightestn{Applications.} (1) New privacy-preserving vision applications that were previously challenging but are now possible due to \algnameshort{}'s capability of generating %
high-resolution DP synthetic images with small dataset sizes.
(2) The use of 
\algnameshort{} in other data modalities beyond images such as texts, tabular data, and time series data.
(3) Besides DP, there are other parallel/orthogonal privacy concerns, notions, and metrics \cite{issa2019operational,lin2022distributional,lin2023summary,dataarticle}. It is interesting to study if \algnameshort{} can be used to generate privacy-preserving synthetic data with respect to these privacy notations.

\section{\revision{Ethics Statement}}
\revision{
\algnameshort{} uses the APIs of \emph{pre-trained} models. The DP guarantee of \algnameshort{} is rigorous for the data used in the \algnameshort{} algorithm (i.e., $\privatesampleset$). That being said, \algnameshort{} does not address the privacy of \emph{pre-training} data of foundation models, which is a different goal. \algnameshort{} has no control over the pre-training data---any privacy breaches in the pre-training data are attributable to the data holder (e.g., leaking the data publicly) or the foundation model developer (e.g., using data without permission). However, as a \algnameshort{} user, it is advisable to ensure no overlap between the pre-training data and $\privatesampleset$ for liability reasons.
Depending on whether the APIs are from \emph{blackbox models}, which can only be accessed through APIs  (e.g., DALLE3), or \emph{local models}, whose weights and architectures are accessible by the users (e.g., Stable Diffusion), it has different implications.
\begin{packeditemize}
    \item \emph{Using APIs from blackbox models.}  
    Since most blackbox models do not reveal their training dataset, it is safer to only consider $\privatesampleset$ that was never been shared or posted online. %
    For instance, a hospital who wants to share a DP synthetic version of its proprietary medical records can safely run \algnameshort{} if it has never released these medical records to any other party, making it impossible for those records to be in the pre-training data of any foundation model. %
    \item \emph{Using APIs from local models.} For local models, we have full control over the model weights and architectures. We can pre-train the models on data that surely has no overlap with the private data. In all experiments of the paper, we use local models including Improved Diffusion \cite{nichol2021improved} and Stable Diffusion \cite{rombach2022high}. We directly take the pre-trained models from prior work, and we make sure that the private data and the pre-training data have no overlap.
\end{packeditemize}
}

\section*{Acknowledgement}
The authors would like to thank the anonymous reviewers for their valuable feedback and suggestions. 
The authors would also like to thank Sepideh Mahabadi for the insightful discussions and ideas, and Sahra Ghalebikesabi for the tremendous help in providing the experimental details of DP-Diffusion \cite{ghalebikesabi2023differentially}. 
The authors would like to extend their heartfelt appreciation to Cat Cookie and Cat Doudou for generously sharing their adorable faces in the new dataset, as well as to Wenyu Wang for collecting and pre-processing the photos.

$^\dagger$ This paper is the full version of our previous workshop paper \cite{lin2023differentially}.

\bibliography{reference}
\bibliographystyle{iclr2024_conference}

\clearpage
\appendix
\section{More Related Work}
\label{app:related_work}

\vspace{-0.1cm}
\myparatightestn{DP data selection.}
One key component in \algnameshort{} is to use private samples to select similar generated \emph{samples} (\cref{alg:voting}). Prior work studied similar problems in different applications.

In the federated learning setting, \citet{hou2023privately} select public \emph{datasets} that are similar to the clients' data, and then pre-train the model on the selected datasets before federated fine-tuning. \citet{hong2022outsourcing} first cluster public data, and then use clients' data to select the closest cluster centers (using a similar histogram approach), so that models trained on the selected \emph{clusters} can have better performance for the clients. In contrast to these works, \algnameshort{} selects generated samples at a \emph{sample-level}, so that we can improve the generated data in a more fine-grained manner.

Similar to ours, \citet{yu2023selective} also conduct \emph{sample-level} data selection. They select public \emph{samples} that are similar to the private data for pre-training the model, before DP fine-tuning the model on the private data \cite{yu2023selective}.  Their data selection rule does not provide guarantees on the distance between the selected samples and the private samples, whereas our selection provides distribution convergence guarantees (\cref{sec:theory}).

Nevertheless, given that all these methods deal with DP data selection, they can be interchangeably used in each application. 
It would be interesting to study such extensions in future work.

Furthermore, we apply such data selection \emph{iteratively} on the generated data. Together with the use of foundation model APIs, we can generate DP synthetic data, which is not studied in prior work.

\FloatBarrier
\section{Definition of Wasserstein Distance}
\label{app:wasserstein}

Wasserstein distance is a widely used metric in designing \cite{arjovsky2017wasserstein} and evaluating \cite{heusel2017gans} generative models.
Given probability distributions $\mu,\nu$ on a metric space, the Wasserstein distance w.r.t. to a distance function $d(\cdot,\cdot)$ is defined as $W_p(\mu,\nu)=\inf_\gamma \brb{\E_{(x,y)\sim\gamma} d(x,y)^p}^{1/p}$ where the infimum is over all couplings $\gamma$ of $\mu,\nu$. Also given discrete point sets $S,T$, we use $W_p(S,T)$ to denote the $W_p$-distance between uniform distributions on $S$ and $T$.
\FloatBarrier
\section{A Brief Introduction to Evolutionary Algorithms}
\label{app_evolutionary}

Evolutionary algorithms \cite{davis1987genetic}  are inspired by biological evolution, and the goal is to produce samples that maximize an \emph{objective value}. 
It starts with an \emph{initial population} (i.e., a set of samples), which is then iteratively updated. In each iteration, it selects \emph{parents} (i.e., a subset of samples) from the population according to the \emph{fitness function} which describes how useful they are in achieving better \emph{objective values}. After that, it generates \emph{offsprings} (i.e., new samples) by modifying the parents, hoping to get samples with better objective values, and puts them in the population. By doing so, the population will be guided towards better objective values.

We cannot directly apply existing EA algorithms to our problem. Firstly, our objective is to produce a set of samples that are \emph{jointly} optimal (i.e., closer to the private distribution, \cref{sec:problem_formulation}), instead of optimizing an objective calculated from \emph{individual} samples in typical EA problems. In addition, the differential privacy requirement and the restrictive model API access are unique to our problem. These differences require us to redesign all components of EA.
\FloatBarrier
\section{More Details on \algname{}}
\label{app:alg}

\cref{alg:main_full} shows the full algorithm that supports both unconditional data and conditional data.
For simplicity, we assume that the private dataset $\privatesampleset$ is balanced, i.e., it has an equal number of samples in each label class. Otherwise, we can first estimate the counts using the Laplace mechanism and use these counts to generate synthetic data of appropriate size in each label class.

\begin{algorithm}[thpb]
    \DontPrintSemicolon
    \LinesNumbered
	\BlankLine
	\SetKwInOut{Input}{Input}
	\caption{\algname{} (\algnameshort{}) for both labeled and unlabeled data.}
    \label{alg:main_full}
	\Input{The set of private classes: $\privatesampleclassset$ ~~~ 
 ($\privatesampleclassset=\brc{0}$ if for unconditional generation)\\
 Private samples: $\privatesampleset=\brc{(x_i, y_i)}_{i=1}^{\numprisamples}$, where $x_i$ is a sample and $y_i\in \privatesampleclassset$ is its label\\
 Number of iterations: $\numiterations$\\
 Number of generated samples: $\numgensamples$~~~(assuming $\numgensamples~\textrm{mod}~\brd{\privatesampleclassset}=0$)\\
 Noise multiplier for \dpvotingname{}: $\noisemultiplier$\\
 Threshold for \dpvotingname{}: $\threshold$
	}
	\BlankLine
        $\generatedsampleset \leftarrow \emptyset$\;
        \For{$c \in \privatesampleclassset$}{
            $private\_samples \leftarrow \brc{x_i| (x_i, y_i)\in \privatesampleset \text{  and  } y_i=c}$ \;
            $S_0 \leftarrow \randomsampleapi{\numgensamples / \brd{\privatesampleclassset}}$ \; 
    	\For{$t \leftarrow 1, \ldots, \numiterations$}
    	{
                $histogram_t \leftarrow \dpvotingfunction{private\_samples,S_{t-1}, \noisemultiplier,\threshold}$  \tcp*{See \cref{alg:voting}} 
                $\calP_t \leftarrow histogram_t/\mathrm{sum}(histogram_t)$ \tcp*{$\calP_t$ is a distribution on $S_t$}
                $S_{t}'\leftarrow $ draw $N_\syn/|C|$ samples with replacement from $\calP_t$  \tcp*{$S_t'$ is a multiset}
                $S_{t}\leftarrow\samplevariationapi{S_t'}$ 
            }
            $\generatedsampleset\leftarrow \generatedsampleset \cup \brc{(x,c)|x\in S_T}$
        }
	\Return{ $\generatedsampleset$}
\end{algorithm}

\myparatightestn{Privacy analysis.} 
For ease of understanding the privacy guarantee of \cref{alg:main_full}, we can consider a modified version of \cref{alg:main_full} as in \cref{alg:main_privacy}, where we switch the order of the two for loops over $t$ and $c$. Apparently, this modified algorithm gives the same outcome as \cref{alg:main_full}. The only lines that touch private data are \cref{line:privacy_hist_for,line:privacy_hist}. 
The input to these lines is the entire private dataset, and the output is a histogram with size $\numgensamples$. Same as step 1 in the analysis of \cref{alg:main} (\cref{sec:privacy_analysis}), each private sample only contributes one vote in the histogram. If we add or remove one sample, the resulting histogram will change by 1 in the $\ell_2$ norm. Therefore, the sensitivity of these lines is 1. The following privacy analysis follows exactly the same as steps 2-5 in \cref{sec:privacy_analysis}, and therefore, \textbf{the privacy guarantee of \cref{alg:main_full} is the same as \cref{alg:main}.} 
It is important to emphasize that even though \cref{alg:main_full}
utilizes the labels of the samples directly, the above analysis means that \textbf{\cref{alg:main_full} provides privacy protection to the label assignment of the samples (i.e., the labels each sample has) in the same way as the DP-SGD-fine-tuneing-based algorithms \cite{ghalebikesabi2023differentially}}.

\begin{algorithm}[thpb]
    \DontPrintSemicolon
    \LinesNumbered
	\BlankLine
	\SetKwInOut{Input}{Input}
	\caption{\algname{} (\algnameshort{}) for both labeled and unlabeled data. (Modified from \cref{alg:main_full}) for the ease of privacy analysis.)}
    \label{alg:main_privacy}
	\Input{The set of private classes: $\privatesampleclassset$ ~~~ 
 ($\privatesampleclassset=\brc{0}$ if for unconditional generation)\\
 Private samples: $\privatesampleset=\brc{(x_i, y_i)}_{i=1}^{\numprisamples}$, where $x_i$ is a sample and $y_i\in \privatesampleclassset$ is its label\\
 Number of iterations: $\numiterations$\\
 Number of generated samples: $\numgensamples$~~~(assuming $\numgensamples~\textrm{mod}~\brd{\privatesampleclassset}=0$)\\
 Noise multiplier for \dpvotingname{}: $\noisemultiplier$\\
 Threshold for \dpvotingname{}: $\threshold$
	}
	\BlankLine
        $\generatedsampleset \leftarrow \emptyset$\;
        $S_0^c \leftarrow \randomsampleapi{\numgensamples / \brd{\privatesampleclassset}}$ for each $c \in \privatesampleclassset$ \;
        $private\_samples^c \leftarrow \brc{x_i| (x_i, y_i)\in \privatesampleset \text{  and  } y_i=c}$ for each $c \in \privatesampleclassset$\;
        \For{$t \leftarrow 1, \ldots, \numiterations$}{
            \For{$c \in \privatesampleclassset$ \label{line:privacy_hist_for}}
    	{
                $histogram_t^c \leftarrow \dpvotingfunction{private\_samples^c,S_{t-1}^c, \noisemultiplier,\threshold}$  \label{line:privacy_hist}\tcp*{See \cref{alg:voting}} 
            }
            \For{$c \in \privatesampleclassset$}
    	{
                $\calP_t \leftarrow histogram_t^c/\mathrm{sum}(histogram_t^c)$ \tcp*{$\calP_t$ is a distribution on $S_t^c$}
                $S_{t}'\leftarrow $ draw $N_\syn/|C|$ samples with replacement from $\calP_t$  \tcp*{$S_t'$ is a multiset}
                $S_{t}^c\leftarrow\samplevariationapi{S_t'}$ 
            }
        }
        $\generatedsampleset\leftarrow \generatedsampleset \cup \brc{(x,c)|x\in S_T^c, c\in \privatesampleclassset}$\;
	\Return{ $\generatedsampleset$}
\end{algorithm}

\FloatBarrier
\section{Theoretical Evidence for Convergence of PE}
\label{sec:theory}
In this section, we will give some intuition for why \algnameshort{} can solve \DPWA{}. %

\myparatightestn{Convergence of Non-Private Evolution.}
We %
first analyze \cref{alg:main} when no noise is added to the histograms (i.e., we set $\sigma=0$ and $H=0$ in \cref{line:dp_voting}). We %
show that in this case, the evolution algorithm does %
converge to the private distribution in $O(d)$ iterations where $d$ is the dimension of the embedding space. 
Under some reasonable modeling assumptions (see \cref{app:theory}), we prove the following theorem. Here $D$ is the diameter of $\privatesampleset$, $L \approx$ number of variations of each point in $S_t'$ that are added to $S_{t+1}$ in \cref{line:offspring} of \cref{alg:main}.
\begin{theorem}
	\label{thm:nonprivate_analysis}
	Assume that $\log L \ll d$.\footnote{If $\log L \gg d\log (D/\eta)$, i.e., if we generate an exponential number of points then by a simple epsilon-net argument we can prove that the algorithm will converge in a single step.} With probability $\ge 1-\tau$, the non-private evolution algorithm (\cref{alg:main} with $\sigma=H=0$) outputs $\generatedsampleset$ with Wasserstein distance $W_p(\privatesampleset,\generatedsampleset)\le \eta$ after $T$ iterations\footnote{Number of samples produced using $\samplevariationapiname$ per iteration is $\le L\cdot r\cdot N_\priv = L\log(D/\eta)N_\priv$.} $\forall p\in[1,\infty]$ whenever 
	\begin{align}
		\label{eq:nonprivate_T}
		T\gg \frac{d \log(D/\eta)}{\log L}+\log(N_\priv/\tau).
	\end{align}
\end{theorem}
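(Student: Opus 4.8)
The plan is to control, for each private point $x_i$, the quantity $\Delta_t(i) := \min_{z \in S_t}\distancefunction{x_i,z}$, the $d$-distance from $x_i$ to the closest sample in the current population, to show that it contracts geometrically until it falls below $\eta$, and then to read off the Wasserstein bound. Since a two-sided covering of $\privatesampleset$ by $S_T$ at scale $\eta$ yields a coupling of the uniform distributions on the two multisets with $\ell_p$-cost $\le\eta$ for every $p$ (handling multiplicities as in \cref{app:theory}), it suffices to prove $\max_i \Delta_{T+1}(i) \le \eta$ together with the analogous reverse statement, so the whole theorem reduces to this contraction claim.

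First I would record the trivial initialization $\Delta_1(i)\le D$, using the modeling assumption that the pre-trained distribution covers a neighborhood of $\supp(\privatesampleset)$. The heart of the argument is a \emph{one-step progress lemma}: conditioned on the history through iteration $t$, with probability $\ge 1-q$ one has $\Delta_{t+1}(i) \le \gamma\,\Delta_t(i)$ with $\gamma = L^{-c/d}$ for an absolute constant $c>0$. Two facts drive it. (i) The champion $z^\star = \arg\min_{z\in S_t}\distancefunction{x_i,z}$ receives a vote in $\dpvotingfunctionname$ (it is $x_i$'s own nearest neighbor), hence with $\sigma=H=0$ it has positive mass under $\calP_t$ and is retained in $S_t'$ with good probability --- this is where a mild lower bound on $N_\syn/N_\priv$ (or the noiseless convention that $S_t'$ keeps every positively-voted sample) enters. (ii) Feeding $z^\star$ to $\samplevariationapiname$ at the variation degree whose effective radius matches $\Delta_t(i)$ produces $\approx L$ offspring whose law, by the modeling assumptions, has density bounded below on a ball of that radius around $z^\star$; a volume/covering computation then shows the best of these $L$ offspring lands within $L^{-c/d}\Delta_t(i)$ of $x_i$, except with probability $q$. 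The hypothesis $\log L\ll d$ is used precisely to keep $\gamma=L^{-c/d}$ a genuine contraction ($<1$) while still bounded away from $0$; the footnote's regime $\log L\gg d\log(D/\eta)$ is the opposite extreme where a single step already covers everything.

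Iterating, on the event that all $T$ steps succeed for index $i$ we get $\Delta_{T+1}(i)\le D\,L^{-cT/d}\le\eta$ once $T\gtrsim \tfrac{d\log(D/\eta)}{\log L}$. To obtain this for all $\le N_\priv$ private points simultaneously with probability $\ge 1-\tau$ I would either set the per-step failure probability to $q=\tau/(N_\priv T)$ and union-bound (this perturbs $\gamma$ only by a $1+o(1)$ factor), or --- if each step is instead modelled as succeeding with constant probability --- Chernoff-bound the number of successful steps among $T$ and union over $i$; either route forces the additional requirement $T\gtrsim \log(N_\priv/\tau)$, which is the second term of \cref{eq:nonprivate_T}. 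The reverse covering $S_T\subseteq(\privatesampleset)_{O(\eta)}$ then comes essentially for free, since for $t\ge 1$ every member of $S_{t+1}$ is a variation of a champion of some private point and hence already lies within $O(\Delta_t(\cdot))$ of $\privatesampleset$; rescaling $\eta$ by the resulting constant closes the $p\in[1,\infty]$ statement.

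I expect the main obstacle to be the one-step progress lemma, and inside it the uniformity in $d$ of the constant $c$ in $\gamma=L^{-c/d}$: a naive ball-packing estimate for ``the best of $L$ uniform points in a radius-$\rho$ ball is within $\rho'$ of a fixed target'' carries a multiplicative $C^d$ loss that would overwhelm the $L^{1/d}$ gain as soon as $\log L\ll d$, so the proof must exploit the precise structure posited for $\samplevariationapiname$ in \cref{app:theory} (morally: one step refines $\Theta(\log L)$ directions at full resolution, so $T$ steps shrink the reachable region by a factor $L^T$ in volume while the histogram pruning always keeps the branch aimed at each $x_i$). A secondary technical point is adaptivity: $S_{t+1}$ depends on all voting and resampling randomness through step $t$, so the contraction lemma must be stated and composed along the natural filtration, with the concentration/union bound applied to the submartingale-like quantity $\log(D/\Delta_t(i))$.
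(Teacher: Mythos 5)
Your overall skeleton (per\textendash private\textendash point distance to the population, geometric contraction per step, Chernoff over iterations plus a union bound over the $N_\priv$ points, monotonicity because the champion is retained) is indeed the paper's skeleton, but two substantive pieces of your plan do not hold up. First, the reduction: you reduce the theorem to a \emph{two-sided} covering of $\privatesampleset$ and $S_T$ at scale $\eta$ and then couple the \emph{uniform} distributions on the two multisets. A two-sided ($\eta$-Hausdorff) covering does not by itself bound $W_p$ between uniform distributions --- mass must be matched, and nearest-neighbor maps do not preserve uniform marginals (already $S=\{0,1\}$ versus the multiset $T=\{0,0,1\}$ has Hausdorff distance $0$ but positive $W_1$). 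Worse, the reverse covering at scale $O(\eta)$ is false for the process analyzed in \cref{app:theory}: every iteration, $\samplevariationapiname{}$ adds, for each selected parent, $L$ Gaussian variations at \emph{each} scale $\sigma_i=D\sqrt{\log L}/(2^i d)$, $1\le i\le r$, so the largest-scale offspring sit at typical distance about $D\sqrt{\log L/d}$ from their parents; this does not shrink with $t$ and is in general much larger than $\eta$, so $S_T$ always contains points far from $\privatesampleset$, and your claim that the reverse inclusion ``comes essentially for free'' fails. The paper sidesteps both problems through its modeling assumption that the output is the distribution $\calP_T$ (the noiseless nearest-neighbor vote histogram on $S_T$): the map sending each private point to its nearest neighbor in $S_T$ pushes the uniform distribution on $\privatesampleset$ exactly onto $\calP_T$, so the \emph{one-sided} covering you do aim for already yields $W_\infty\le\eta$ (hence all $W_p$), and far-away zero-vote points of $S_T$ carry no mass.

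Second, the core one-step progress lemma --- which you explicitly leave as the main obstacle --- is proved in the paper by a mechanism different from the one you propose, and your own worry about $C^d$ losses is exactly why your version is shaky. Rather than using a variation radius matched to the current distance and a density-lower-bound/volume computation, the paper takes the Gaussian scale \emph{much smaller} than the current distance: with $s=\|x-z^*\|$ it chooses $\sigma$ so that $\sigma d/\sqrt{\log L}\in[s/2,s]$ (such a scale exists on the geometric grid precisely because $r=\log(D/\eta)$), writes $z_i=z^*+\sigma w_i$, and balances the first-order gain $2\sigma\max_i\langle x-z^*,w_i\rangle\gtrsim 2\sigma s\sqrt{\log L}$ (maximum of $L$ one-dimensional Gaussian projections) against the second-order penalty $\sigma^2\max_i\|w_i\|^2\lesssim\tfrac{3}{2}\sigma^2 d$ (chi-squared tails plus a union bound over $L$), giving $\min_i\|x-z_i\|\le(1-\tfrac{\log L}{4d})\,s$ with probability at least $1/2$; no volumetric or covering estimate, and hence no dimension-exponential constant, enters. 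Your probabilistic bookkeeping afterwards (constant per-step success, Chernoff over $T$ giving the $\log(N_\priv/\tau)$ term, union bound over points, and non-increase of the distance since $S_t'\supset\supp(\calP_t)$ and $S_t'\subset S_{t+1}$) does match the paper, but without the correct one-step lemma and with the flawed two-sided-covering reduction, the proposal as written does not constitute a proof.
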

This theorem is nearly tight. In each iteration of \algnameshort{}, we get the voting information which is about $\tilde{O}(N_\priv\log(L))$ bits. To converge to $S_\priv$, we need at least $\tilde{\Omega}(N_\priv d)$ bits of information. Therefore we do require at least $\tilde{\Omega}(d/\log L)$ iterations to converge.
Here is some intuition for how the proof of \cref{thm:nonprivate_analysis} works. Fix some private point $x\in \privatesampleset$. Let $z^*\in S_t$ be its closest point in $S_t$. In $S_{t+1}$, we generate variations of $z^*$ using the $\samplevariationapi{z^*}$. We then prove that if $\norm{x-z^*}\ge \eta$, then one of the variations will get closer to $x$ than $z^*$ by a factor of $(1-(\log L)/d)$ with constant probability. Repeating this for $T$ iterations as in \cref{eq:nonprivate_T}, will bring some point in $S_T$ $\eta$-close to $x$.

\myparatightestn{Convergence of \algname{}.}
To get some intuition on the working of \algnameshort{} in the presence of noise, we make a simplifying assumption. We will assume that there are $B$ identical copies of each private point in $\privatesampleset$. We call $B$ as multiplicity.
Note that for any DP algorithm to converge to $\privatesampleset$, we need that the private data is well clustered with some minimum cluster size. Any cluster with too few points cannot be represented in $\generatedsampleset$, because that would violate DP. And when there is a cluster of $B$ private points and the generated points in $S_t$ are still far from this cluster, then it is likely that all the $B$ private points will have a common closest point in $S_t$\footnote{In fact, it is easy to formally prove that there will be a common approximate nearest neighbor.}, i.e., they all vote for the same point in $S_t$ as a single entity. Therefore multiplicity is a reasonable modeling assumption to make to understand the working of \algnameshort{}. Note that, actually it is very easy to find $S_\priv$ exactly using DP Set Union~\cite{gopi2020differentially} with the multiplicity assumption. The point of \cref{thm:private_analysis} is to give intuition about why \algnameshort{} works in practice; it is proved in \cref{app:private_analysis_proof} under the same assumptions as in \cref{thm:nonprivate_analysis}. 
\begin{theorem}
	\label{thm:private_analysis}
	Let $0\le \eps\le \log(1/2\delta)$. Suppose each point in $\privatesampleset$ has multiplicity $B$. Then, with high probability ($\ge 1-\tau$), \algname{} (\cref{alg:main}) with $\sigma\gg \sqrt{T\log(1/\delta)}/\eps$ and $H\gg \sigma\sqrt{\log (TLN_\priv/\tau)}$, when run for $T$ iterations, satisfies $(\eps,\delta)$-DP and outputs $\generatedsampleset$ such that $W_p(\privatesampleset,\generatedsampleset)\le \eta,$ $\forall p\in[1,\infty],$ whenever $T$ satisfies \cref{eq:nonprivate_T} and multiplicity $B \gg H$. %
\end{theorem}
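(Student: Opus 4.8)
The plan is to split the statement into a privacy claim and a utility claim: privacy reduces essentially verbatim to the composition argument of \cref{sec:privacy_analysis}, while utility is obtained by coupling the noisy run of \cref{alg:main} to a noiseless run and invoking the geometry behind \cref{thm:nonprivate_analysis}.

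\emph{Privacy.} Here the multiplicity assumption is irrelevant: neighboring datasets still differ in a single private sample, and the only private computation per iteration is \dpvotingfunctionname{}, whose $\ell_2$-sensitivity is $1$ (adding or removing one sample moves exactly one histogram entry by one). Hence each iteration is a Gaussian mechanism with noise multiplier $\noisemultiplier$; the $\numiterations$-fold adaptive composition is a Gaussian mechanism with noise multiplier $\noisemultiplier/\sqrt{\numiterations}$ by \citet{dong2022gaussian}; and both thresholding and resampling are post-processing. Taking $\noisemultiplier \gg \sqrt{\numiterations\log(1/\delta)}/\eps$ makes the effective noise multiplier exceed the threshold $\gtrsim\sqrt{\log(1/\delta)}/\eps$ under which a Gaussian mechanism is $(\eps,\delta)$-DP when $\eps\le\log(1/2\delta)$, by the exact analysis of \citet{balle2018improving}. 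This is Steps 1--5 of \cref{sec:privacy_analysis}.

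\emph{Utility.} Since each distinct private point has $B$ exact copies, at every iteration all $B$ copies share the same nearest neighbor in $S_t$ under the fixed $\arg\min$ tie-break in \cref{line:distance} (when copies are only approximately identical, one argues a common approximate nearest neighbor, as the footnote indicates). So each of the $\numprisamples/B$ clusters contributes $\ge B$ votes to a single bin, and at most $\numprisamples/B$ bins are nonzero before noise. Let $\calE$ be the event that across all $\numiterations$ iterations and all histogram bins the added Gaussian noise has absolute value $<H/2$; by a union bound and Gaussian tail bounds $\probof{\calE}\ge 1-\tau/3$, precisely because $H\gg\noisemultiplier\sqrt{\log(\numiterations L\numprisamples/\tau)}$ (the number of bins being $\lesssim L\numprisamples$ in the modeling regime of \cref{thm:nonprivate_analysis}). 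On $\calE$, every empty bin is mapped to $0$ by the threshold, while every bin with $\ge B$ true votes retains value $\ge B-3H/2>0$ since $B\gg H$; thus $\calP_t$ is supported exactly on the cluster-nearest-neighbors in $S_t$, and the bin of any fixed cluster's nearest neighbor $z^*$ carries mass $\gtrsim B/\numprisamples$. Drawing $\numgensamples$ samples with replacement into $S_t'$ therefore includes $z^*$ except with probability $(1-\Omega(B/\numprisamples))^{\numgensamples}$, which we union-bound over all clusters and iterations into a further $\tau/3$ of failure budget. Conditioned on all this, the variation step generates the same $L$ variations of $z^*$ as in the noiseless algorithm, so the one-step contraction lemma underlying \cref{thm:nonprivate_analysis} applies unchanged: whenever $\norm{x-z^*}\ge\eta$, some variation of $z^*$ lands closer to $x$ by a factor $1-(\log L)/d$ with constant probability. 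Iterating $\numiterations\gg d\log(D/\eta)/\log L+\log(\numprisamples/\tau)$ times (spending the last $\tau/3$) drives $\min_{z\in S_{\numiterations}}\norm{x-z}\le\eta$ for every $x\in\privatesampleset$ simultaneously, and since $S_{\numiterations}$ stays within a bounded region this gives $W_p(\privatesampleset,\generatedsampleset)\le\eta$ for all $p\in[1,\infty]$, exactly as in the proof of \cref{thm:nonprivate_analysis}.

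\emph{Main obstacle.} The crux is the coupling bookkeeping: making precise that, on $\calE$, the noisy histogram induces the same cluster-to-nearest-neighbor assignment as noiseless voting and the same downstream distribution over $S_t'$ (up to the vanishing chance $z^*$ is missed), so the three sources of randomness -- the DP noise, the with-replacement resampling from $\calP_t$, and the internal randomness of \samplevariationapiname{} -- can be peeled off one at a time with failure probabilities telescoping to $\le\tau$. A minor check is that the chain $B\gg H\gg\noisemultiplier\sqrt{\log(\cdot)}\gg\sqrt{\numiterations\log(1/\delta)}/\eps$ is internally consistent, which is immediate once constants are tracked. Modulo this, the argument is a black-box reduction to \cref{thm:nonprivate_analysis}.
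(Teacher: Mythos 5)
Your privacy argument and the overall utility skeleton (noise of magnitude $<H/2$ uniformly over all bins and iterations, so bins with $\ge B$ true votes survive thresholding while empty bins are zeroed out, then reduce to the non-private analysis of \cref{thm:nonprivate_analysis}) are exactly the paper's proof. The one place you deviate is the parent-selection step, and that deviation introduces a gap. The paper proves \cref{thm:private_analysis} under the same modeling assumptions as \cref{thm:nonprivate_analysis}, which explicitly replace the with-replacement draw by the requirement $S_t'\supset \supp(\calP_t)$ (see the footnoted assumption in \cref{app:theory}); hence once the noisy, thresholded histogram has the same support as the noiseless one, the run is literally coupled to the non-private algorithm and no further randomness needs to be controlled. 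You instead keep the actual sampling of $\numgensamples$ points from $\calP_t$ and argue that each cluster's nearest neighbor $z^*$ is drawn except with probability $(1-\Omega(B/\numprisamples))^{\numgensamples}$, union-bounded over clusters and iterations. That bound is only small if $\numgensamples \gtrsim (\numprisamples/B)\log(T\numprisamples/(B\tau))$, which is not among the hypotheses of \cref{thm:private_analysis}; moreover, if $z^*$ is missed in some iteration, the monotonicity used in the non-private proof ($S_t'\subset S_{t+1}$, so $\min_{z\in S_t}\norm{x-z}_2$ never increases) breaks down, since the closest point can then be dropped from the population entirely. So either state the extra condition on $\numgensamples$ as a hypothesis, or, as the paper does, invoke the modeling assumption $S_t'\supset\supp(\calP_t)$ and delete the sampling argument altogether. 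The rest of your bookkeeping (sensitivity $1$, Gaussian composition with effective multiplier $\noisemultiplier/\sqrt{T}$, the $H/2$ noise event, $B\gg H$ ensuring survival) matches the paper's proof.
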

Ignoring polylogarithmic factors in $d,L,N_\priv,\log(D/\eta),\tau$, we need $T\gg  d$ and $B\gg \sqrt{d\log(1/\delta)}/\eps$ for Theorem \ref{thm:private_analysis} to hold. Thus, we should expect that \algnameshort{} will discover every cluster of private data of size $\gg \sqrt{d\log(1/\delta)}/\eps$ in $O(d)$ iterations. We now compare this to previous work on DP clustering. \citet{ghazi2020differentially} gives an algorithm for densest ball, where they show an $(\eps,\delta)$-DP algorithm which (approximately) finds any ball of radius $r$ which has at least $\gg \sqrt{d\log(1/\delta)}/\eps$ private points. Thus intuitively, we see that \algnameshort{} compares favorably to SOTA DP clustering algorithms (though we don't have rigorous proof of this fact). If this can be formalized, then \algnameshort{} gives a very different algorithm for densest ball, which in turn can be used to solve DP clustering. Moreover \algnameshort{} is very amenable to parallel and distributed implementations. We therefore think this is an interesting theory problem for future work.

\myparatightestn{Why \algname{} works well in practice.}
\label{sec:intrinsic_dimension}
We have seen that in the worst case, \algnameshort{} takes $\Omega(d)$ iterations to converge. In our experiments with \cifar{} and \camelyon{}, where $d=2048$ is the embedding dimension, we see that \algnameshort{} actually converges in only about 20 iterations which is much smaller than $d$. We offer one plausible explanation for this via \emph{intrinsic dimension}. Suppose the (embeddings of) realistic images lie on a low dimensional manifold $M$ inside $\mathbb{R}^d$ of dimension $d_{\intr}\ll d$ (see experimental results in \cref{app:intrinsic}). 
Given an image $z$, $\samplevariationapi{z}$ will create variations of $z$ which are also realistic images. Therefore the embeddings of these variations will also lie in the same manifold $M$, and \algnameshort{} is searching for the private points only inside the manifold $M$ without ever going outside it. Therefore the $d$ that matters for convergence is actually $d_{\intr}\ll d.$ In this case, we expect that \algnameshort{} converges in $O(d_{\intr})$ iterations and discovers clusters of private points of size at least $\sqrt{d_{\intr}\log(1/\delta)}/\eps$.
\FloatBarrier
\section{Proofs of \algnameshort{} Convergence Theorems}
\label{app:theory}

We will slightly modify the algorithm as necessary to make it convenient for our analysis. %
We will make the following modeling assumptions:
\begin{itemize}
    \item The private dataset $\privatesampleset$ is contained in an $\ell_2$ ball of diameter $D$ and $\randomsampleapiname$ will also produce initial samples in the same ball of diameter $D$. This is a reasonable assumption in practice,  as images always have bounded pixel values: for the original images in UINT8 data type, each pixel is in the range of $[0, 255]$; in diffusion models, they are usually normalized to $[-1, 1]$ (i.e., 0 corresponds to -1 and 255 corresponds to 1), and all generated images are guaranteed to be in this range.
    \item The distance function used in \cref{alg:voting} is just the $\ell_2$ norm, $d(x,z)=\norm{x-z}_2$. 
    \item The distribution of points we output is $S_\syn=\calP_T$ (i.e., we output a distribution of points). 
    \item $S_{t}'\supset \supp(\calP_t)=\supp(histogram_t)$.\footnote{This may not be true in the original algorithm due to sampling, we need to modify it so that $S_t'\supset \supp(\calP_t).$} %
    \item $S_{t+1}=S_t' \cup \bigcup_{z\in S_t'}\samplevariationapi{z}$ where $\samplevariationapi{z}$ samples $L$ samples each from Gaussian distributions $\calN(z,\sigma_i^2 I)$ for $\sigma_i=\frac{D\sqrt{\log L}}{2^i d}$ where $1\le i\le r$ and $r=\log(D/\eta)$ where $\eta>0$ is the final Wasserstein distance.
\end{itemize}

\subsection{Proof of \cref{thm:nonprivate_analysis}}
\label{app:nonprivate_analysis_proof}
\begin{proof}
Fix a point $x\in \privatesampleset$ and some iteration $t$. Suppose $z^*\in S_t$ is the closest point to $x$. Since $x$ will vote for $z^*$ in $histogram_t$, $z^*\in\supp(\calP_t)\subset S_t'$. Therefore $\samplevariationapi{z^*}\subset S_{t+1}$. Let $V=\samplevariationapi{z^*}$.
\begin{claim}
If $\norm{x-z^*}\ge \eta$, then with probability at least $1/2$, some point in $V$ will get noticeably closer to $x$ than $z^*$, i.e., $$\min_{z\in V}\norm{x-z}_2 \le \bra{1-\frac{\log L}{4d}}\norm{x-z^*}_2.$$  
\end{claim}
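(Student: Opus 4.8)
Let me write $\rho = \norm{x - z^*}_2 \ge \eta$. The goal is to find one Gaussian variation $z$ sampled from some $\calN(z^*, \sigma_i^2 I)$ that lands inside the ball $B(x, (1 - \frac{\log L}{4d})\rho)$. I would first reduce to a one-dimensional picture by rotating coordinates so that $x - z^* = \rho e_1$. A variation is $z = z^* + g$ with $g \sim \calN(0, \sigma_i^2 I)$, and
\[
\norm{x - z}_2^2 = \norm{(x - z^*) - g}_2^2 = (\rho - g_1)^2 + \sum_{j \ge 2} g_j^2 = \rho^2 - 2\rho g_1 + \norm{g}_2^2.
\]
So a variation improves on $z^*$ exactly when $2\rho g_1 - \norm{g}_2^2 > 0$, and to get the stated factor I need $2\rho g_1 - \norm{g}_2^2 \ge \rho^2 - (1 - \frac{\log L}{4d})^2 \rho^2 \approx \frac{\log L}{2d}\rho^2$ (dropping the negligible quadratic term in $\frac{\log L}{d}$, valid since $\log L \ll d$). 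Dividing by $\rho^2$, the event I want for a single variation is roughly
\[
\frac{2 g_1}{\rho} - \frac{\norm{g}_2^2}{\rho^2} \ge \frac{\log L}{2d}.
\]

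**Choosing the right scale and estimating the single-trial probability.** The point of having $r = \log(D/\eta)$ different noise scales $\sigma_i = \frac{D\sqrt{\log L}}{2^i d}$ is that one of them is matched to $\rho$: since $\eta \le \rho \le D$, there is an index $i$ with $\sigma_i \asymp \frac{\rho \sqrt{\log L}}{d}$, i.e. $\sigma_i = \Theta(\rho\sqrt{\log L}/d)$ with the implied constants controlled by the factor-of-2 spacing. Fix that $i$. Then $\norm{g}_2^2$ concentrates around $d\sigma_i^2 = \Theta(\rho^2 \log L / d)$, which is of the same order as the target $\frac{\log L}{2d}\rho^2$ — so the $\norm{g}_2^2$ term is a genuine obstruction, not something I can throw away, and the constants in the exponent ($\frac14$ vs something larger) are exactly there to leave room for it. Meanwhile $g_1 \sim \calN(0, \sigma_i^2)$, so $\frac{2g_1}{\rho} \sim \calN(0, 4\sigma_i^2/\rho^2)$ has standard deviation $\Theta(\sqrt{\log L}/d)$. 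Thus the threshold $\frac{\log L}{2d}$ sits at roughly $\Theta(\sqrt{\log L})$ standard deviations of the $g_1$-term. The single-trial success probability is therefore like $\exp(-\Theta(\log L)) = L^{-\Theta(1)}$; I would make the constant in $\sigma_i$ (and the $\frac14$) large/small enough that this probability is at least, say, $L^{-1/2}$ or $L^{-c}$ for a suitable $c < 1$. Concretely: condition on the good event $\norm{g}_2^2 \le 2 d\sigma_i^2$ (probability $\ge 1/2$ by a $\chi^2$ tail bound, or Markov), and on that event the required event becomes $g_1 \ge \frac{\rho}{2}(\frac{\log L}{2d} + \frac{2d\sigma_i^2}{\rho^2})$, a fixed threshold of order $\rho\sqrt{\log L}/d \asymp \sigma_i \cdot \Theta(\sqrt{\log L})$, which a Gaussian exceeds with probability $\ge \exp(-\Theta(\log L)) = L^{-O(1)}$; tune constants so this is $\ge 2 L^{-1}$ say.

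**Amplifying over the $L$ independent variations.** At scale $\sigma_i$ we draw $L$ independent variations of $z^*$. Each independently succeeds with probability $\ge q$ where $q \ge c' L^{-\alpha}$ for some $\alpha < 1$ (this is where $\log L \ll d$ is used to push $\alpha$ strictly below $1$, i.e., to keep the per-trial probability from decaying faster than $1/L$). Then the probability that \emph{none} of the $L$ variations succeeds is $(1 - q)^L \le e^{-qL} \le e^{-c' L^{1-\alpha}} \le 1/2$ for $L$ larger than an absolute constant (and if $L$ is below that constant, $\log L \ll d$ makes the threshold $\frac{\log L}{4d}$ essentially trivial and one can argue directly, or just assume $L$ is at least a constant as is implicit). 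Hence with probability $\ge 1/2$ some variation $z \in V$ satisfies $\norm{x - z}_2 \le (1 - \frac{\log L}{4d})\rho$, which is the claim.

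**Main obstacle.** The delicate point is the interplay between the linear gain term $2\rho g_1$ and the quadratic loss term $\norm{g}_2^2$: the noise scale must be large enough that $g_1$ has a decent chance of exceeding the target threshold, yet small enough that $\norm{g}_2^2 = \Theta(d\sigma_i^2)$ does not by itself overwhelm the budget $\frac{\log L}{4d}\rho^2$. Getting the constants consistent — choosing the multiplicative constant in $\sigma_i$, the $\frac14$ in the contraction factor, and the $q \ge L^{-\alpha}$ bound so that they all fit together and the final amplification $(1-q)^L \le 1/2$ goes through — is the technical heart of the argument; everything else is standard Gaussian/$\chi^2$ concentration. I would also need to be slightly careful that the index $i$ with $\sigma_i \asymp \rho\sqrt{\log L}/d$ actually lies in the valid range $1 \le i \le r = \log(D/\eta)$, which follows from $\eta \le \rho \le D$ up to adjusting the constant in front of $\sigma_i$, so that the relevant scale is always among those actually sampled.
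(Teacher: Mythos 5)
Your proposal is correct and takes essentially the same route as the paper's proof: the same decomposition of $\norm{x-z}_2^2$ into the linear gain $2\rho g_1$ and the quadratic loss $\norm{g}_2^2$, the same selection of the matched scale $\sigma_i \asymp \rho\sqrt{\log L}/d$ from the geometric grid (possible since $\rho\in[\eta,D]$), and the same balancing of $d\sigma_i^2$ against a favorable deviation of order $\sqrt{\log L}$ standard deviations. The paper only packages the amplification over the $L$ variations differently — it bounds $\max_{i\in[L]}\norm{w_i}_2^2\le 3d/2$ via a union bound on $\chi^2_d$ tails and uses that the maximum of $L$ i.i.d.\ standard Gaussians exceeds $\sqrt{\log L}$ with probability at least $3/4$, instead of your per-trial success probability $L^{-\alpha}$ with $\alpha<1$ followed by the $(1-q)^L\le 1/2$ amplification.
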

\begin{proof}
    Let $s=\norm{x-z^*}$ and let $\sigma\in \{\sigma_1,\sigma_2,\dots,\sigma_r\}$ be such that $\sigma d/\sqrt{\log L} \in [s/2,s]$. Note that such a $\sigma$ exists since $s\in [\eta,D]$. We will now prove that one of the $L$ samples $z_1,z_2,\dots,z_L \sim \calN(z^*,\sigma^2 I_d)$ will get noticeably closer to $x$ than $z^*.$ Let $z_i=z^*+\sigma w_i$ where $w_i\sim \calN(0,I_d).$    
    \begin{align*}
    \min_{i\in [L]} \norm{x-z_i}_2^2&= \norm{x-z^*}_2^2 + \min_{i\in [L]} \bra{\sigma^2 \norm{w_i}_2^2 - 2\sigma \inpro{x-z^*}{w_i}}\\
    &\le s^2 + \max_{i\in [L]}  \sigma^2 \norm{w_i}_2^2 - \max_{i\in [L]}2\sigma \inpro{x-z^*}{w_i}
    \end{align*}
    Note that $\norm{w_i}_2^2$ is a $\chi_d^2$ random variable. By using upper tail bounds for $\chi_d^2$ distribution and union bound over $i\in [L]$, we can bound $$\Pr\brb{\max_{i\in [L]} \norm{w_i}_2^2 \ge 3d/2} \le L \exp(-\Omega(d))\ll 1.$$
    The distribution of $\inpro{x-z^*}{w_i}$ is the same as $\norm{x-z^*}_2 \tw_i$ where $\tw_1,\dots,\tw_L$ are i.i.d. $\mathcal{N}(0,1)$ random variables. By using the fact that max of $L$ i.i.d. Gaussians is at least $\sqrt{\log L}$ with probability at least $3/4$ (for $L\gg 1$), we get that  $$\Pr\brb{\max_{i\in [L]} \inpro{x-z^*}{w_i} \le s\sqrt{\log L}}\le \frac{1}{4}.$$
    Combining everything we get:
    \begin{align*}
    \frac{1}{2}&\ge\Pr\brb{\min_{i\in [L]} \norm{x-z_i}_2^2\ge s^2 +(3/2)d\sigma^2-2 s \sigma \sqrt{\log L}}\\
    &\ge\Pr\brb{\min_{i\in [L]} \norm{x-z_i}_2^2\ge \max_{\lambda \in [1/2,1]} s^2 +(3/2)d\bra{\frac{\lambda s \sqrt{\log L}}{d}}^2-2 s \bra{\frac{\lambda s \sqrt{\log L}}{d}} \sqrt{\log L}}\\
    &\ge\Pr\brb{\min_{i\in [L]} \norm{x-z_i}_2^2\ge  s^2 +\bra{\frac{s^2 \log L}{d}}\max_{\lambda \in [1/2,1]}(3\lambda^2/2-2\lambda)}\\
    &\ge\Pr\brb{\min_{i\in [L]} \norm{x-z_i}_2^2\ge  s^2\bra{1-\frac{\log L}{2d}}}\\
    &\ge\Pr\brb{\min_{i\in [L]} \norm{x-z_i}_2\ge  s\bra{1-\frac{\log L}{4d}}}
    \end{align*}
    where last inequality uses the fact that $\sqrt{1-t}\le 1-\frac{t}{2}$ for $t\le 1$.
\end{proof}

Now in $T$ iterations, $\min_{z\in S_t}\norm{x-z}_2$ will shrink by a factor of $\bra{1-\frac{\log L}{4d}}$ in at least $T/4$ iterations with probability $1-\exp(-\Omega(T)) \ge 1 - \frac{\tau}{N_\priv}$ (by standard Chernoff bounds). Note that in iterations where it doesn't shrink, it doesn't grow either since $S_t'\subset S_{t+1}.$ Similarly, if $\min_{z\in S_t} \norm{x-z}_2 \le \eta$ for some iteration, it will remain so in all subsequent iterations. Therefore after $T \gg \frac{d\log(D/\eta)}{\log L}$ iterations, $\min_{z\in S_T} \norm{x-z}_2 \le \eta$ with probability at least $1-\frac{\tau}{N_\priv}$. By union bounding over all points we get that, with probability at least $1-\tau$, for every point $x\in \privatesampleset$ there is a point in $S_T$ which is $\eta$-close. This proves that $W_p(\privatesampleset,\calP_T)\le \eta
.$
\end{proof}

\subsection{Proof of \cref{thm:private_analysis}}
\label{app:private_analysis_proof}
\begin{proof}
    Since we are doing $T$ iterations of Gaussian mechanism with noise level $\sigma$, we need to set $\sigma \gg \sqrt{T\log(1/\delta)}/\eps$ to satisfy $(\eps,\delta)$-DP~\cite{dwork2014algorithmic} when $\eps \le log(1/2\delta)$. Let $x\in \privatesampleset$ be a point with multiplicity $B$. If $z^*\in S_t$ is the closest point to $x$, then it will get $B$ votes. After adding $\calN(0,\sigma^2)$ noise, if $B\gg H \gg \sigma \sqrt{\log (TLN_\priv/\tau)}$, then with probability at least $1-\tau/(4T)$, the noisy votes that $z^*$ gets is still above the threshold $H$. Therefore $z^*$ will survive in $S_{t+1}$ as well. Also since $H\gg \sigma \sqrt{\log (TLN_\priv/\tau)}$, with probability $1-\tau/(4T)$, points in $S_t$ which do not get any votes (there are $LN_\priv$ of them) will not survive even after adding noise and thresholding by $H$. Therefore, by union bounding over all $T$ iterations, with probability at least $1-\tau/2$, the algorithm behaves identically to the non-private algorithm. Therefore by an identical proof as in the non-private analysis, we can prove that after $T$ iterations $W_p(\privatesampleset,\calP_T)\le \eta$ with probability at least $1-\tau$. %
\end{proof}
\FloatBarrier
\section{Intrinsic Dimension of Image Embeddings}
\label{app:intrinsic}

To illustrate the intrinsic dimension of image embeddings, we use the following process:
\begin{packedenumerate}
    \item We (randomly) take an image $x$ from \cifar{}.
    \item We use \samplevariationapiname{} from \cref{app:cifar} to obtain 3000 image variations of $x$: $x_1,\ldots,x_{3000}$, and their corresponding inception embeddings $g_1,...,g_{3000}\in\real^{2048}$. 3000 is chosen so that the number of variations is larger than the embedding dimension.
    \item We construct a matrix $M=[g_1-g;\ldots;g_{3000}-g]\in \real^{3000\times 2048}$, where $g$ is the mean$(g_1,\ldots,g_{3000})$.
    \item We compute the singular values of $M$: $\sigma_1\geq \sigma_2\geq \ldots \geq \sigma_{2048}$. 
    \item We compute the minimum number of singular values $n$ needed so that the explained variance ratio\footnote{See \url{https://scikit-learn.org/stable/modules/generated/sklearn.decomposition.TruncatedSVD.html}.} $\nicefrac{\sum_{i=1}^{n}\sigma_i^2}{\sum_{i=1}^{2048}\sigma^2} \geq 0.8$. Intuitively, this $n$ describes how many dimensions are needed to reconstruct the embedding changes $M$ with a small error. We use it as an estimated intrinsic dimension of the image variations.
\end{packedenumerate}

We conduct the above process with the variation degree $[98,  96,  94,  92,  90,  88,  86,  84,  82,  80,  78,  76, 74,  72,  70,  68,  66,  64,  62,  60]$ utilized in the \cifar{} experiments (see \cref{app:cifar}). We additionally add a variation degree of 100 which is the highest variation degree in the API that was used to generate the initial samples. We plot the estimated intrinsic dimension v.s. variation degree in \cref{fig:intrinsic}. The raw original singular values of $M/\sqrt{3000}$ for variation degree=60 are in \cref{fig:singular_values} (other variation degrees have similar trend). Two key observations are: 
\begin{packeditemize}
    \item As the variation degree increases, the estimated intrinsic dimension also increases. This could be because the manifold of image embeddings is likely to be non-linear, the above estimation of intrinsic dimension is only accurate when we perturb the image $x$ to a small degree so that the changes in the manifold can still be well approximated by a linear subspace. Using a larger variation degree (and thus larger changes in the embedding space) will overestimate the intrinsic dimension. %
    \item Nevertheless, we always see that the singular values decrease rapidly (\cref{fig:singular_values}) and the estimated intrinsic dimension is much smaller than the embedding size 2048 (\cref{fig:intrinsic}), which supports our hypothesis in \cref{sec:intrinsic_dimension}.
\end{packeditemize}

\begin{figure}[ht]
    \centering
    \includegraphics[width=0.5\linewidth]{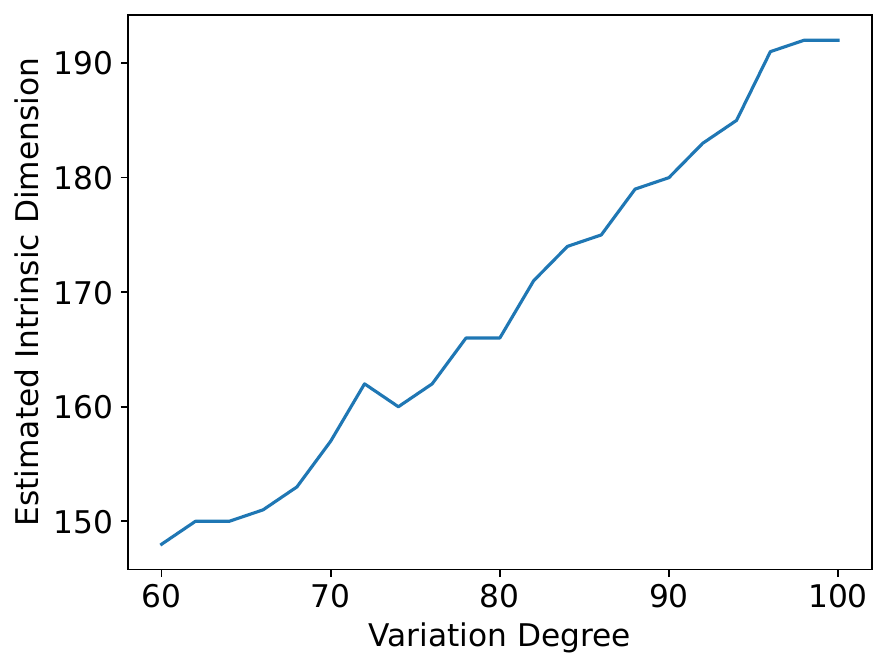}
    \caption{Estimated intrinsic dimension of inception embeddings of realistic images.}
    \label{fig:intrinsic}
\end{figure}

\begin{figure}[ht]
    \centering
    \includegraphics[width=0.5\linewidth]{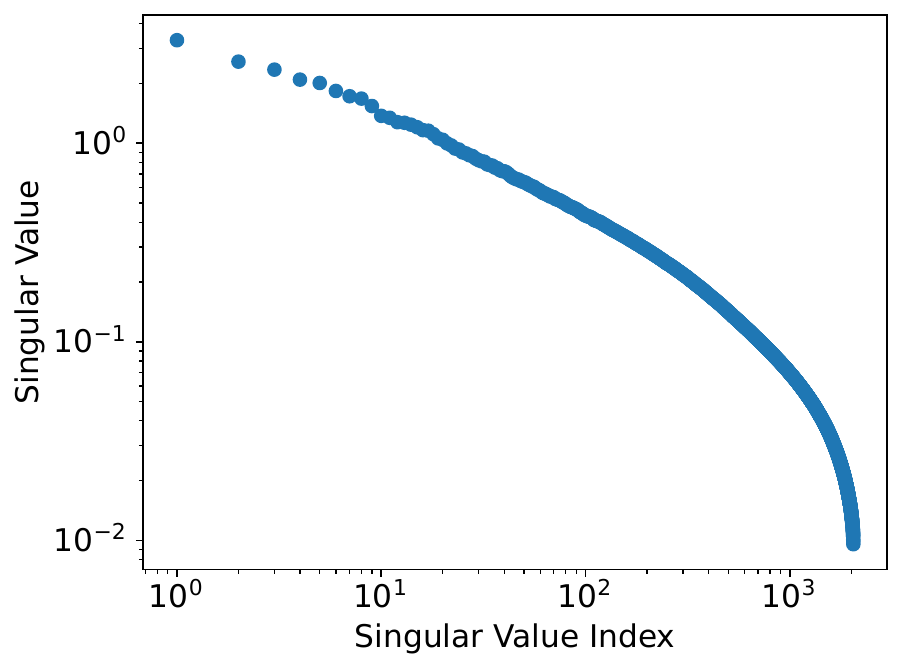}
    \caption{Singular values of inception embeddings of image variations at variation degree=60.}
    \label{fig:singular_values}
\end{figure}
\FloatBarrier
\section{Relation of \DPWA{} to prior work}
\label{app:dpclustering}

Recall that in \DPWA{}, we want a DP algorithm to output $\generatedsampleset$ which is close to the distribution of $\privatesampleset$ in Wasserstein distance w.r.t. some distance function $d(\cdot,\cdot)$. When the distance function $d(\cdot,\cdot)$ is just $\ell_2$ distance between the samples (i.e., $\ell_2$ in the pixel space for images), then \DPWA{} is closely related to DP Clustering~\cite{ghazi2020differentially,balcan2017differentially,su2016differentially} and DP Heatmaps~\cite{ghazi2022differentially}.

In~\cite{ghazi2022differentially}, to give an algorithm for DP Heatmaps, the authors study DP sparse EMD\footnote{Earth's Mover Distance, which is the another name for Wasserstein metric $W_1$.} aggregation problem where we need to output a distribution of points which approximates the distribution of private data in EMD distance (i.e., $W_1$). They study this problem only in two dimensions and the running time of their algorithms (suitably generalized to higher dimensions) will be exponential in the dimension $d$.

The DP Clustering problem requires us to output a clustering of private data using DP. The most common clustering studied is $k$-means clustering where we should output $k$ cluster centers such that $k$-means cost is minimized, where $k$-means cost is the sum of squares of $\ell_2$-distance of each data point to its nearest cluster center. Note that in DPWA, if the number of synthetic data points $N_\syn$ is specified to be $k$, then DP $k$-means clustering and \DPWA{} with $W_2$ metric are equivalent. In~\cite{ghazi2022differentially}, a polynomial time DP Clustering algorithm with an additional $k$-means cost (over what is non-privately possible) of $k\sqrt{d\log(1/\delta)}\polylog(N_\priv,d)/\epsilon$ is given. This can be converted into an upper bound on the Wasserstein distance. But this is not a practical algorithm. The privacy-utility tradeoffs are bad due to the large hidden constants in the analysis and the authors don't provide an implementation. There is a practical DP Clustering algorithm (along with an implementation) given in~\cite{chang2023differentially} (but with no theoretical guarantees).

\subsection{Why Not Just Use DP Clustering?}
We now explain why we can't just use prior work on DP Clustering to solve \probnameshort{} say for images.

\myparatightestn{Clustering in the image space.} We can use DP $k$-means Clustering to cluster the images w.r.t. $\ell_2$ metric in the pixel space. This doesn't work because $\ell_2$ distance in the pixel space doesn't capture semantic similarity. An image which is slightly shifted in pixel space gets very far in $\ell_2$ distance. And the dimension of the images is too large for prior DP Clustering algorithms to work well. Their convergence and privacy-utility tradeoffs depend too strongly on the dimension.

\myparatightestn{Clustering in the embedding space.} We can use DP $k$-means Clustering to cluster the image embeddings w.r.t. $\ell_2$ metric in the embedding space. Note that this is the distance function we use in \algnameshort{} (\cref{eq:distance_no_packing}). Even after we find the cluster centers, it is hard to invert the embedding map (i.e., find an image whose embedding is close to a given vector in the embedding space).\footnote{Some special embeddings such as CLIP embedding do have such an inverse map called unCLIP \cite{ramesh2022hierarchical}.} Moreover the dimension of the embedding space is still too large for the above methods to be practical.

Our \algnameshort{} algorithm does much better because:
\begin{enumerate}
    \item Its distance function is $\ell_2$ in the embedding space which captures semantic similarity,
    \item It exploits the intrinsic dimension of the manifold of images in the embedding space which is much smaller than the embedding dimension (see \cref{sec:intrinsic_dimension} and \cref{app:intrinsic}) and
    \item There is no need to invert points in embedding space to the image space.
\end{enumerate}

 In an early experiment, we have tried DP clustering in the CLIP embedding space using the practical DP Clustering algorithm in~\cite{chang2023differentially}. We then inverted the cluster centers (which are in the embedding space) using unCLIP. But we found the resulting images are too noisy compared to the images we get from \algnameshort{} and the FID scores are also significantly worse than that of \algnameshort{}.

\FloatBarrier
\section{Implementation Details on Label Condition}
\label{app:conditional_unconditional}

There are two meanings of ``conditioning'' that appear in our work:
\begin{packedenumerate}
    \item Whether the pre-trained networks or APIs (e.g., \imagenet{} pre-trained diffusion models used in \cref{sec:exp_cifar,sec:exp_camelyon}) support conditional input (e.g., \imagenet{} class label).
    \item Whether the generated samples are associated with class labels from the private data.
\end{packedenumerate}

In DP fine-tuning approaches, these two usually refer to the same thing: if we want to generate class labels for generated samples, the common practice is to use a pre-trained network that supports conditional input \cite{ghalebikesabi2023differentially}. However, in \algnameshort{}, these two are completely orthogonal. 

\myparatightestn{Conditional pre-trained networks/APIs.} We first explain our implementation when the pre-trained networks or APIs support conditional inputs such as class labels or text prompts.
When generating the initial population using \randomsampleapiname{} (\cref{line:initial}), we will either randomly draw labels from all possible labels when no prior public information is available (which is what we do in \cifar{} and \camelyon{} experiments where we randomly draw from all possible \imagenet{} classes), or use the public information as condition input (e.g., the text prompt used in Stable Diffusion experiments; see \cref{app:stable_diffusion}). 
In the subsequent \samplevariationapiname{} calls (\cref{line:offspring}), for each image, we will use its associated class label or text prompt as the condition information to the API, and  
the output samples from \samplevariationapiname{} will be associated with the same class label or text prompt as the input sample. For example, if we use an image with ``peacock'' class to generate variations, all output images will be associated with ``peacock'' class for future \samplevariationapiname{} calls.
Note that throughout the above process, all the condition inputs to the pre-trained networks/APIs are public information; they have nothing to do with the private classes.

\myparatightestn{Conditional generation.} Conditional generation is achieved by \cref{alg:main_full}, where we separate the samples according to their class labels, and run the main algorithm (\cref{alg:main}) on each sample set. We can use either conditional or unconditional pre-trained networks/APIs to implement it.

\smallskip
Throughout the paper, ``(un)condition'' refers to 2, expect the caption in \cref{fig:cifar_ablation_pretrained_network} which refers to 1.
\FloatBarrier
\section{More Details and Results on \cifar{} Experiments}
\label{app:cifar}

\myparatightestn{Pre-trained model.} By default, we use the checkpoint \codeword{imagenet64_cond_270M_250K.pt} released in \cite{nichol2021improved}.\footnote{\url{https://github.com/openai/improved-diffusion}} For the ablation study of the pre-trained network, we additionally use the checkpoint \codeword{imagenet64_uncond_100M_1500K.pt}.

\myparatightestn{API implementation.} \randomsampleapiname{} follows the standard diffusion model sampling process. \samplevariationapiname{} is implemented with SDEdit \cite{meng2021sdedit}, which adds noise to input images and lets the diffusion model denoise them. We use DDIM sampler \cite{song2020denoising} and the default noise schedule to draw samples. Note that these choices are not optimal; our results can potentially be improved by using better noise schedules and the full DDPM sampling \cite{ho2020denoising} which are known to work better.
The implementation of the above APIs is straightforward without touching the core modeling part of diffusion models and is similar to the standard API implementations in Stable Diffusion (\cref{app:stable_diffusion}).

\myparatightestn{Hyperparameters.} 
We set the maximum number of iterations $\numiterations=20$, \packing{} degree $\packingdegree=8$, and number of generated samples $\numgensamples=50000$.
For \randomsampleapiname{} and \samplevariationapiname{}, we use DDIM sampler with 100 steps. 
For \samplevariationapiname{}, we use SDEEdit \cite{meng2021sdedit} by adding noise till $[98,  96,  94,  92,  90,  88,  86,  84,  82,  80,  78,  76, 74,  72,  70,  68,  66,  64,  62,  60]$ timesteps for each iteration respectively.  These timesteps can be regarded as the $\variationdegree$ parameter in \cref{sec:scope}.

For the experiments in \cref{fig:cifar_fid_epsilon}, we use noise multiplier $\noisemultiplier=t\cdot\sqrt{2}$ and threshold $\threshold=2t$ for $t\in\brc{5, 10, 20}$, and pick the the pareto frontier. \cref{fig:cifar_fid_epsilon_all} shows all the data points we got. Combining this figure with \cref{fig:cifar_fid_epsilon}, we can see that \algnameshort{} is not very sensitive to these hyper-parameters, and even with less optimal choices \algnameshort{} still outperforms the baselines.

\begin{figure}[h]
    \centering
    \includegraphics[width=0.7\linewidth]{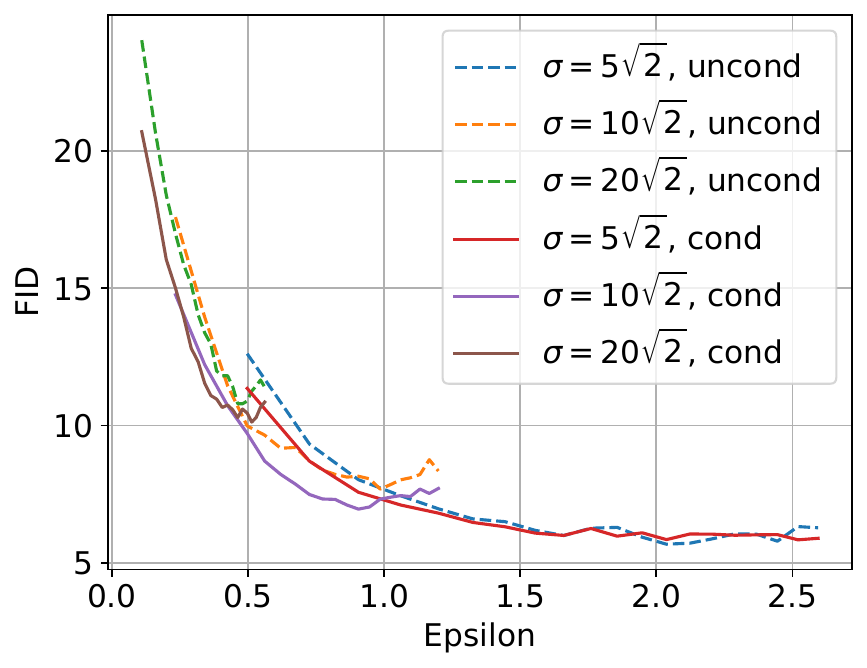}
    \caption{FID (lower is better) v.s. privacy cost $\epsilon$ ($\delta=10^{-5}$) on \cifar{} with different noise multipliers and thresholds. ``(Un)cond'' means (un)conditional generation.}
    \label{fig:cifar_fid_epsilon_all}
\end{figure}

For the experiments in \cref{fig:cifar_acc_numsamples_more_runs}, we use noise multiplier $\noisemultiplier=2\sqrt{2}$, threshold $\threshold=4$ \revision{(i.e., $t=2$)}, and $T=5$.

\revision{For the experiments in \cref{fig:cifar_gen_samples,fig:cifar_gen_samples_app}, we use noise multiplier $\noisemultiplier=10\sqrt{2}$ and threshold $\threshold=20$ (i.e., $t=10$) and the number of PE iteration is 5 (i.e., the point of ``Ours (cond)'' in \cref{fig:cifar_fid_epsilon} that has FID$\leq$ 7.9).} 

For downstream classification (\cref{fig:cifar_acc_numsamples_more_runs}), we follow \cite{ghalebikesabi2023differentially} to use 
WRN-40-4 classifier \cite{zagoruyko2016wide}. We use the official repo\footnote{\url{https://github.com/szagoruyko/wide-residual-networks/tree/master/pytorch}} without changing any hyper-parameter except adding color jitter augmentation according to \cite{ghalebikesabi2023differentially}.
The ensemble of the classifier is implemented by ensembling the logits.

\myparatightestn{FID evaluation.}
Compared to \cref{fig:cifar_fid_epsilon} in the main text, \cref{fig:cifar_fid_epsilon_app} shows the full results of two versions of \cite{harder2022differentially}.
Baseline results are taken from \cite{harder2022differentially,ghalebikesabi2023differentially}.

    \begin{figure}[h]
      \centering      \includegraphics[width=0.5\linewidth]{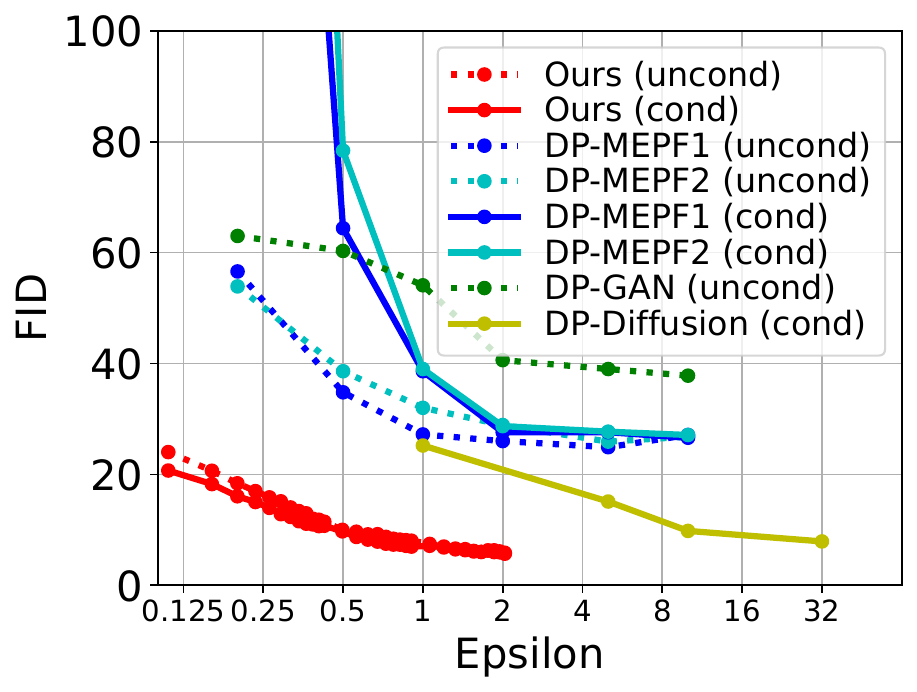}
      \vspace{-0.3cm}
        \caption{FID \cite{heusel2017gans} (lower is better) v.s. privacy cost $\epsilon$ on \cifar{} ($\delta=10^{-5}$). Baseline results are taken from \cite{harder2022differentially,ghalebikesabi2023differentially}. (Un)cond means (un)conditional generation. Ours achieves the best privacy-quality trade-off.}
        \label{fig:cifar_fid_epsilon_app}
    \end{figure}

\myparatightestn{Classification accuracy.}
In \cref{sec:exp_cifar}, we show the classification accuracy at a single privacy budget $\epsilon=3.34$. In \cref{tab:cifar_acc_epsilon_app}, we further show how the classification accuracy evolves with respect to different $\epsilon$s. These results are from the first 5 \algnameshort{} iterations.

\begin{table}[ht]
    \centering
    \begin{tabular}{c|c}
        \toprule
        $\epsilon$ & Accuracy \\\midrule
        1.36 & 72.46\%\\
        1.99 & 78.78\%\\
        2.50 & 80.83\%\\
        2.94 & 81.15\%\\
        3.34 & 81.74\%\\\bottomrule
    \end{tabular}
    \caption{Classification accuracy v.s. privacy cost $\epsilon$ on \cifar{} ($\delta=10^{-5}$).}
    \label{tab:cifar_acc_epsilon_app}
\end{table}

\myparatightestn{Generated samples.}
See \cref{fig:cifar_gen_samples_app,fig:cifar_real} for generated images and real images side-by-side. Note that the pre-trained model we use generates 64x64 images, whereas \cifar{} is 32x32. In \cref{fig:cifar_gen_samples}, %
we show the raw generated 64x64 images; in \cref{fig:cifar_gen_samples_app}, we scale them down to 32x32 for better comparison with the real images.

\begin{figure}[ht]
    \begin{minipage}{.45\textwidth}
	 	\centering
		\includegraphics[width=\linewidth]{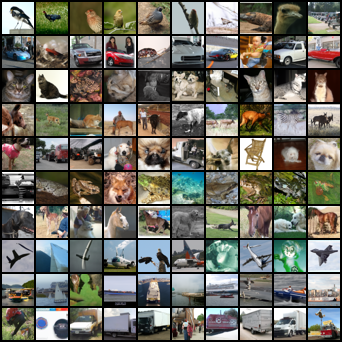}
        \caption{Generated samples on \cifar{} with $\bra{0.67,10^{-5}}$-DP. Each row corresponds to one class. FID=7.87.}
        \label{fig:cifar_gen_samples_app}
    \end{minipage}%
    \hfill
    \begin{minipage}{.45\textwidth}
      	\centering
		\includegraphics[width=\linewidth]{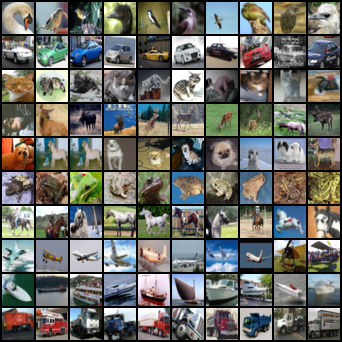}
		\caption{Real samples from \cifar{}. Each row corresponds to one class.\\}
		\label{fig:cifar_real}
    \end{minipage}
\end{figure}

\myparatightestn{Nearest samples in the private dataset}
\cref{fig:cifar_nearest_neighbor_inception,fig:cifar_nearest_neighbor_original} show generated images and their nearest neighbors in the private dataset evaluated using two distance metrics: $\ell_2$ distance in the inception embedding space and the pixel space. We can see that the generated images are different from private images. This is expected due to the DP guarantee.
\begin{figure}[h]
    \centering
    \includegraphics[width=0.55\linewidth]{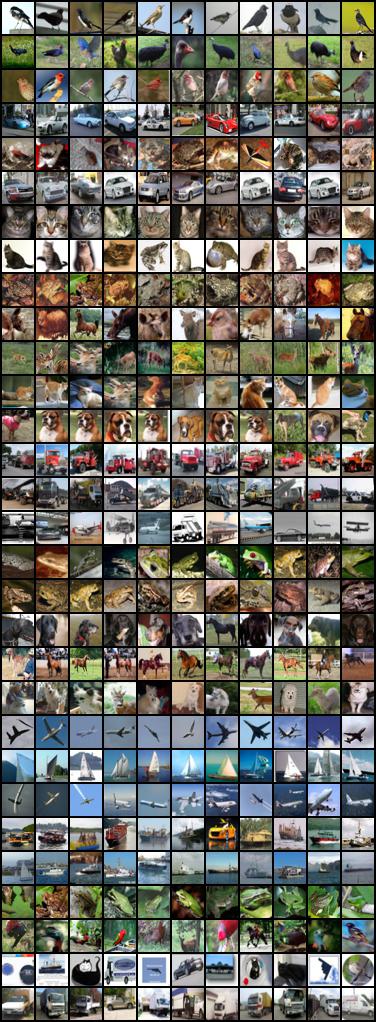}
    \caption{Nearest samples in the private dataset on \cifar{}. In each row, the first column is a generated image (from \cref{fig:cifar_gen_samples_app}), and the other columns are its nearest neighbors in the private dataset, sorted by the distance in ascending order. Every three rows correspond to generated image from one class. The distance metric is $\ell_2$ in the \textbf{inception embedding space.} }
    \label{fig:cifar_nearest_neighbor_inception}
\end{figure}

\begin{figure}[h]
    \centering
    \includegraphics[width=0.55\linewidth]{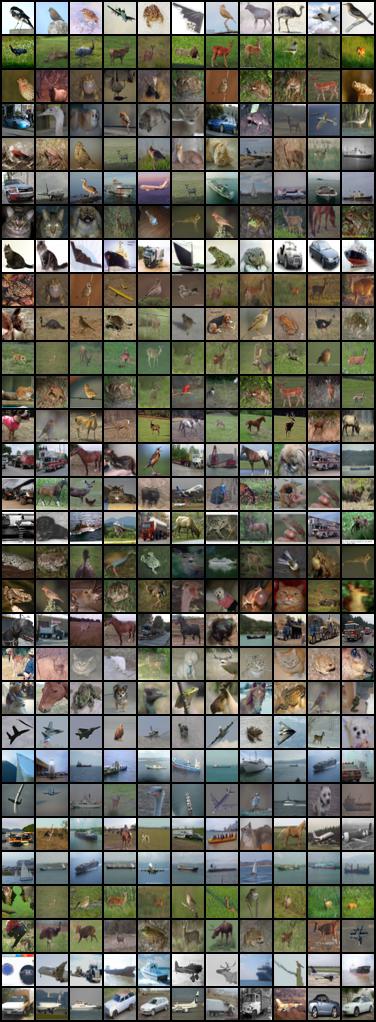}
    \caption{Nearest samples in the private dataset on \cifar{}. In each row, the first column is a generated image (from \cref{fig:cifar_gen_samples_app}), and the other columns are its nearest neighbors in the private dataset, sorted by the distance in ascending order. Every three rows correspond to generated image from one class. The distance metric is $\ell_2$ in the \textbf{pixel space.} }
    \label{fig:cifar_nearest_neighbor_original}
\end{figure}

\revision{\myparatightestn{Distributions of the distances to nearest samples.} Continuing the above experiments, we further show the distribution of the distances between (1) generated samples and their nearest real samples and (2) real samples and their nearest generated samples in \cref{fig:nn_cdf_cifar10,fig:inverse_nn_cdf_cifar10} respectively. Two key observations are: (1) During the early \algnameshort{} iterations, the distances tend to decrease. This means that \algnameshort{} is effective in pushing the generated distribution to be closer to the private distribution. (2) However, as \algnameshort{} continues, the distances stop decreasing. It is expected, as DP upper bounds the probability of reconstructing any sample in the private dataset.}

\begin{figure}[ht]
    \centering
    \begin{subfigure}[b]{0.4\linewidth}
        \centering
        \includegraphics[width=1\linewidth]{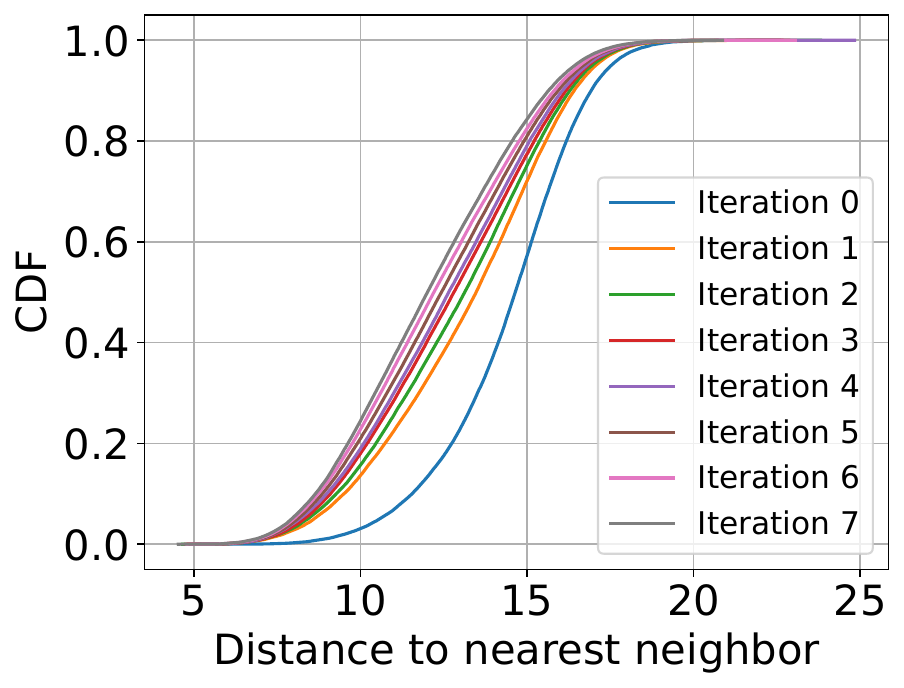}
        \caption{\revision{The distance metric is $\ell_2$ in the \textbf{inception embedding space.}}}
    \end{subfigure}
    ~~~~~
    \begin{subfigure}[b]{0.4\linewidth}
        \centering
        \includegraphics[width=1\linewidth]{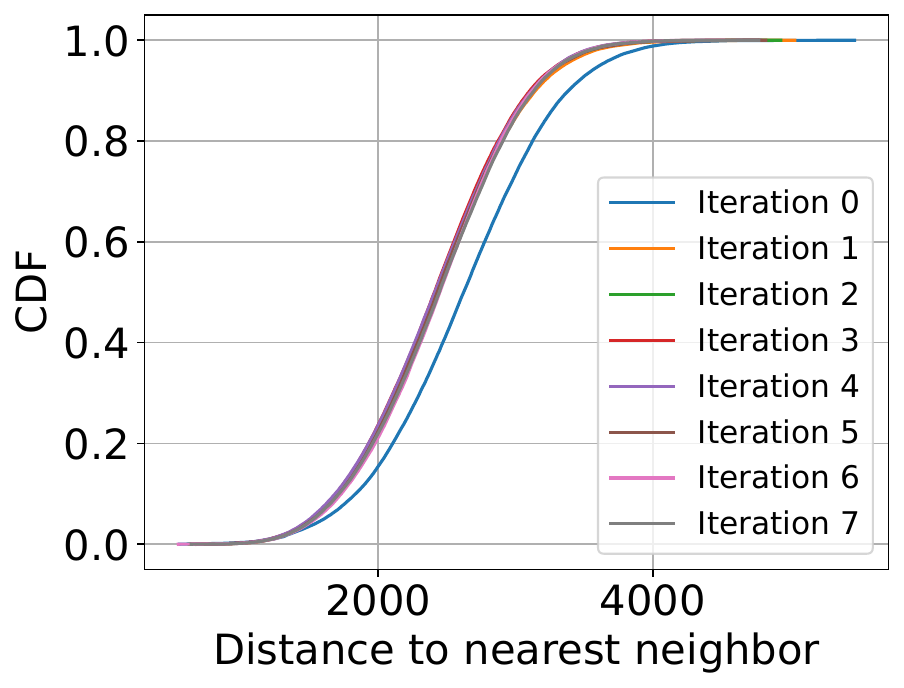}
        \caption{\revision{The distance metric is $\ell_2$ in the \textbf{pixel space.}}}
    \end{subfigure}
    \caption{\revision{CDF of the distributions \textbf{between each generated sample and its nearest private samples} on \cifar{} across different \algnameshort{} iterations. ``Iteration 0'' refers to the initial random samples from \cref{line:initial} in \cref{alg:main}.}}
    \label{fig:nn_cdf_cifar10}
\end{figure}
\begin{figure}[ht]
    \centering
    \begin{subfigure}[b]{0.4\linewidth}
        \centering
        \includegraphics[width=1\linewidth]{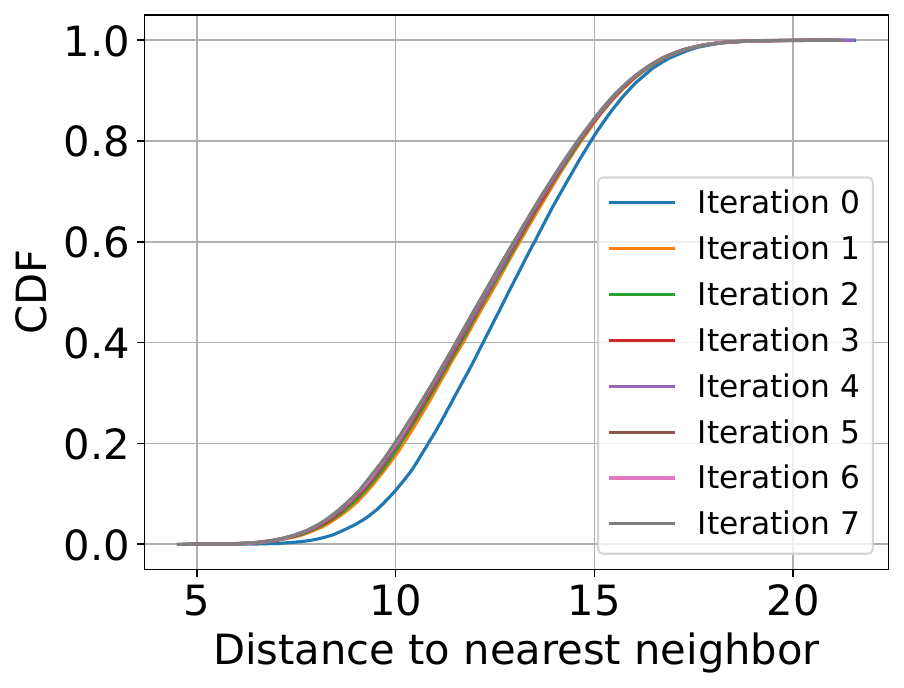}
        \caption{\revision{The distance metric is $\ell_2$ in the \textbf{inception embedding space.}}}
    \end{subfigure}
    ~~~~~
    \begin{subfigure}[b]{0.4\linewidth}
        \centering
        \includegraphics[width=1\linewidth]{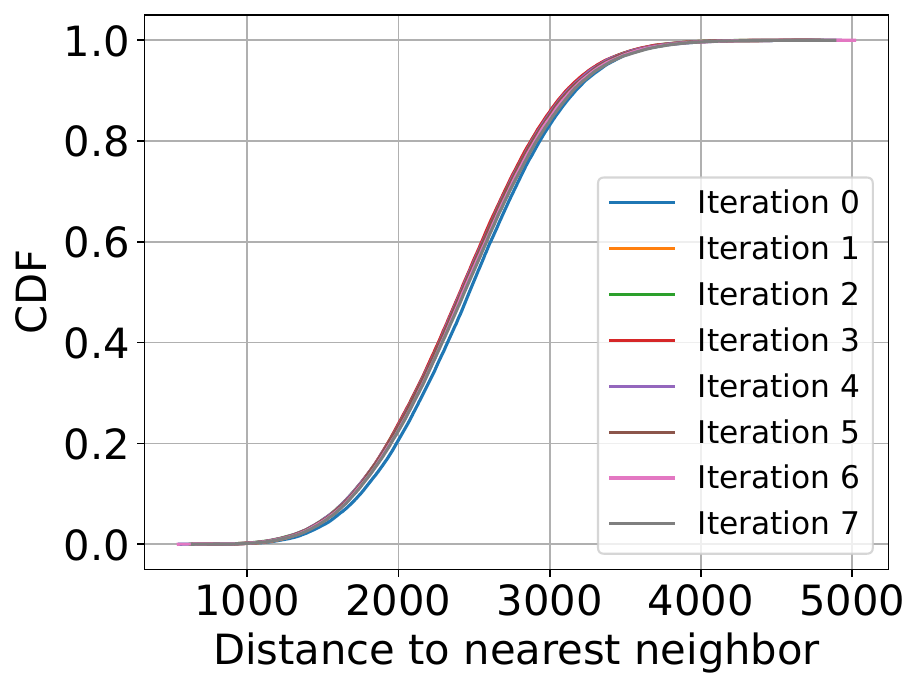}
        \caption{\revision{The distance metric is $\ell_2$ in the \textbf{pixel space.}}}
    \end{subfigure}
    \caption{\revision{CDF of the distributions \textbf{between each private sample and its nearest generated samples} on \cifar{} across different \algnameshort{} iterations. ``Iteration 0'' refers to the initial random samples from \cref{line:initial} in \cref{alg:main}.}}
    \label{fig:inverse_nn_cdf_cifar10}
\end{figure}

\revision{\myparatightestn{Samples with the highest and the lowest votes.}
\cref{fig:cifar10-top-bottom-count} shows the samples with the highest and the lowest votes in \dpvotingname{} across different \algnameshort{} iterations.
We can see that \dpvotingname{} picks samples similar to the private data as desired. This is more obvious in the first two \algnameshort{} iterations, where \dpvotingname{} assigns high votes on the samples with correct classes and puts low votes on the samples with incorrect classes. 
} 

\begin{figure}
    \centering
    \begin{subfigure}[b]{0.4\linewidth}
        \centering
        \includegraphics[width=0.45\linewidth]{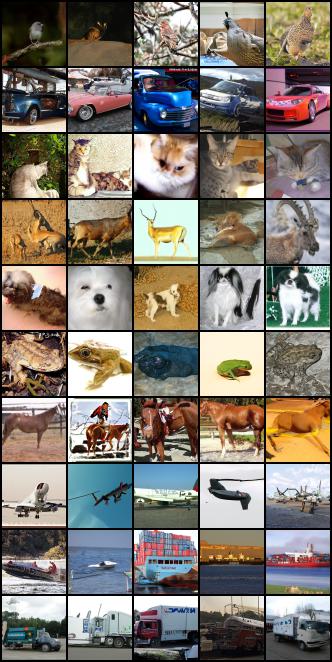}
        \includegraphics[width=0.45\linewidth]{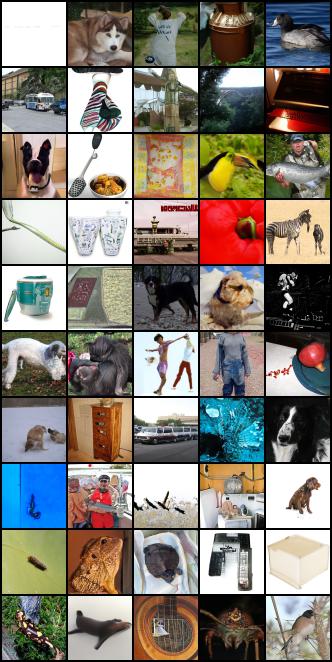}
        \caption{\revision{Iteration=0.}}
    \end{subfigure}
    \begin{subfigure}[b]{0.4\linewidth}
        \centering
        \includegraphics[width=0.45\linewidth]{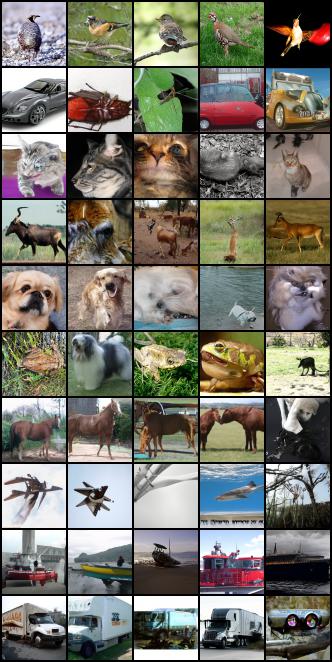}
        \includegraphics[width=0.45\linewidth]{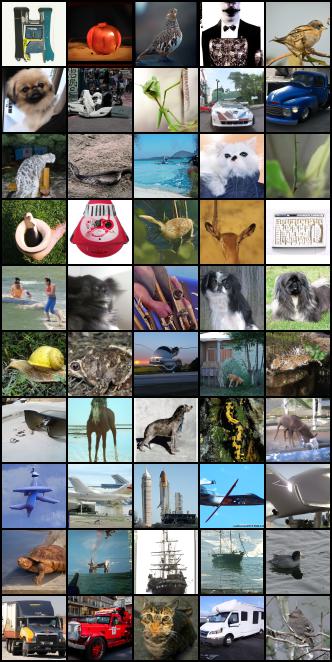}
        \caption{\revision{Iteration=1.}}
    \end{subfigure}
    \begin{subfigure}[b]{0.4\linewidth}
        \centering
        \includegraphics[width=0.45\linewidth]{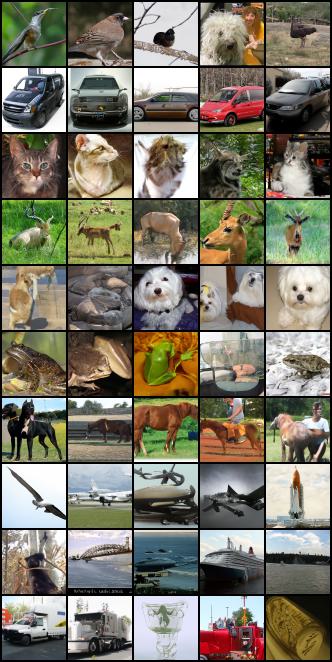}
        \includegraphics[width=0.45\linewidth]{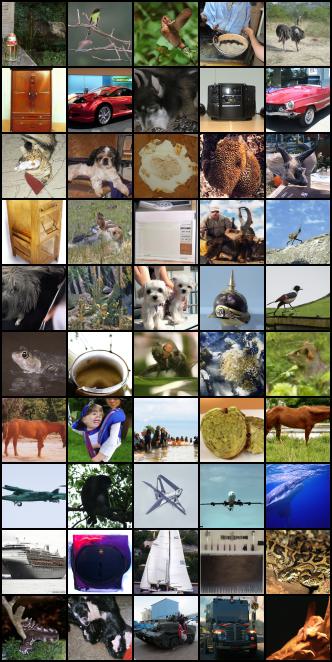}
        \caption{\revision{Iteration=2.}}
    \end{subfigure}
    \begin{subfigure}[b]{0.4\linewidth}
        \centering
        \includegraphics[width=0.45\linewidth]{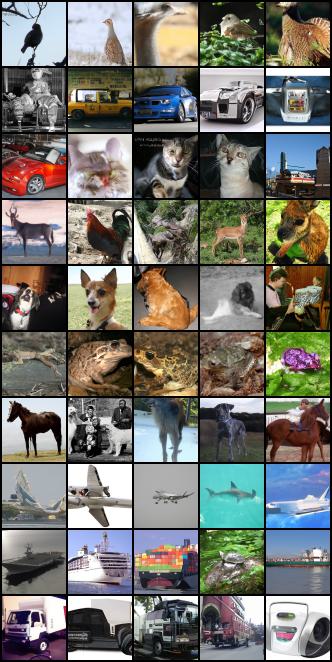}
        \includegraphics[width=0.45\linewidth]{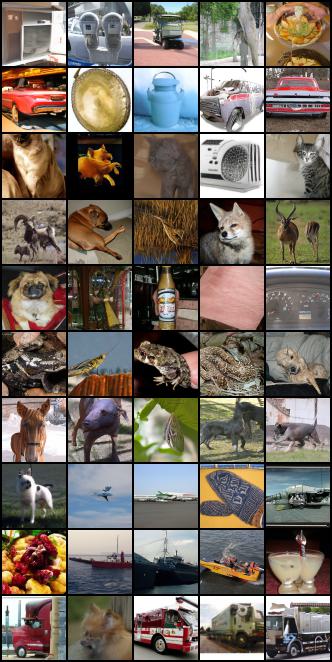}
        \caption{\revision{Iteration=3.}}
    \end{subfigure}
    \begin{subfigure}[b]{0.4\linewidth}
        \centering
        \includegraphics[width=0.45\linewidth]{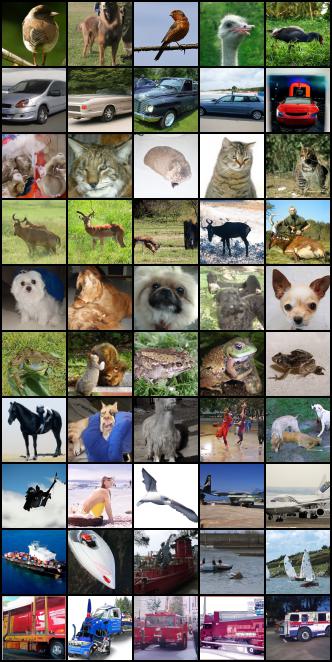}
        \includegraphics[width=0.45\linewidth]{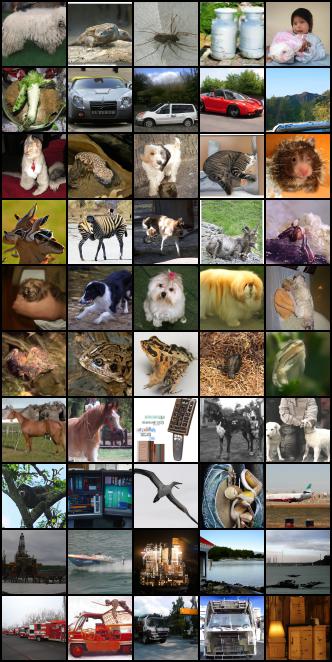}
        \caption{\revision{Iteration=4.}}
    \end{subfigure}
    \begin{subfigure}[b]{0.4\linewidth}
        \centering
        \includegraphics[width=0.45\linewidth]{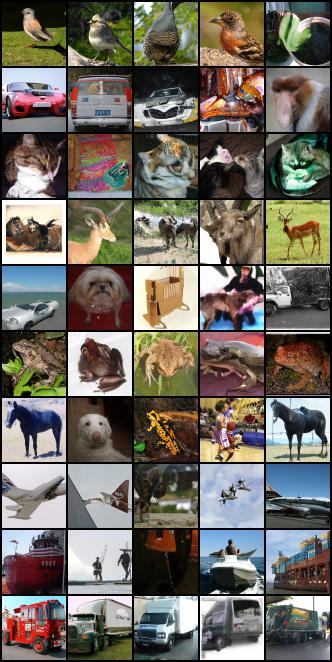}
        \includegraphics[width=0.45\linewidth]{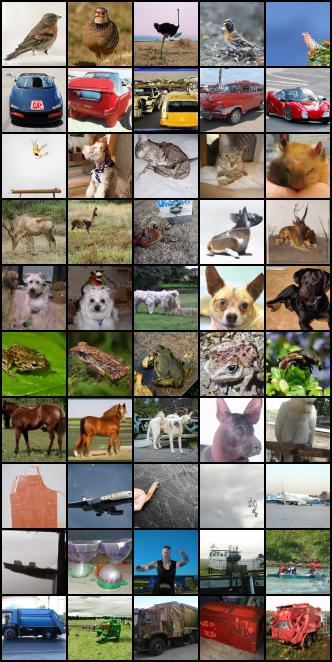}
        \caption{\revision{Iteration=5.}}
    \end{subfigure}
    \caption{\revision{Samples with the highest and the lowest counts in the \dpvotingname{} on \cifar{}. In each subfigure, each row corresponds to one class; the left subfigure shows the samples with the highest counts and the right subfigure shows the samples with the lowest counts.}}
    \label{fig:cifar10-top-bottom-count}
\end{figure}

\revision{\myparatightestn{Distribution of counts in \dpvotingname{}.}
\cref{fig:cifar_count_std} shows that the standard deviation of counts in \dpvotingname{} tends to decrease with more \algnameshort{} iterations on \cifar{}. This means that the histogram becomes ``more uniform'' with more \algnameshort{} iterations.
It is expected for the following reasons. In the beginning, only a few generated samples are close to the private data. Those generated samples get most of the votes from private samples, which results in a concentrated histogram. PE will then pick those highly voted generated samples and do more variations over them. As a result, private samples that voted for the same generated sample may now find different closest generated samples and distribute the votes, which results in a ``more uniform'' histogram. 
}

\begin{figure}
    \centering
    \includegraphics[width=0.4\linewidth]{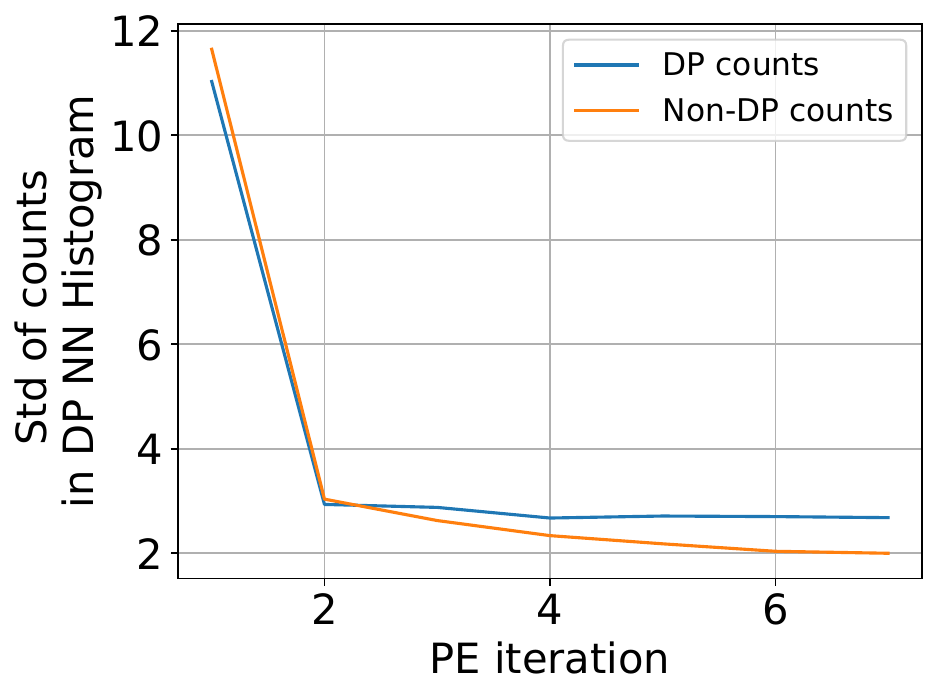}
    \caption{\revision{The standard deviation of counts in \dpvotingname{} across different \algnameshort{} iterations on \cifar{}. ``DP counts'' refers to the counts after adding Gaussian noise and thresholding. ``Non-DP counts'' refer to the counts before adding Gaussian noise and thresholding.}}
    \label{fig:cifar_count_std}
\end{figure}
\FloatBarrier
\section{More Details and Resutls on \camelyon{} Experiments}
\label{app:camelyon}

\myparatightestn{Pre-trained model.} We use the checkpoint \codeword{imagenet64_cond_270M_250K.pt} released in \cite{nichol2021improved}.\footnote{\url{https://github.com/openai/improved-diffusion}}

\myparatightestn{API implementation.} Same as \cref{app:cifar}.

\myparatightestn{Hyperparameters.} 
We set \packing{} degree $\packingdegree=8$, and number of generated samples $\numgensamples=302436$.

About the experiments in \cref{fig:camelyon_gen_samples}.
For \randomsampleapiname{} and \samplevariationapiname{}, we use DDIM sampler with 10 steps. 
For \samplevariationapiname{}, we take a 2-stage approach.  the first stage, we use DDIM sampler with 10 steps and use SDEEdit \cite{meng2021sdedit} by adding noise till $[10, 10, 10, 10,  9,  9,  9,  9,  9,  8,  8,  8,  8,  8,  7,  7, 7,  7]$ timesteps for each iteration respectively.   In the second stage, we use DDIM sampler with 40 steps and use SDEEdit \cite{meng2021sdedit} by adding noise till $[20, 19, 18, 17, 16, 15, 14, 13, 12, 11, 10]$ timesteps for each iteration respectively.
These timesteps can be regarded as the $\variationdegree$ parameter in \cref{sec:scope}.
We use noise multiplier $\noisemultiplier=2\cdot\sqrt{2}$ and threshold $\threshold=4$. 

About the experiments in \cref{sec:exp_camelyon}.
For \randomsampleapiname{} and \samplevariationapiname{}, we use DDIM sampler with 10 steps. 
For \samplevariationapiname{}, we use DDIM sampler with 10 steps and use SDEEdit \cite{meng2021sdedit} by adding noise till %
\revision{$[10, 10, 10, 10,  9,  9,  9]$}
timesteps for each iteration respectively. 
These timesteps can be regarded as the $\variationdegree$ parameter in \cref{sec:scope}.
We use noise multiplier %
\revision{$\noisemultiplier=1.381$} 
and threshold $\threshold=4$.

\myparatightestn{Generated samples.}
See \cref{fig:camelyon_gen_samples,fig:camelyon_real} for generated images and real images side-by-side. Note that the real images in \camelyon{} dataset are 96x96 images, whereas the pre-trained network is 64x64. In \cref{fig:camelyon_real}, we scale them down to 64x64 for better comparison.

\cref{fig:camelyon_gen_samples_iteration} shows the generated images in the intermediate iterations. We can see that the generated images are effectively guided towards \camelyon{} though it is very different from the pre-training dataset.

\begin{figure}[ht]
    \begin{minipage}{.45\textwidth}
	 	\centering
		\includegraphics[width=\linewidth]{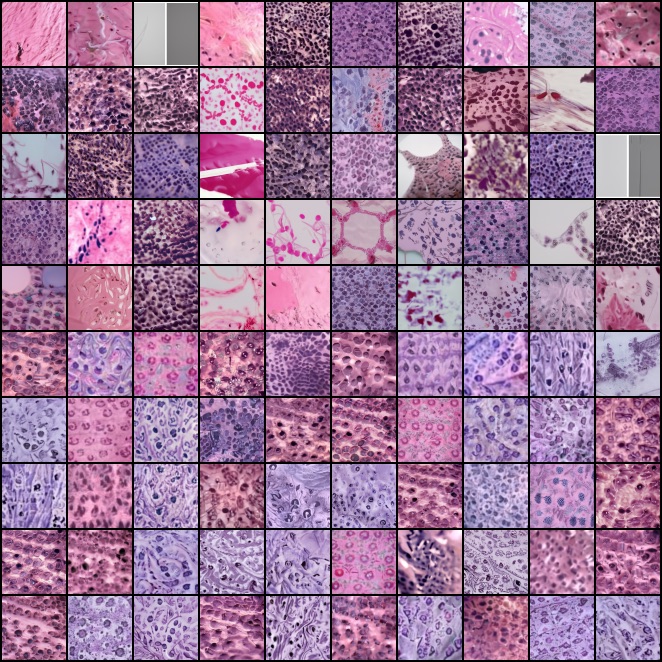}
        \caption{Generated samples on \camelyon{} with $\bra{9.92,3\cdot 10^{-6}}$-DP. The first five rows correspond to one class, and the rest correspond to the other class. FID=10.66.}
        \label{fig:camelyon_gen_samples}
    \end{minipage}%
    \hfill
    \begin{minipage}{.45\textwidth}
      	\centering
		\includegraphics[width=\linewidth]{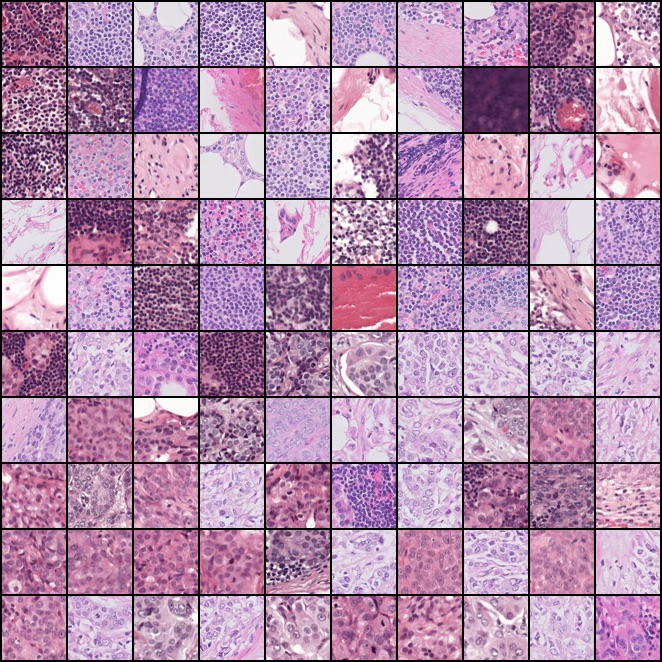}
		\caption{Real samples from \camelyon{}. The first five rows correspond to one class, and the rest correspond to the other class.\\}
		\label{fig:camelyon_real}
    \end{minipage}
\end{figure}

\begin{figure}[hbt!]
    \centering
    \begin{subfigure}[b]{0.26\linewidth}
        \centering
        \includegraphics[width=\linewidth]{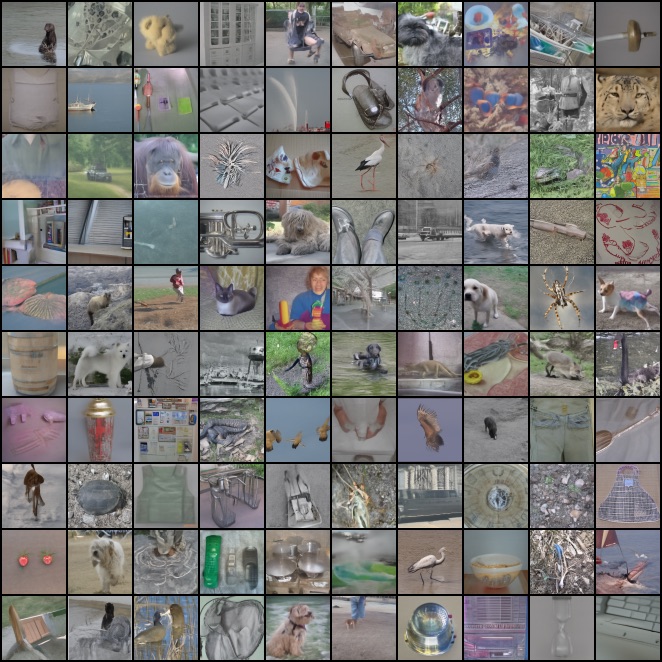}
        \caption{Iteration=0.}
    \end{subfigure}
    \begin{subfigure}[b]{0.26\linewidth}
        \centering
        \includegraphics[width=\linewidth]{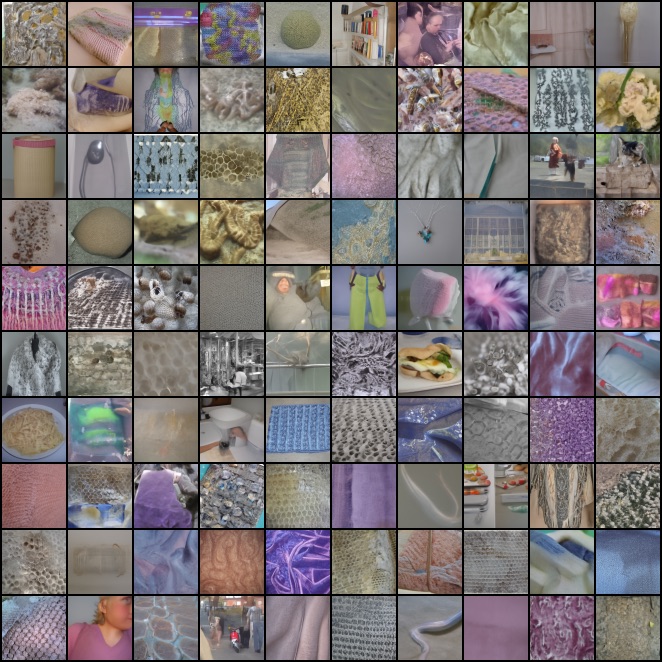}
        \caption{Iteration=1.}
    \end{subfigure}
    \begin{subfigure}[b]{0.26\linewidth}
        \centering
        \includegraphics[width=\linewidth]{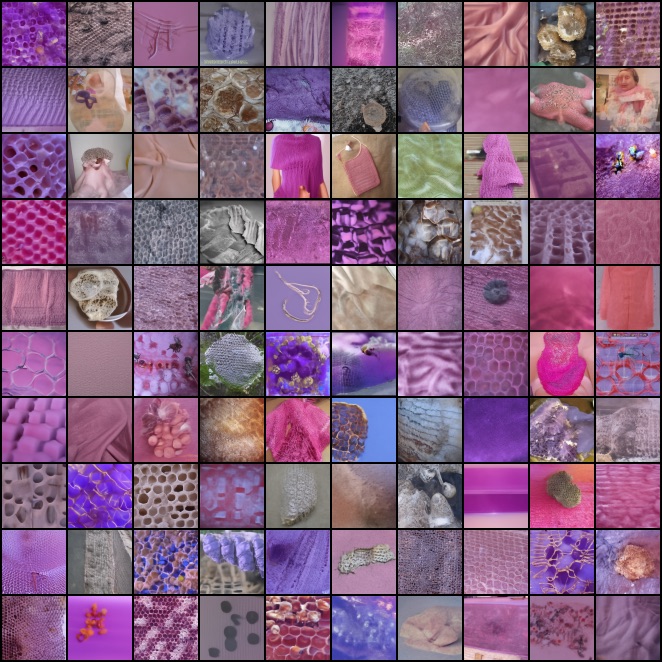}  
        \caption{Iteration=3.}
    \end{subfigure}
    \begin{subfigure}[b]{0.26\linewidth}
        \centering
        \includegraphics[width=\linewidth]{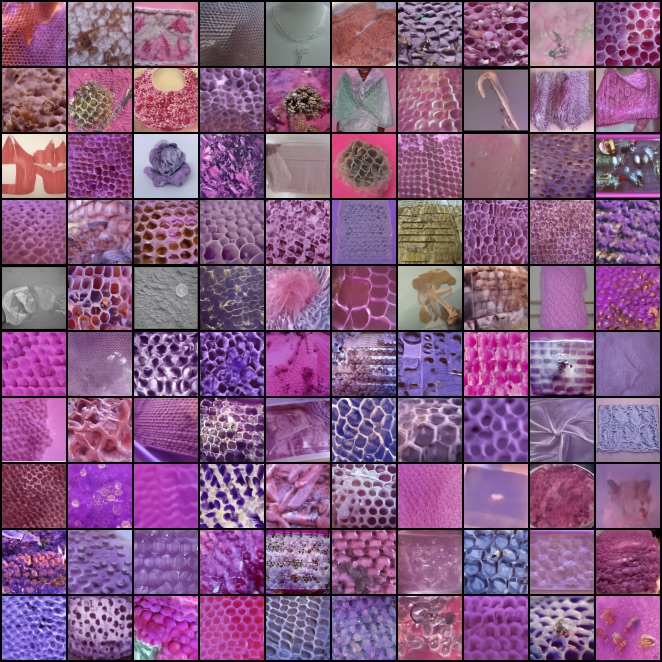}  
        \caption{Iteration=5.}
    \end{subfigure}
    \begin{subfigure}[b]{0.26\linewidth}
        \centering
        \includegraphics[width=\linewidth]{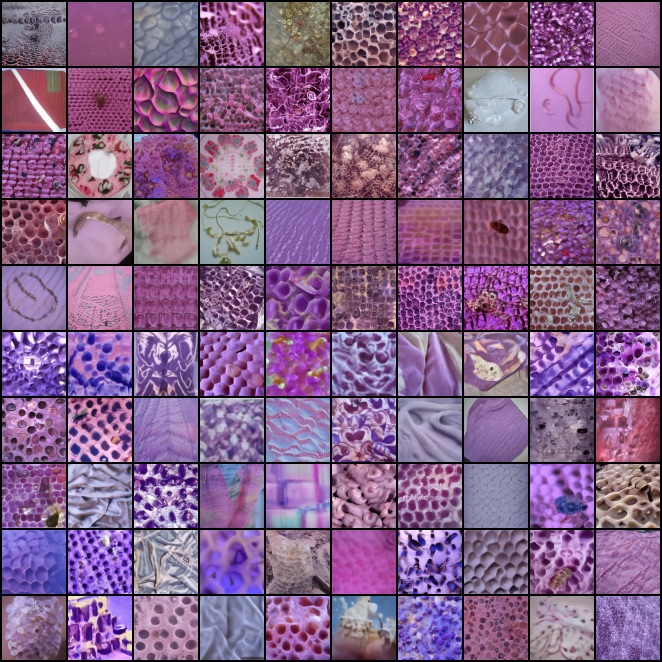}  
        \caption{Iteration=7.}
    \end{subfigure}
    \begin{subfigure}[b]{0.26\linewidth}
        \centering
        \includegraphics[width=\linewidth]{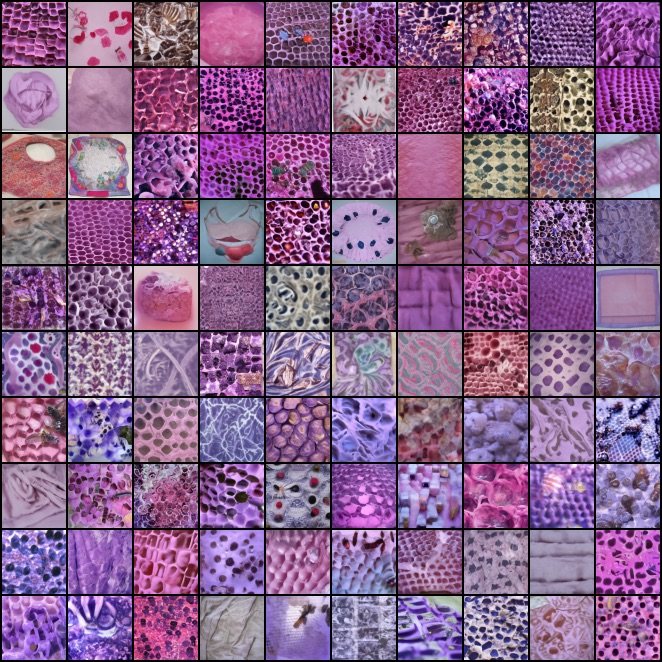}  
        \caption{Iteration=9.}
    \end{subfigure}
    \begin{subfigure}[b]{0.26\linewidth}
        \centering
        \includegraphics[width=\linewidth]{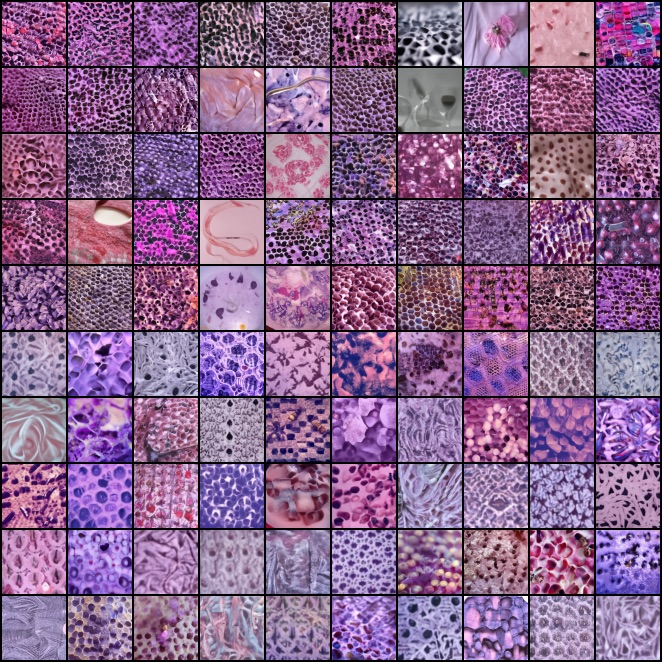}  
        \caption{Iteration=11.}
    \end{subfigure}
    \begin{subfigure}[b]{0.26\linewidth}
        \centering
        \includegraphics[width=\linewidth]{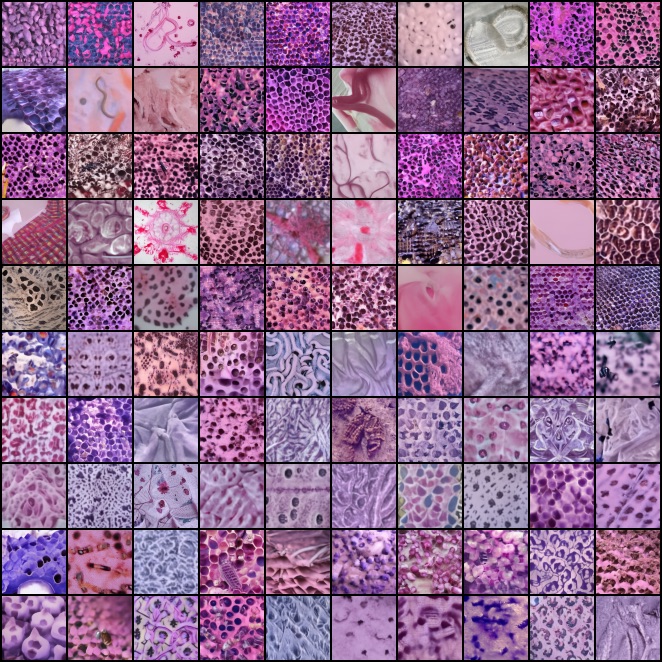}  
        \caption{Iteration=13.}
    \end{subfigure}
    \begin{subfigure}[b]{0.26\linewidth}
        \centering
        \includegraphics[width=\linewidth]{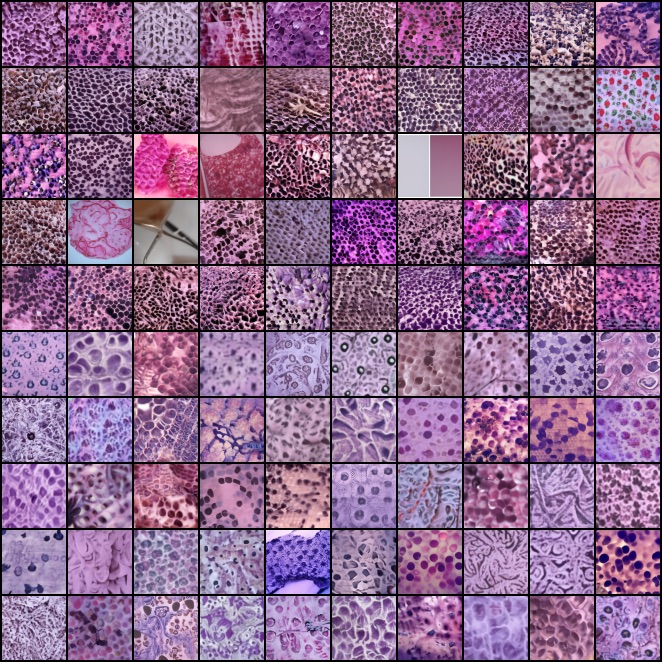}  
        \caption{Iteration=15.}
    \end{subfigure}
    \begin{subfigure}[b]{0.26\linewidth}
        \centering
        \includegraphics[width=\linewidth]{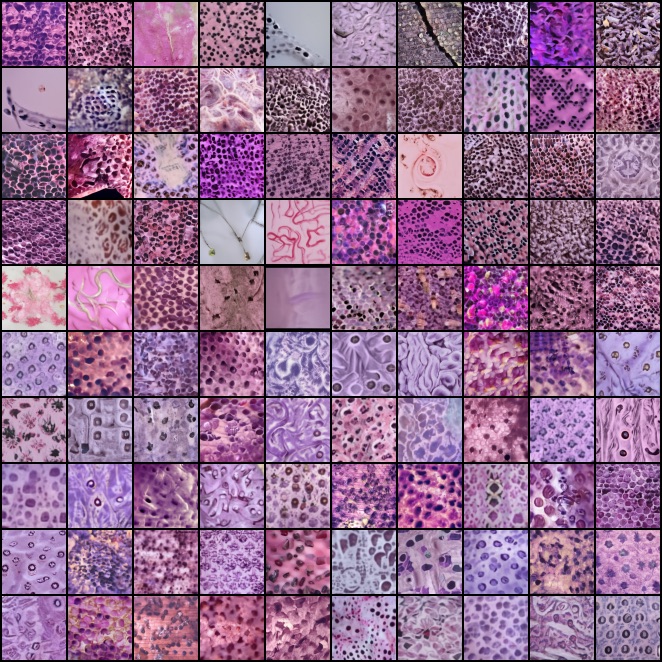}  
        \caption{Iteration=17.}
    \end{subfigure}
    \begin{subfigure}[b]{0.26\linewidth}
        \centering
        \includegraphics[width=\linewidth]{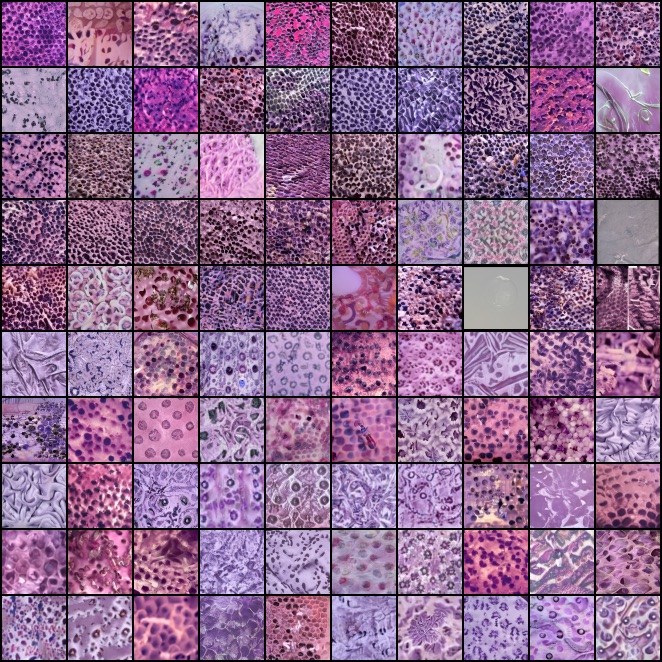}  
        \caption{Iteration=19.}
    \end{subfigure}
    \begin{subfigure}[b]{0.26\linewidth}
        \centering
        \includegraphics[width=\linewidth]{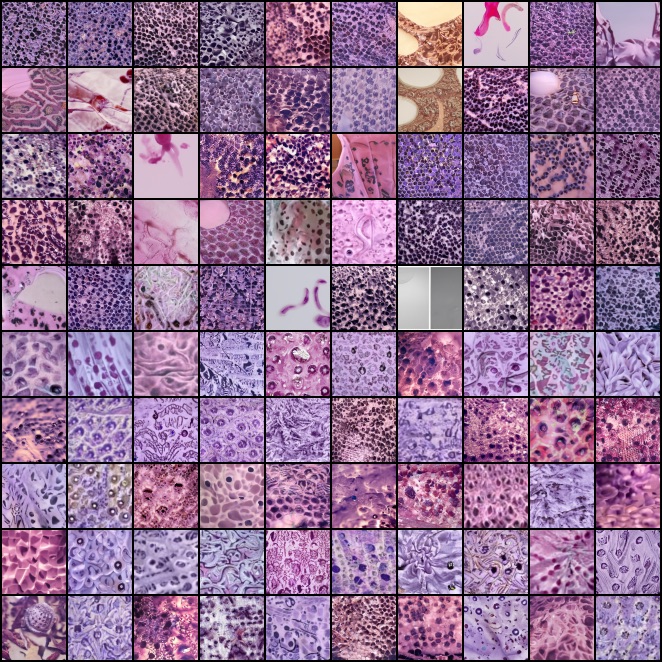}  
        \caption{Iteration=21.}
    \end{subfigure}
    \begin{subfigure}[b]{0.26\linewidth}
        \centering
        \includegraphics[width=\linewidth]{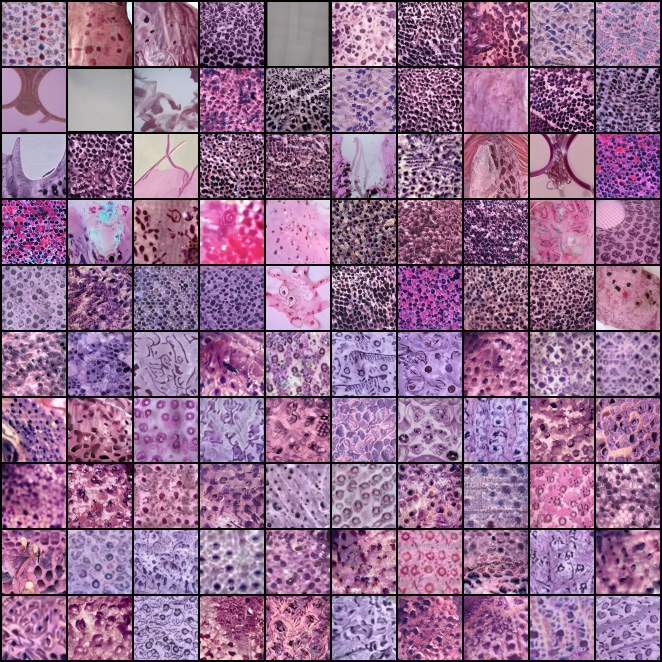}  
        \caption{Iteration=23.}
    \end{subfigure}
    \begin{subfigure}[b]{0.26\linewidth}
        \centering
        \includegraphics[width=\linewidth]{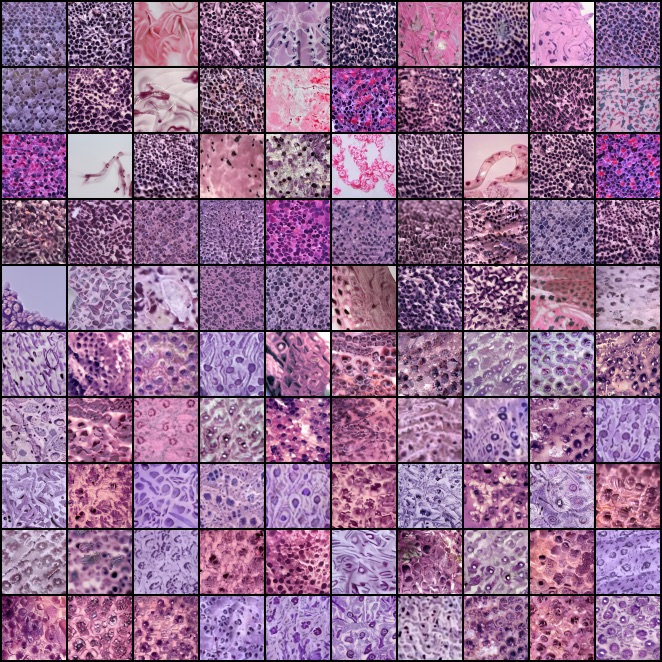}  
        \caption{Iteration=25.}
    \end{subfigure}
    \begin{subfigure}[b]{0.26\linewidth}
        \centering
        \includegraphics[width=\linewidth]{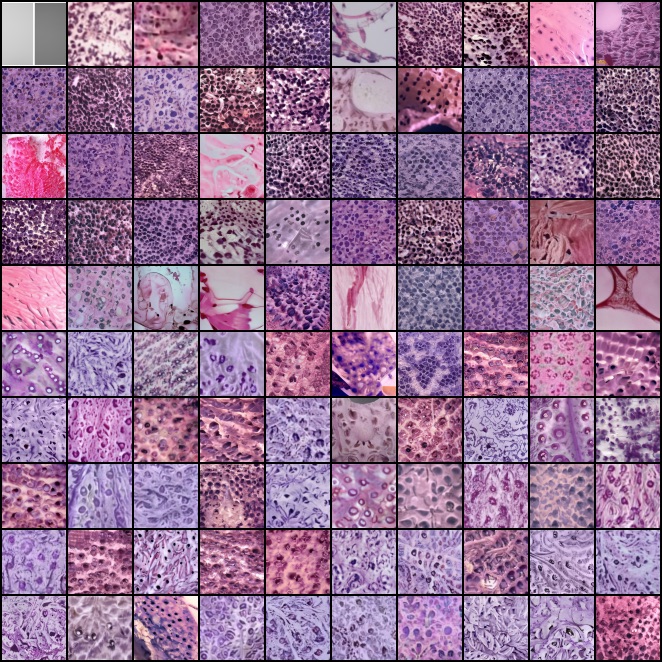}  
        \caption{Iteration=27.}
    \end{subfigure}
    \caption{Generated samples on \camelyon{} at the first few iterations. We can see that the generated images are gradually guided from \imagenet{}, the pre-training dataset, to \camelyon{}, the (very different) private dataset. ``Iteration=0'' means the initial random samples from \randomsampleapiname{}.}
    \label{fig:camelyon_gen_samples_iteration}
\end{figure}

\myparatightestn{Nearest samples in the private dataset}
\cref{fig:camelyon_nearest_neighbor_inception,fig:camelyon_nearest_neighbor_original} show generated images and their nearest neighbors in the private dataset evaluated using two distance metrics: $\ell_2$ distance in the inception embedding space and the pixel space. Similar to the results in \cifar{}, we can see that the generated images are different from private images. This is expected due to the DP guarantee.
\begin{figure}[h]
    \centering
    \includegraphics[width=0.55\linewidth]{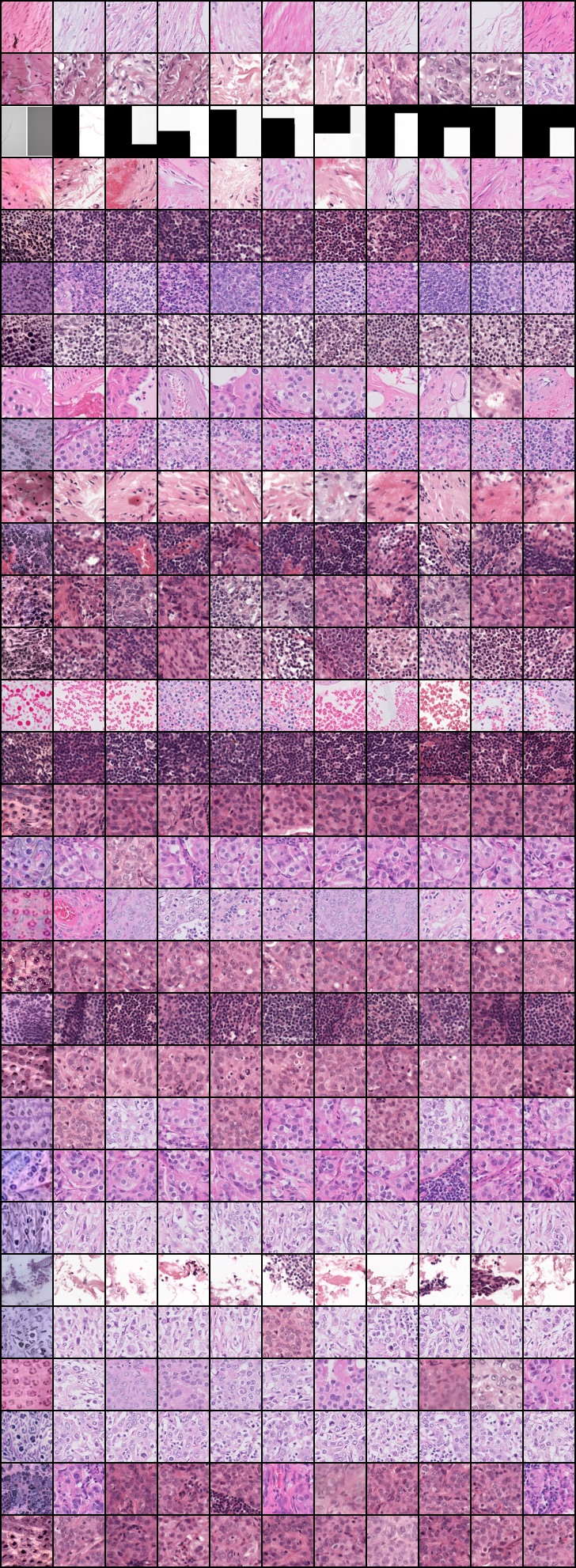}
    \caption{Nearest samples in the private dataset on \camelyon{}. In each row, the first column is a generated image (from \cref{fig:camelyon_gen_samples}), and the other columns are its nearest neighbors in the private dataset, sorted by the distance in ascending order. Every fifteen rows correspond to generated image from one class. The distance metric is $\ell_2$ in the \textbf{inception embedding space}. }
    \label{fig:camelyon_nearest_neighbor_inception}
\end{figure}

\begin{figure}[h]
    \centering
    \includegraphics[width=0.55\linewidth]{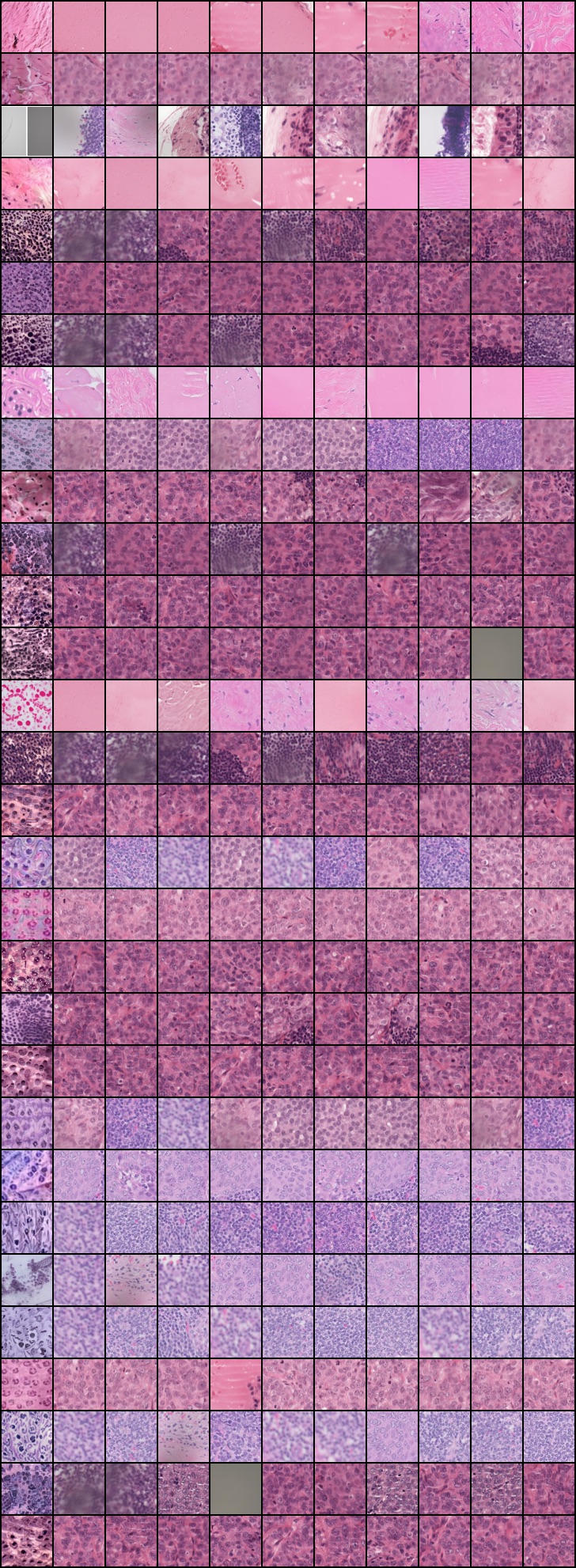}
    \caption{Nearest samples in the private dataset on \camelyon{}. In each row, the first column is a generated image (from \cref{fig:cifar_gen_samples_app}), and the other columns are its nearest neighbors in the private dataset, sorted by the distance in ascending order. Every fifteen rows correspond to generated image from one class. The distance metric is $\ell_2$ in the \textbf{pixel space}. }
    \label{fig:camelyon_nearest_neighbor_original}
\end{figure}

\revision{\myparatightestn{Samples with the highest and the lowest votes.}
\cref{fig:camelyon-top-bottom-count} shows the samples with the highest and the lowest votes in \dpvotingname{} across different \algnameshort{} iterations.
We can see that \dpvotingname{} gradually picks samples with the right patterns as the private data, and drops samples with more different patterns. As \camelyon{} is further away from the pre-training dataset \imagenet{} than \cifar{}, we can see that it converges slower than the case in \cifar{} (\cref{fig:cifar10-top-bottom-count}).
} 

\begin{figure}
    \centering
    \begin{subfigure}[b]{0.4\linewidth}
        \centering
        \includegraphics[width=0.45\linewidth]{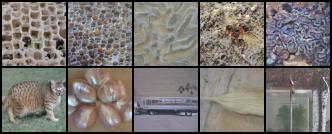}
        \includegraphics[width=0.45\linewidth]{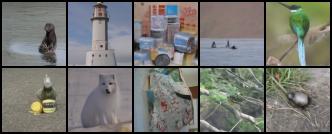}
        \caption{\revision{Iteration=0.}}
    \end{subfigure}
    \begin{subfigure}[b]{0.4\linewidth}
        \centering
        \includegraphics[width=0.45\linewidth]{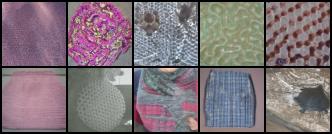}
        \includegraphics[width=0.45\linewidth]{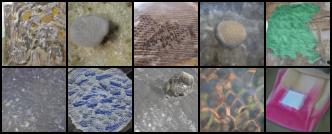}
        \caption{\revision{Iteration=1.}}
    \end{subfigure}
    \begin{subfigure}[b]{0.4\linewidth}
        \centering
        \includegraphics[width=0.45\linewidth]{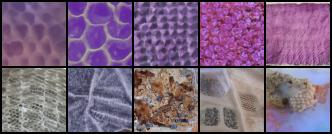}
        \includegraphics[width=0.45\linewidth]{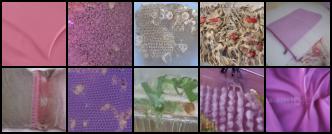}
        \caption{\revision{Iteration=2.}}
    \end{subfigure}
    \begin{subfigure}[b]{0.4\linewidth}
        \centering
        \includegraphics[width=0.45\linewidth]{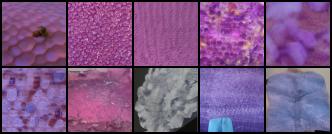}
        \includegraphics[width=0.45\linewidth]{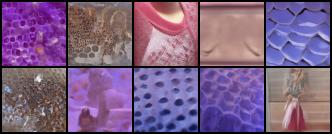}
        \caption{\revision{Iteration=3.}}
    \end{subfigure}
    \begin{subfigure}[b]{0.4\linewidth}
        \centering
        \includegraphics[width=0.45\linewidth]{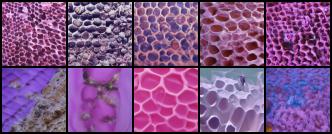}
        \includegraphics[width=0.45\linewidth]{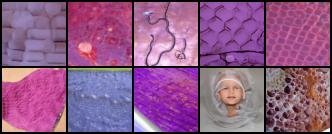}
        \caption{\revision{Iteration=4.}}
    \end{subfigure}
    \begin{subfigure}[b]{0.4\linewidth}
        \centering
        \includegraphics[width=0.45\linewidth]{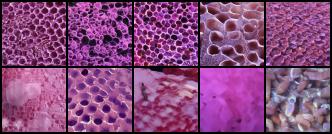}
        \includegraphics[width=0.45\linewidth]{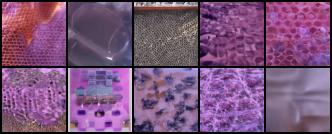}
        \caption{\revision{Iteration=5.}}
    \end{subfigure}
    \caption{\revision{Samples with the highest and the lowest counts in the \dpvotingname{} on \camelyon{}. In each subfigure, each row corresponds to one class; the left subfigure shows the samples with the highest counts and the right subfigure shows the samples with the lowest counts.}}
    \label{fig:camelyon-top-bottom-count}
\end{figure}

\revision{\myparatightestn{Distributions of the distances to nearest samples.} Continuing the above experiments, we further show the distribution of the distances between (1) generated samples and their nearest real samples and (2) real samples and their nearest generated samples in \cref{fig:nn_cdf_camelyon,fig:inverse_nn_cdf_camelyon} respectively. Similar to the \cifar{} experiments (\cref{fig:nn_cdf_cifar10,fig:inverse_nn_cdf_cifar10}), we see that: (1) During the early \algnameshort{} iterations, the \emph{inception} distances tend to decrease. This means that \algnameshort{} is effective in pushing the generated distribution to be closer to the private distribution. (2) However, as \algnameshort{} continues, the \emph{inception} distances stop decreasing. It is expected, as DP upper bounds the probability of reconstructing any sample in the private dataset.
However, one difference to \cifar{} results is that the distance between each generated sample and its nearest private samples measured in the \emph{original pixel space} (\cref{fig:nn_cdf_camelyon_pixel}) tends to increase. We hypothesize that it is be due to nature of this dataset that any shifts of the histological images are still in-distribution but can result in a high distance in the \emph{original pixel space}.
}

\begin{figure}[ht]
    \centering
    \begin{subfigure}[b]{0.4\linewidth}
        \centering
        \includegraphics[width=1\linewidth]{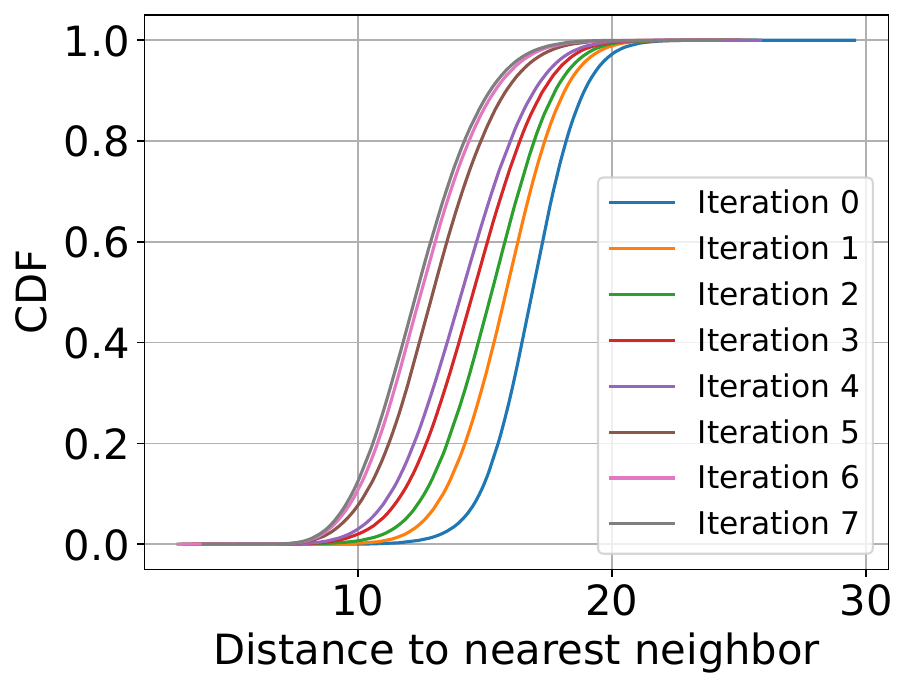}
        \caption{\revision{The distance metric is $\ell_2$ in the \textbf{inception embedding space.}}}
    \end{subfigure}
    ~~~~~
    \begin{subfigure}[b]{0.4\linewidth}
        \centering
        \includegraphics[width=1\linewidth]{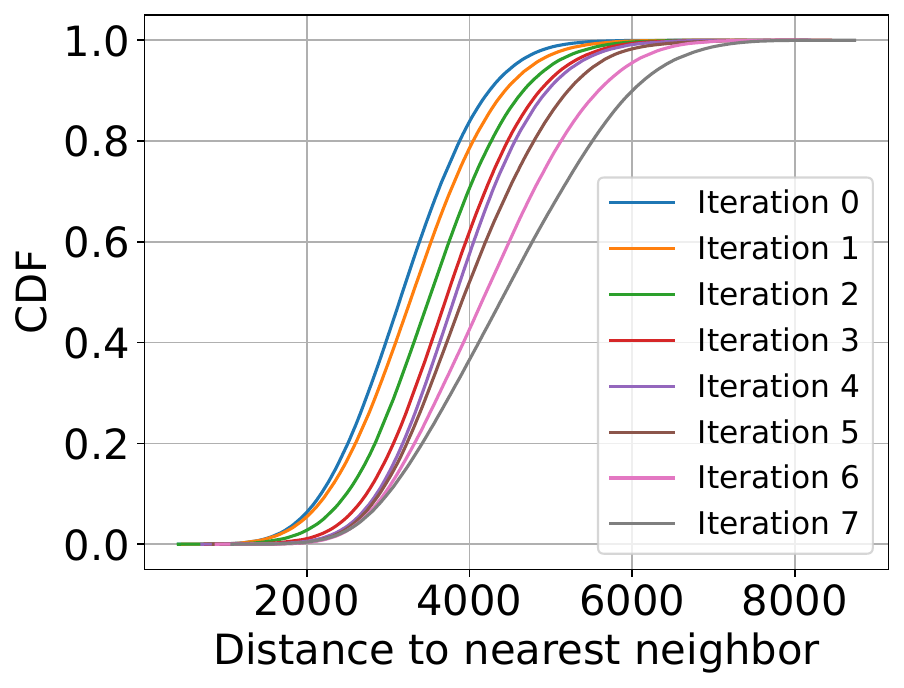}
        \caption{\revision{The distance metric is $\ell_2$ in the \textbf{pixel space.}}}
        \label{fig:nn_cdf_camelyon_pixel}
    \end{subfigure}
    \caption{\revision{CDF of the distributions \textbf{between each generated sample and its nearest private samples} on \cifar{} across different \algnameshort{} iterations. ``Iteration 0'' refers to the initial random samples from \cref{line:initial} in \cref{alg:main}.}}
    \label{fig:nn_cdf_camelyon}
\end{figure}
\begin{figure}[ht]
    \centering
    \begin{subfigure}[b]{0.4\linewidth}
        \centering
        \includegraphics[width=1\linewidth]{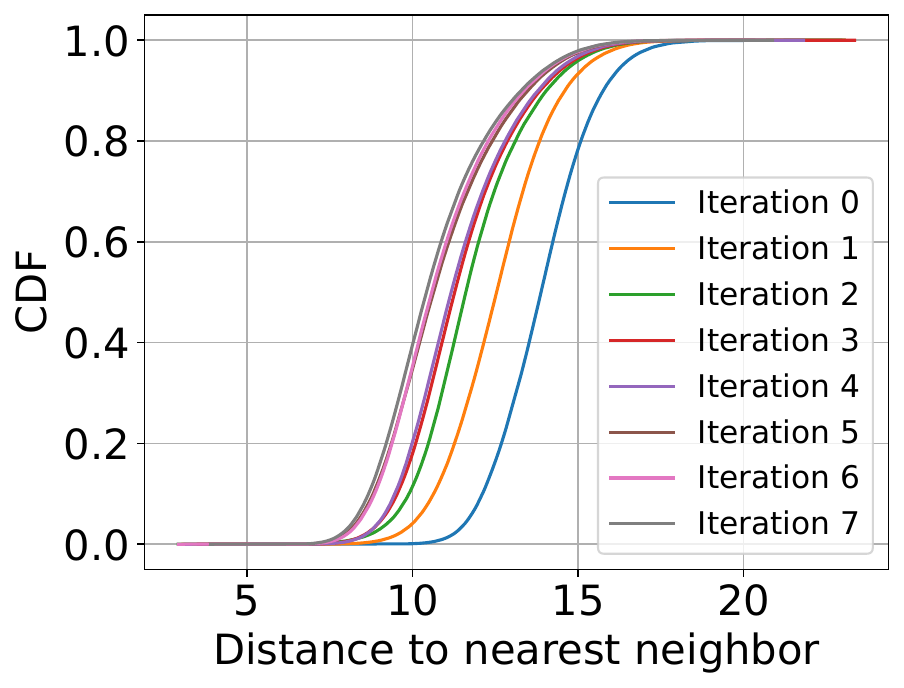}
        \caption{\revision{The distance metric is $\ell_2$ in the \textbf{inception embedding space.}}}
    \end{subfigure}
    ~~~~~
    \begin{subfigure}[b]{0.4\linewidth}
        \centering
        \includegraphics[width=1\linewidth]{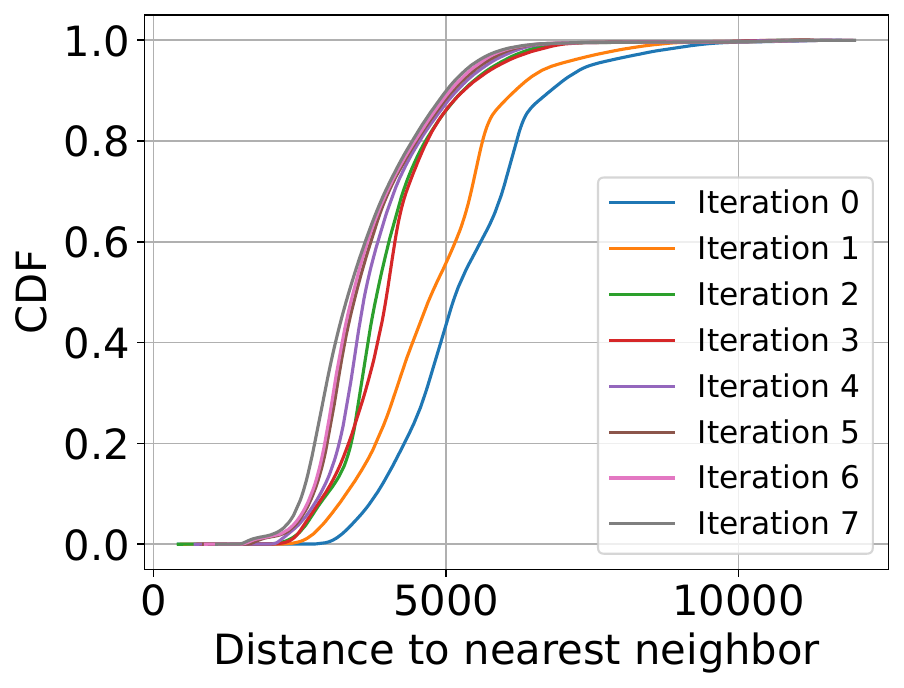}
        \caption{\revision{The distance metric is $\ell_2$ in the \textbf{pixel space.}}}
    \end{subfigure}
    \caption{\revision{CDF of the distributions \textbf{between each private sample and its nearest generated samples} on \cifar{} across different \algnameshort{} iterations. ``Iteration 0'' refers to the initial random samples from \cref{line:initial} in \cref{alg:main}.}}
    \label{fig:inverse_nn_cdf_camelyon}
\end{figure}

\revision{\myparatightestn{Distribution of counts in \dpvotingname{}.}
\cref{fig:camelyon_count_std} shows that the standard deviation of counts in \dpvotingname{} tends to decrease with more \algnameshort{} iterations on \camelyon{}. This accords with the observations and takeaways messages in \cifar{} (\cref{fig:cifar_count_std}).
}

\begin{figure}
    \centering
    \includegraphics[width=0.4\linewidth]{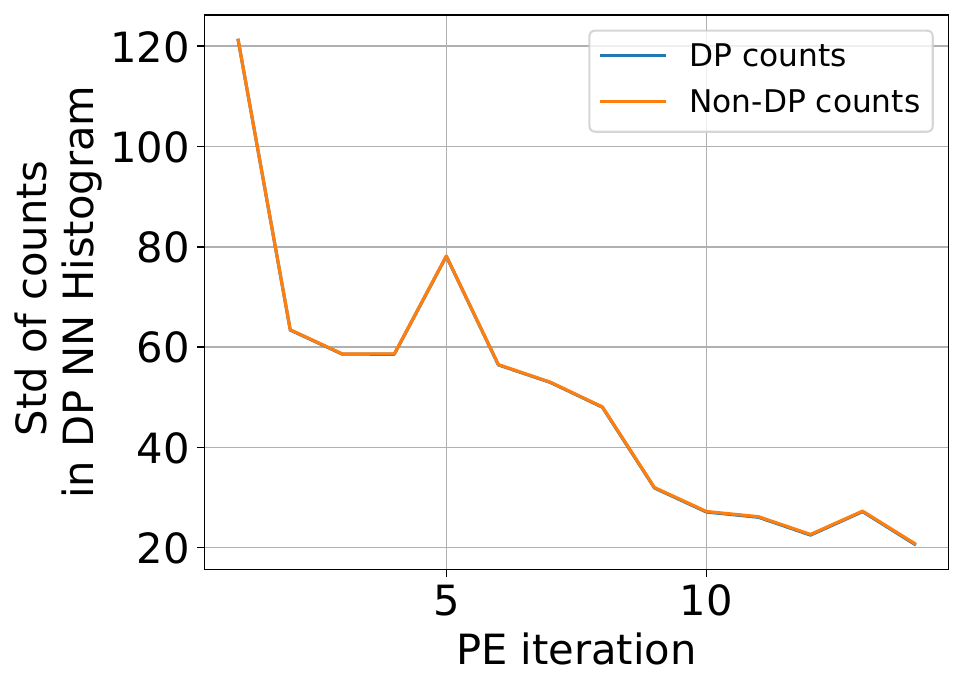}
    \caption{\revision{The standard deviation of counts in \dpvotingname{} across different \algnameshort{} iterations on \camelyon{}. ``DP counts'' refers to the counts after adding Gaussian noise and thresholding. ``Non-DP counts'' refer to the counts before adding Gaussian noise and thresholding.}}
    \label{fig:camelyon_count_std}
\end{figure}

\revision{\myparatightestn{Inspescting the ImageNet class labels of the generated samples.} As discussed in \cref{app:conditional_unconditional}, each generated samples have an associated \imagenet{} label. Here, we inspect those labels to understand how \algnameshort{} works in this dataset. The list of the labels and their associated number of generated images are: ``honeycomb'' (164647), ``velvet'' (83999), ``nematode, nematode worm, roundworm'' (35045), ``handkerchief, hankie, hanky, hankey'' (14495), ``bib'' (3102), ``head cabbage'' (934), ``bubble'' (142), ``stole'' (72). We see that 54.4\% images are with the label ``honeycomb''. Many honeycomb images in \imagenet{} have a ``net-like'' structure, which shares some similarities with \camelyon{}. However, the details, colors, and structures are still different. We compute the FID between honeycomb images and CIFAR10. The FID score is 162.86, which is much higher than the FID score of the final generated images we get (10.66 in \cref{fig:camelyon_gen_samples}). These results show that the existence of the honeycomb class in \imagenet{} helps with \algnameshort{}, but \algnameshort{} does more than just picking this class.  }

\revision{
\myparatightestn{Why \algnameshort{} works under large distribution shifts.}
The fact that \algnameshort{} is able to make \imagenet{}-pre-trained-models to generate \camelyon{} images may appear surprising to readers. Here, we provide intuitive explanations.
Even though the diffusion model is trained on natural images, the support of the generated distribution could be very large due to the formulation. More concretely, let’s look at two examples: score-based models \citep{song2019generative}, which is closely related to diffusion models, and the diffusion model we used \citep{ho2020denoising}. For score-based models, its modeling target is a distribution perturbed by Gaussian noise, and therefore, the generated distribution spans the entire sample space. For diffusion models \citep{ho2020denoising}, the latent space has the same size as the image, and the last denoising step is modeled as a distribution derived from a Gaussian distribution that spans the entire pixel space (see Section 3.3 of \citet{ho2020denoising}). Therefore, the generated distribution of diffusion models also spans the entire sample space.
}

\revision{In other words, for any (trained) score-based models or diffusion models, it is theoretically possible to generate images similar to the private dataset (or in fact, generate any images). The problem is that the probability of generating such images is small if there is a large distribution shift from the pre-training data to the private data. \algnameshort{} is effective in guiding the diffusion model to generate samples from the region that is low-density in the original pre-trained distribution but high-density in the private distribution.
}

\revision{To make it more concrete, we provide how the generated images evolve from natural images to \camelyon{} dataset in \cref{fig:camelyon_gen_samples_iteration} and the selected/filtered samples by \algnameshort{} in \cref{fig:camelyon-top-bottom-count}. At every iteration, \algnameshort{} selects the set of images that are most similar to \camelyon{} dataset (\cref{fig:camelyon-top-bottom-count}). Those images might still appear different from \camelyon{} in early iterations. However, as long as we get images that are more similar to \camelyon{} through \samplevariationapiname{} at every iteration, we will make progress, and finally, we can get images similar to Camelyon17 (\cref{fig:camelyon_gen_samples}).
}

\revision{We further experiment \algnameshort{} under different levels of distribution shifts. To do that, we take the \camelyon{} dataset and modify the \emph{saturation} of the images to create a sequence of datasets, each with a different saturation change. This way, we create a sequence of datasets with different levels of distribution shifts from ImageNet.  From \cref{fig:camelyon_fid_iteration_distribution_shift}, we can see that no matter what the level of distribution shifts is, \algnameshort{} can consistently improve the generated distribution towards the private data.}

\begin{figure}
    \centering
    \includegraphics[width=0.4\linewidth]{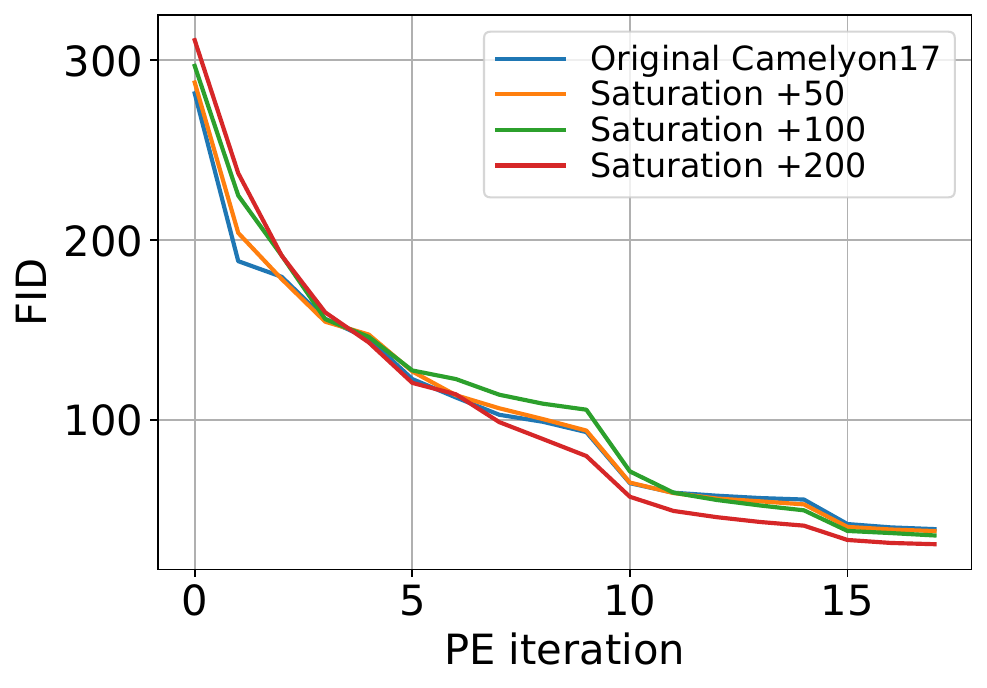}
    \caption{\revision{FID vs. \algnameshort{} iterations under \camelyon{} variants with different levels of distribution shifts.}}
    \label{fig:camelyon_fid_iteration_distribution_shift}
\end{figure}
\FloatBarrier
\section{More Details and Results on Stable Diffusion Experiments}
\label{app:stable_diffusion}

\myparatightestn{Dataset construction.} We start with cat photos taken by the authors, crop the region around cat faces with a resolution larger than 512x512 manually, and resize the images to 512x512. We construct two datasets, each for one cat with 100 images. See \cref{fig:cookie_real,fig:ruby_real} for all images.
\revision{Note that having multiple samples from the same identity is not meaningful from the practical perspective of DP. Instead of regarding these datasets as real-world use cases, they should be treated as ``toy datasets'' for experimenting DP generative models with a small number of high-resolution images.}

\begin{figure}[ht]
    \centering
    \includegraphics[width=0.8\linewidth]{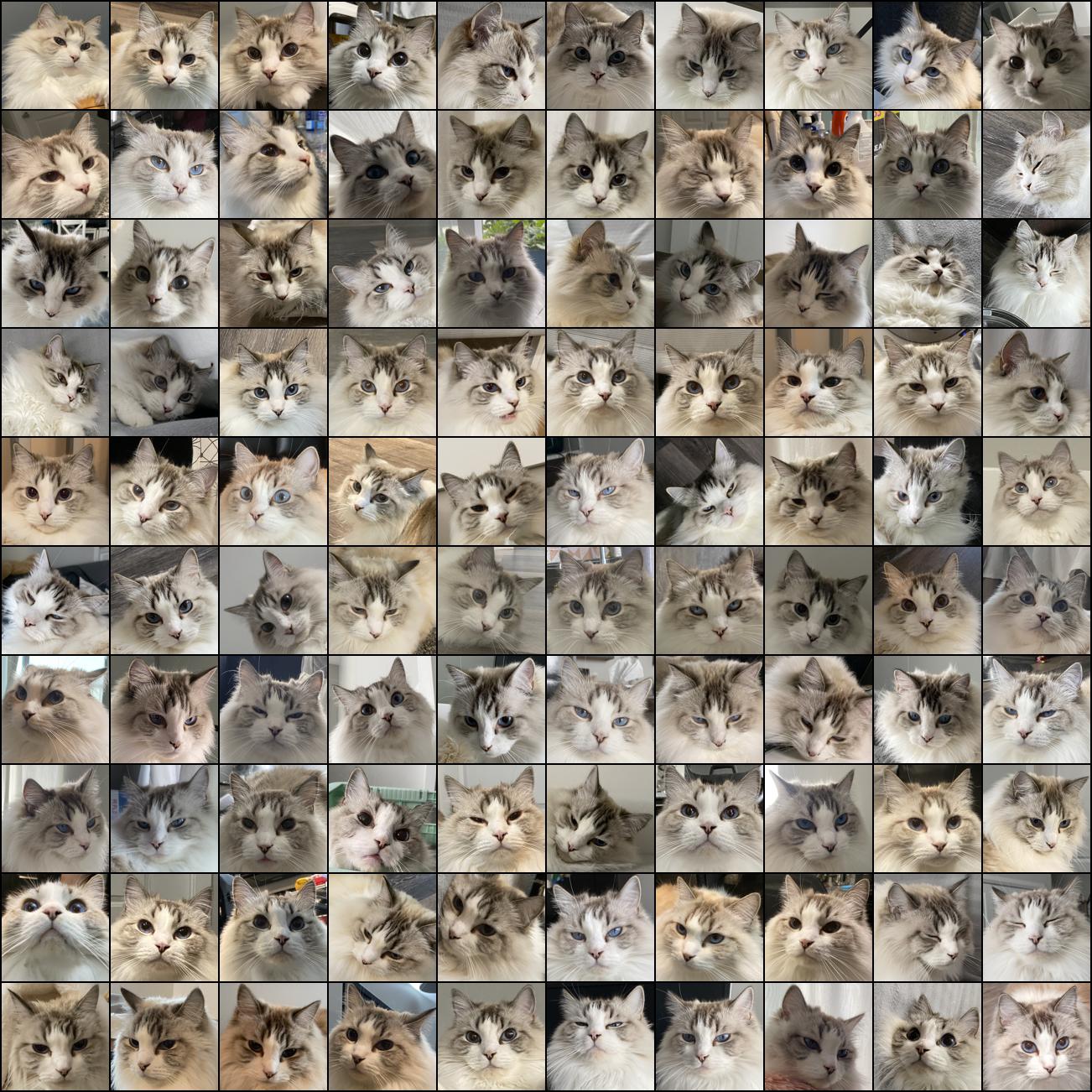}
    \caption{All images from \catcookie{} dataset. The original resolution is 512x512; we resize them to 128x128 here for reducing the file size of the paper.}
    \label{fig:cookie_real}
\end{figure}

\begin{figure}[ht]
    \centering
    \includegraphics[width=0.8\linewidth]{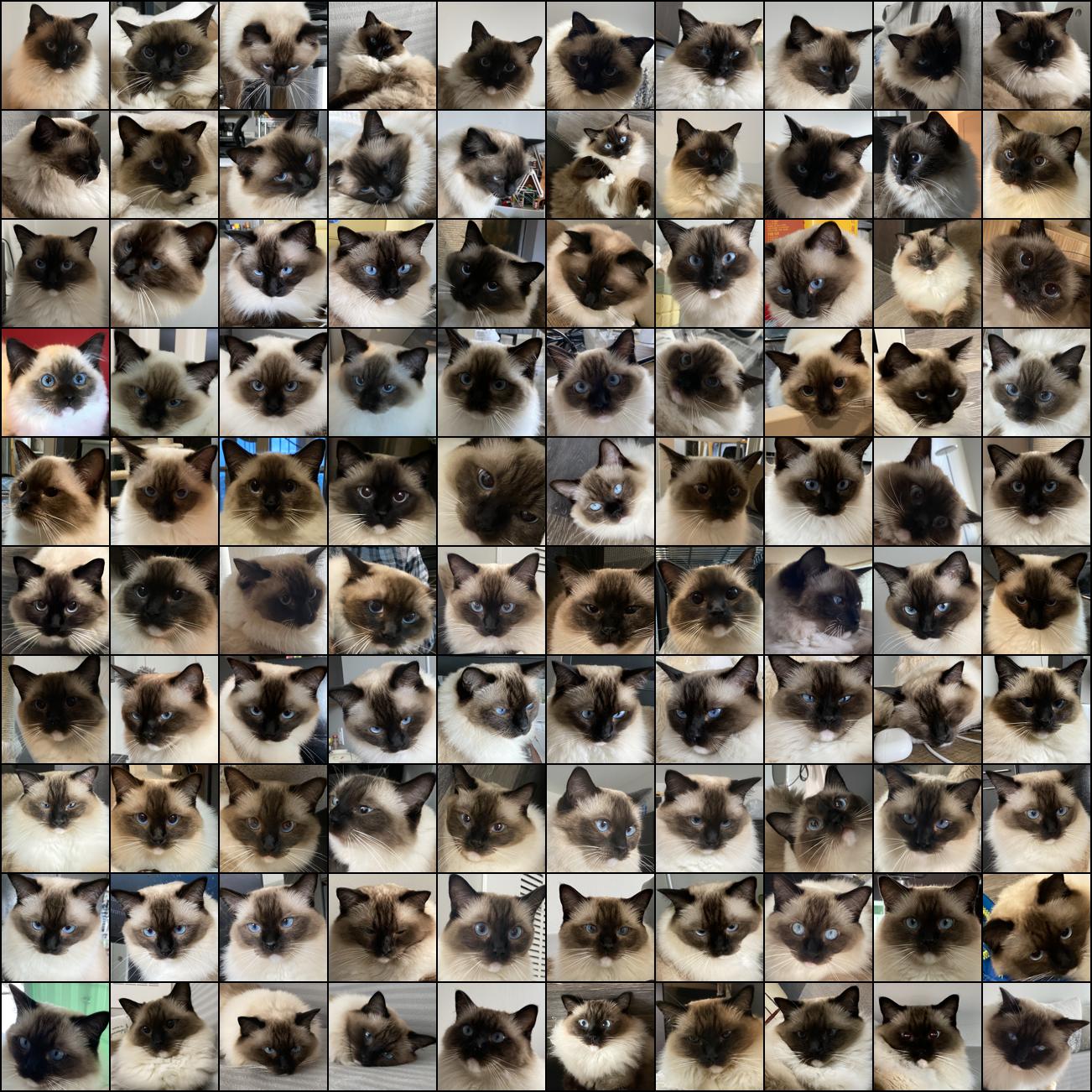}
    \caption{All images from \catruby{} dataset. The original resolution is 512x512; we resize them to 128x128 here for reducing the file size of the paper.}
    \label{fig:ruby_real}
\end{figure}

\myparatightestn{API implementation.}
We use off-the-shelf open-sourced APIs of Stable Diffusion. For \randomsampleapiname{}, we use the text-to-image generation API\footnote{\url{https://huggingface.co/docs/diffusers/api/pipelines/stable_diffusion/text2img}}, which is implemented by the standard diffusion models' guided sampling process.
For \samplevariationapiname{}, we use the image-to-image generation API\footnote{\url{https://huggingface.co/docs/diffusers/api/pipelines/stable_diffusion/img2img}}, which allows us to control the degree of variation. Its internal implementation is SDEdit \cite{meng2021sdedit}, which adds noise to the input images and runs diffusion models' denoising process.

\myparatightestn{Hyperparameters} 
We set \packing{} degree $\packingdegree=8$, and number of generated samples $\numgensamples=100$.
For \randomsampleapiname{} and \samplevariationapiname{}, we use the default DDIM sampler with 50 steps. 
For \randomsampleapiname{}, we use the prompt "A photo of ragdoll cat". This gives reaonable cat images but still far away from the private data (\cref{fig:ruby_gen_0}).
For \samplevariationapiname{}, we use variation degrees $[0.98,0.96,0.94,0.92,0.90,0.88,0.84,0.8,0.76,0.72,0.68,0.64,0.6]$ for each iteration respectively with the same prompt.
We use noise multiplier $\noisemultiplier=2$ and threshold $\threshold=2$. We use inception embedding for \cref{eq:distance_no_packing}.

\myparatightestn{Generated images.}
We use the same hyper-parameters to run \algnameshort{} on two datasets separately. This can also be regarded as running the conditional version of \algnameshort{} (\cref{alg:main_full}) on the whole dataset (with labels \catcookie{} or \catruby{}) together.
All generated images are in \cref{fig:cookie_gen,fig:ruby_gen}. While the two experiments use completely the same hyper-parameters, and the initial random images are very different from the cats (\cref{fig:ruby_gen_0}), our \algnameshort{} can guide the generated distribution in the right direction and the final generated images do capture the key color and characteristics of each of the cats. This demonstrates the effectiveness of \algnameshort{} with large foundation models such as Stable Diffusion.

We also observe that the diversity of generated images (e.g., poses, face directions) is limited compared to the real data. However, given the small number of samples and the tight privacy budget, this is an expected behavior: capturing more fine-grained features of each image would likely violate DP. 

\begin{figure}[ht]
    \centering
    \includegraphics[width=0.8\linewidth]{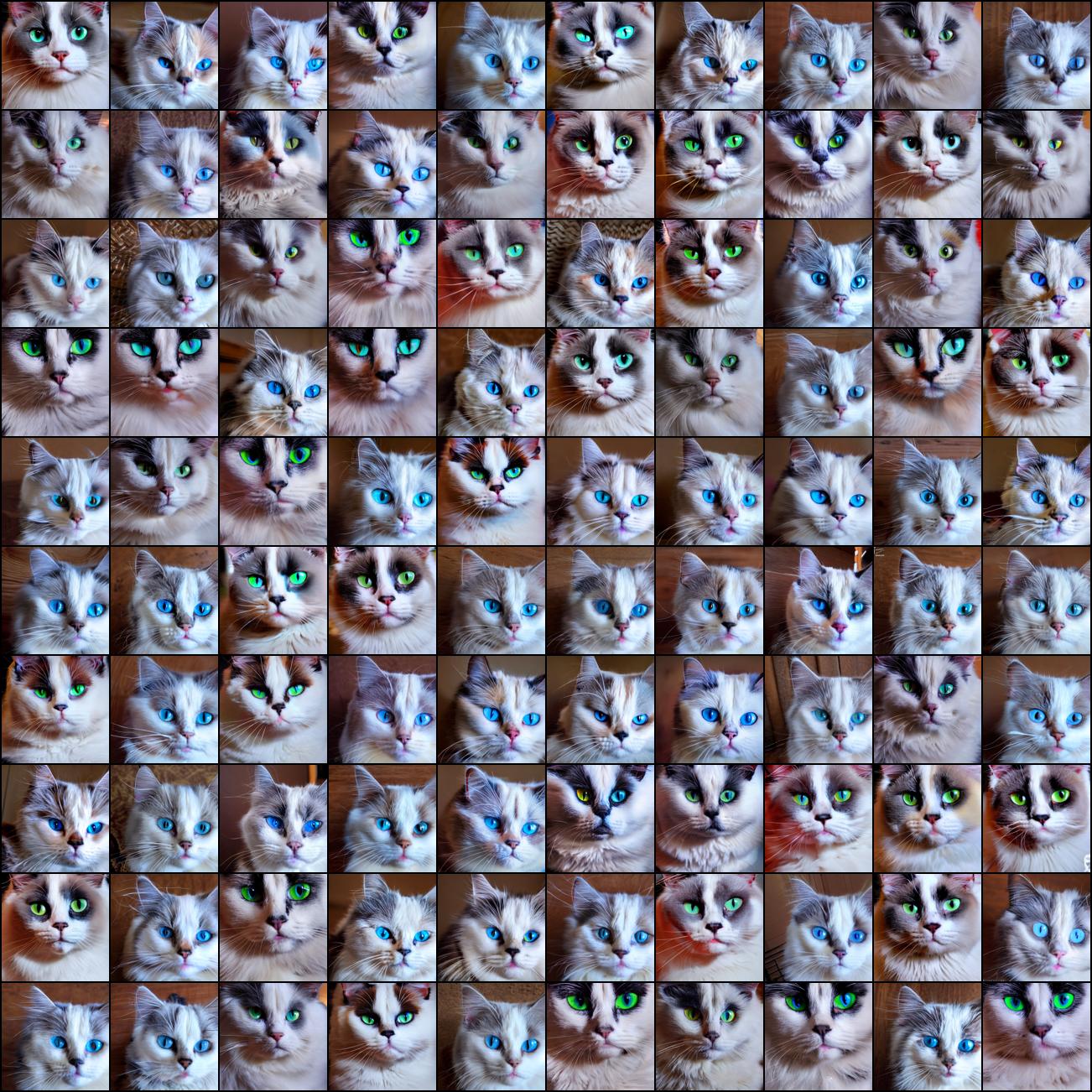}
    \caption{All generated images from \catcookie{} dataset with $(6.62,10^{-3})$-DP. The original resolution is 512x512; we resize them to 128x128 here for reducing the file size of the paper.}
    \label{fig:cookie_gen}
\end{figure}

\begin{figure}[ht]
    \centering
    \includegraphics[width=0.8\linewidth]{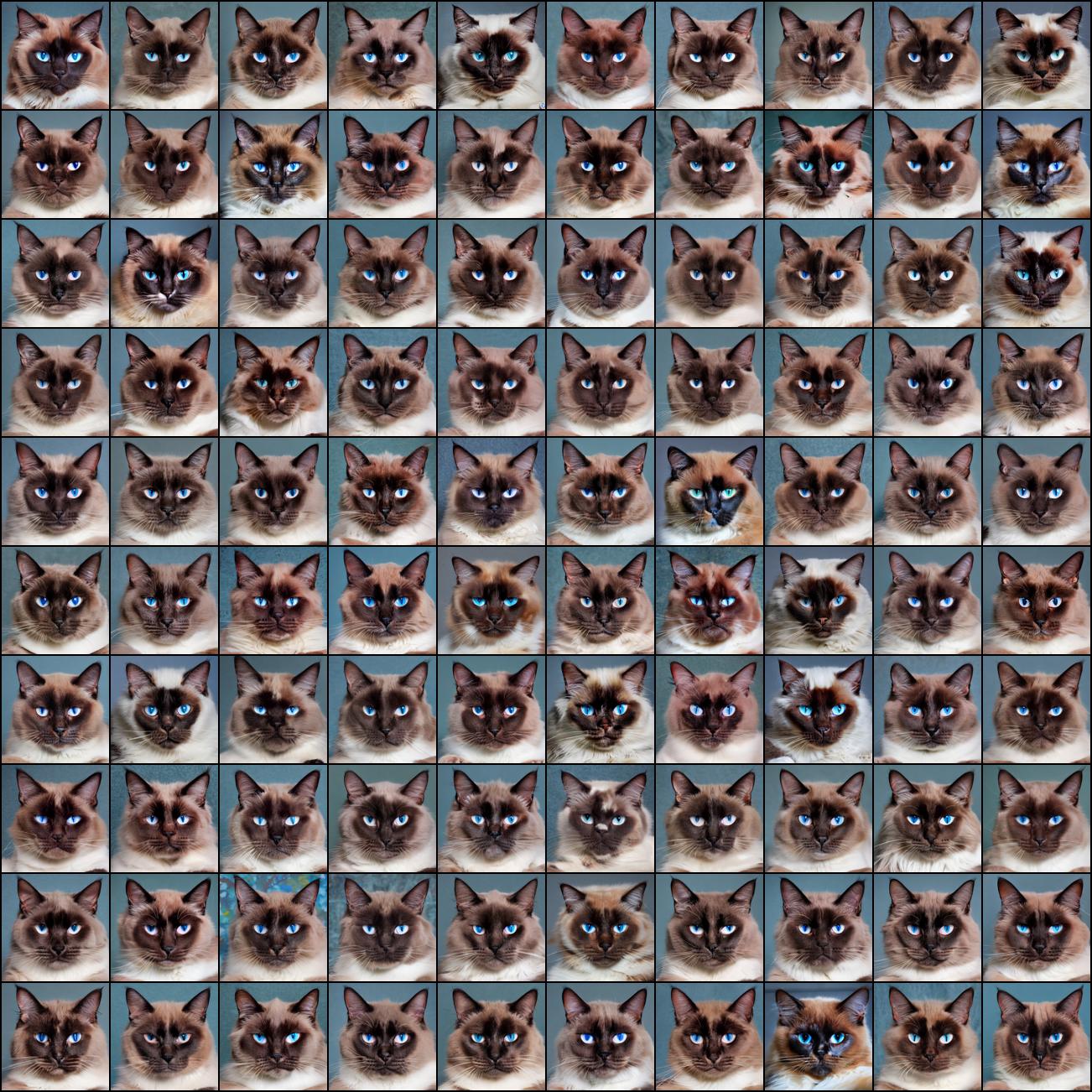}
    \caption{All generated images from \catruby{} dataset with $(6.62,10^{-3})$-DP. The original resolution is 512x512; we resize them to 128x128 here for reducing the file size of the paper.}
    \label{fig:ruby_gen}
\end{figure}

\begin{figure}[ht]
    \centering
    \includegraphics[width=0.8\linewidth]{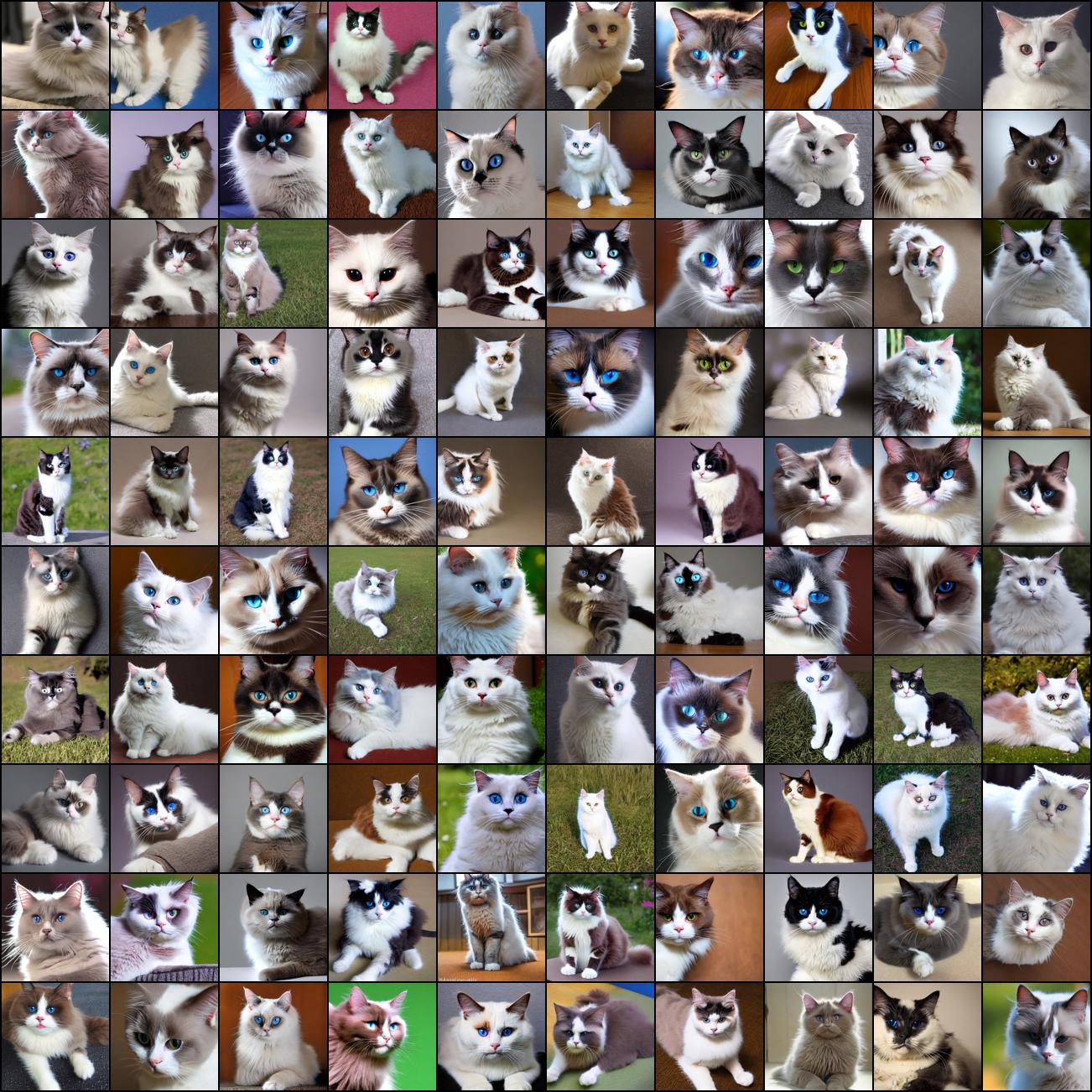}
    \caption{The initial random images from Stable Diffusion. The original resolution is 512x512; we resize them to 128x128 here for reducing the file size of the paper.}
    \label{fig:ruby_gen_0}
\end{figure}

\myparatightestn{Generated images with more diversity.} 
To make the generated images more diverse, we can utilize the approach in \cref{sec:algorithm_unlimited}, which passes the generated images through \samplevariationapiname{}. We have demonstrated in \cref{sec:exp_unlimited_number_of_samples} that this approach can generate more samples that are useful for downstream classification tasks. Here, we use it for a different purpose: enriching the diversity of generated samples.

Figure \cref{fig:cookie_gen_diversity,fig:ruby_gen_diversity} show the results. We can see that this simple approach is able to generate cats with a more diverse appearance. This is possible because the foundation model (Stable Diffusion) has good prior knowledge about cats learned from massive pre-training, and \algnameshort{} is able to utilize that effectively.

\begin{figure}[ht]
    \centering
    \includegraphics[width=0.8\linewidth]{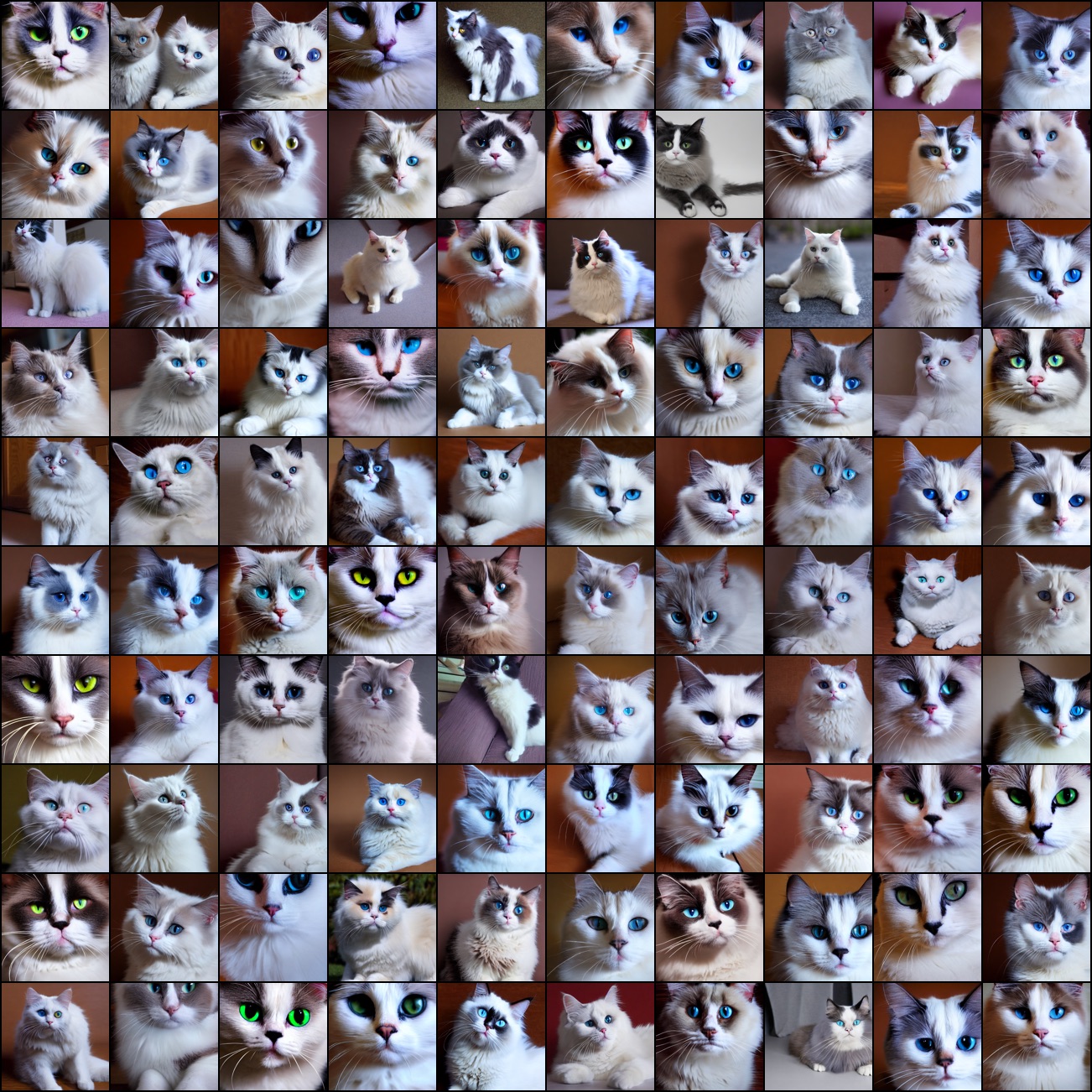}
    \caption{Generated images with enhanced diversity using the approach in \cref{sec:algorithm_unlimited} on \catcookie{}. The original resolution is 512x512; we resize them to 128x128 here for reducing the file size of the paper.}
    \label{fig:cookie_gen_diversity}
\end{figure}

\begin{figure}[ht]
    \centering
    \includegraphics[width=0.8\linewidth]{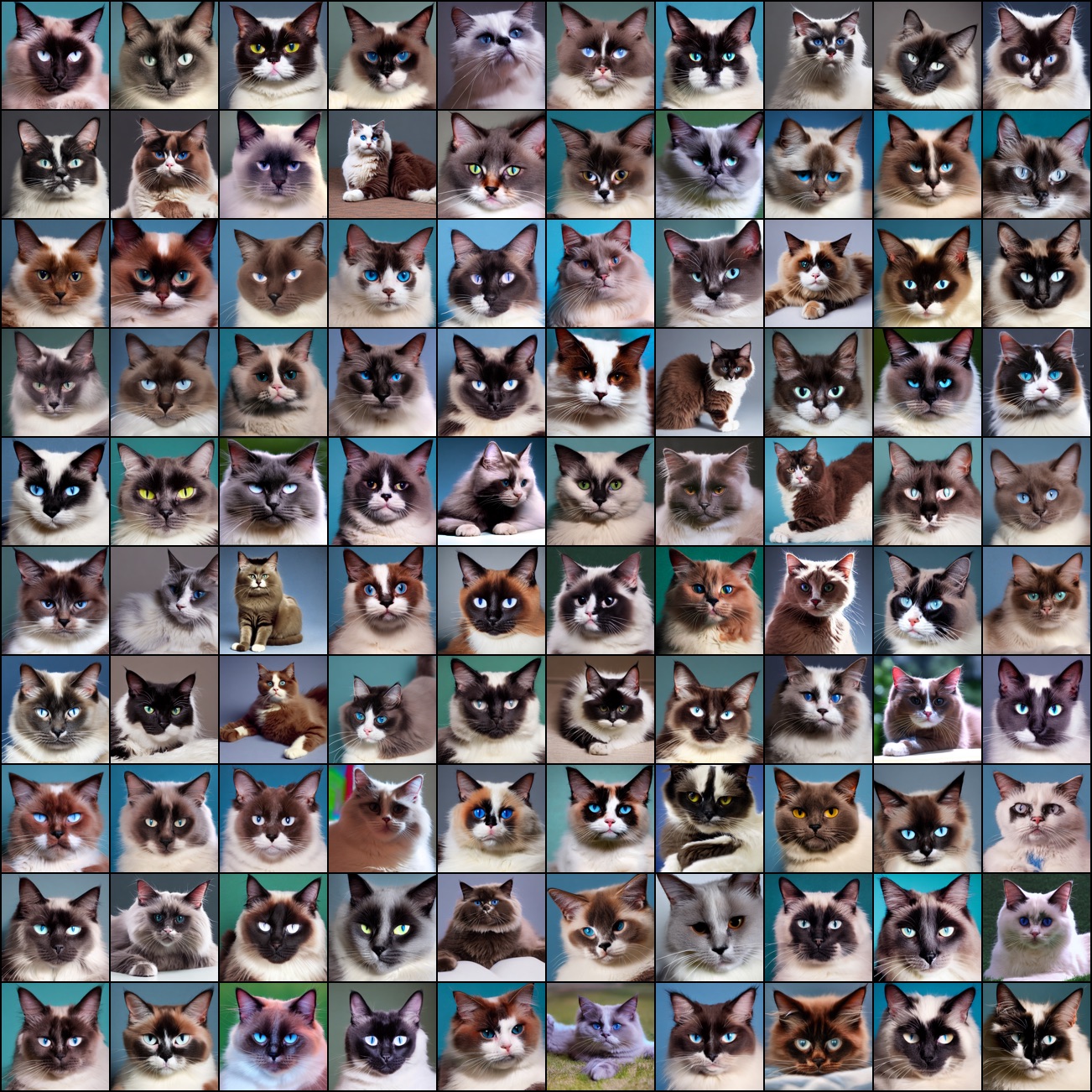}
    \caption{Generated images with enhanced diversity using the approach in \cref{sec:algorithm_unlimited} on \catruby{}. The original resolution is 512x512; we resize them to 128x128 here for reducing the file size of the paper.}
    \label{fig:ruby_gen_diversity}
\end{figure}
\FloatBarrier
\section{CINIC Experiments}
\label{app:cinic}
Given that \algnameshort{} is based on sampling from pre-trained foundation models, a natural baseline is to conditionally sample from the classes closest to the private data. For example, in the experiments of \cref{sec:exp_cifar} (\cifar{} as private data, \imagenet{} as the public data), it would be interesting to consider baselines that conditionally sample from the same classes in \cifar{} (with privacy) from a foundation model trained on \imagenet{}. 
However, there could be many ways to implement such an algorithm, and the model and hyper-parameter choices could impact the result. 
To eliminate the influence of these arbitrariness choices, we conduct the following experiment which \emph{gives an estimate on the best FID such an approach could achieve}.

\myparatightestn{Experiment setup.} We use CINIC10 dataset \cite{darlow2018cinic}, which contains \cifar{} images and the filtered images from \imagenet{} that belong to \cifar{} categories. We compute the FID between (a) the subset of CINIC10 images that come from \imagenet{} (i.e., excluding \cifar{} images from CINIC10 dataset), and (b) \cifar{} dataset. This is to simulate the best case when the foundation model (1) learns the \imagenet{} dataset perfectly and (2) is able to draw only the samples from \cifar{} classes. 
In practice, the above two assumptions will certainly not hold; especially, achieving (2) would necessarily incur privacy costs as the knowledge of which samples belong to \cifar{} is assumed to be private. 
Therefore, this process gives us a lower bound of the FID (i.e., \emph{the best possible FID}) we can get by such a baseline that samples the same classes as \cifar{} from an \imagenet{} model using an arbitrary privacy cost.

\myparatightestn{Results.} The FID score from the above is 12.21. We can see from \cref{fig:cifar_fid_epsilon} that \algnameshort{} is able to achieve a smaller FID with $\epsilon$ as small as 0.38. This result suggests that PE does a non-trivial job: it can achieve a lower FID score than simply sampling the same classes as CIFAR10 from an ImageNet model.
\FloatBarrier
\section{More Ablation Studies}
\label{app:ablation}

All ablation studies are conducted by taking the default parameters in unconditional \cifar{} experiments and modifying one hyperparameter at a time. The default noise multiplier $\noisemultiplier=5\cdot\sqrt{2}$ and the default threshold $\threshold=10$.

\myparatightestn{\Packing{} degree.} 
\cref{fig:cifar_ablation_packing} shows how the \packing{}  degree $\packingdegree$ %
(\cref{sec:alg}) 
impacts the results. We can see that higher \packing{} degrees monotonically improve the FID score. 
\revision{However, the marginal benefits diminish as the lookahead degree goes beyond 8, and a higher lookahead degree increases the required of API calls.}
Throughout all experiments, we used $\packingdegree=8$. This experiment suggests that better results can be obtained with a higher $\packingdegree$.
\revision{ In practice, we suggest users set the lookahead degree as the highest value within their computation or API budget.}
\begin{figure}[th]
    \centering
    \includegraphics[width=0.33\linewidth]{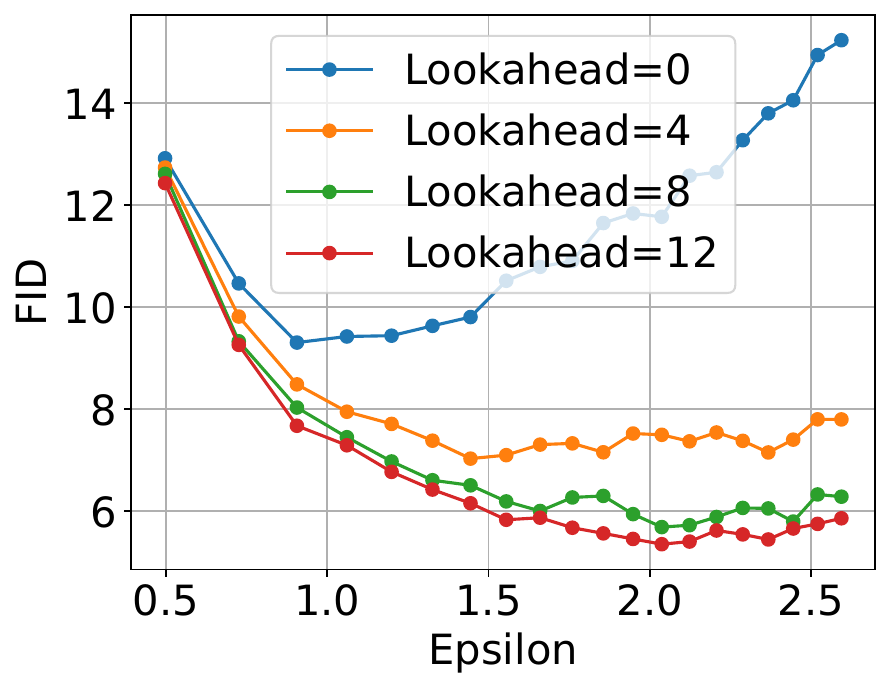}
    \caption{Ablation studies on the \packing{} degree $\packingdegree$ for \dpvotingname{}. \Packing{}=0 means \packing{} is not used (\cref{eq:distance_no_packing}). }
    \label{fig:cifar_ablation_packing}
\end{figure}

\myparatightestn{Population size.} \cref{fig:cifar_ablation_num_samples} shows how the number of generated samples $\numgensamples$ impacts the results in \emph{non-DP} setting. We can see that, as the number of samples increases,  FID score monotonically gets better. This is expected because with more generated samples, there are higher chances to get samples similar to the private data. However, we want to point out that in the DP case, it may not be true, as a large number of samples would flatten the \dpvotingname{} and thus decrease the signal noise ratio.

\begin{figure}[th]
    \centering
    \includegraphics[width=0.33\linewidth]{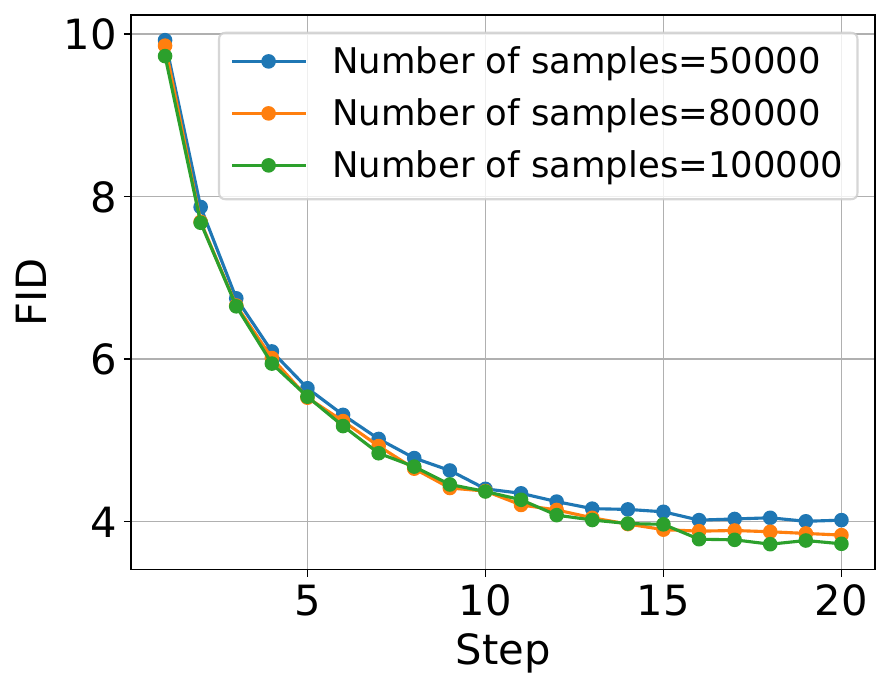}
    \caption{Ablation studies on the number of generated samples $\numgensamples$ in non-DP setting.}
    \label{fig:cifar_ablation_num_samples}
\end{figure}

\myparatightestn{Histogram threshold.}
\cref{fig:cifar_ablation_th} shows how the threshold $\threshold$ in \dpvotingname{} impacts the results. We can see that a large threshold results in a faster convergence speed at the beginning. This is because, in the early iterations, many samples are far away from the private data. A larger threshold can effectively remove those bad samples that have a non-zero histogram count due to the added DP noise.
However, at a later iteration, the distribution of generated samples is already close to the private data. A large threshold may potentially remove useful samples (e.g., the samples at low-density regions such as  classifier boundaries). This may hurt the generated data, as shown in the increasing FID scores at threshold=15. In this paper, we used a fixed threshold across all iterations. These results suggest that an adaptive threshold that gradually decreases might work better.
\begin{figure}[th]
    \centering
    \includegraphics[width=0.33\linewidth]{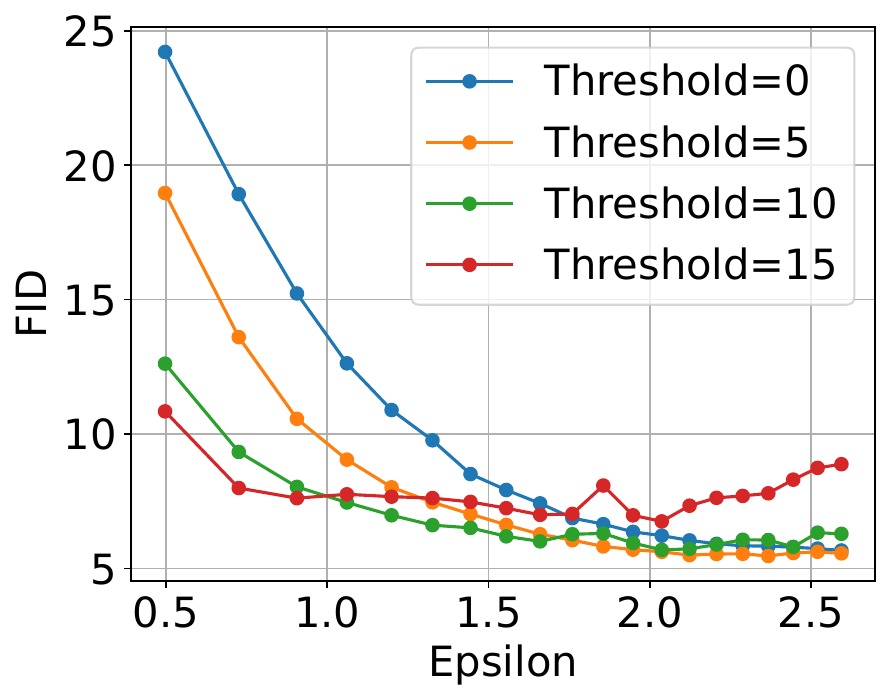}
    \caption{Ablation studies on the threshold $\threshold$ for \dpvotingname{}.}
    \label{fig:cifar_ablation_th}
\end{figure}

\myparatightestn{Embedding.}
\cref{fig:cifar_ablation_clip} compares the results with inception embedding or CLIP embedding  in \cref{eq:distance_no_packing}. The results show that both embedding networks work well, suggesting that \algnameshort{} is not too sensitive to the embedding network. Inception embedding works slightly better. One reason is that the inception network is trained on \imagenet{}, which is similar to a private dataset (\cifar{}). Therefore, it might be better at capturing the properties of images. 
Another possible reason is that FID score is calculated using inception embeddings, which might lead to some bias that favors inception embedding. %

\begin{figure}
    \centering
    \includegraphics[width=0.33\linewidth]{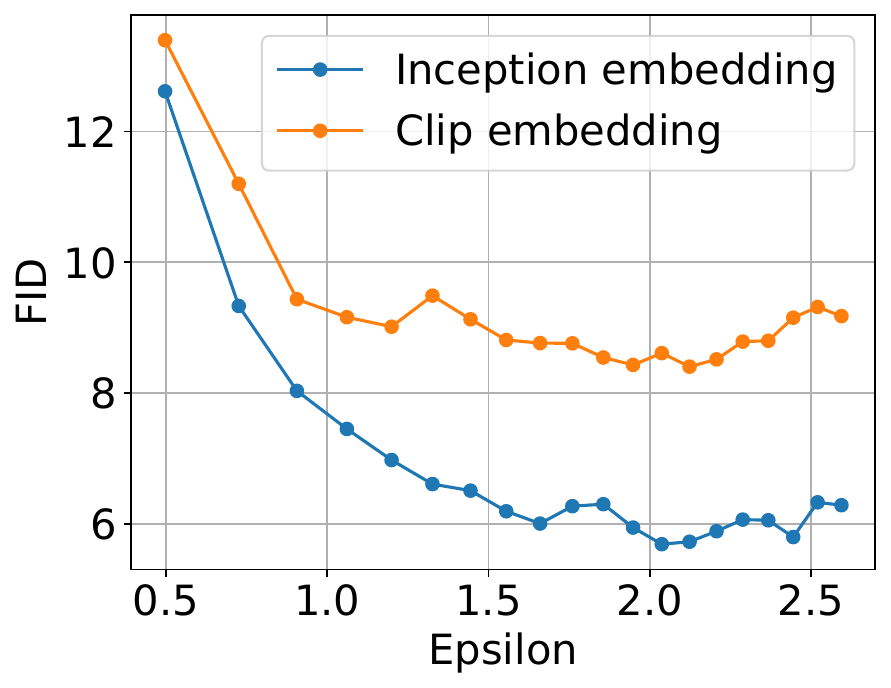}
    \caption{Ablation studies on the embedding network in \cref{eq:distance_no_packing} to use.}
    \label{fig:cifar_ablation_clip}
\end{figure}

\revision{\myparatightestn{The number of private samples.} In this experiment, we show how the number of private samples $\numprisamples$ impacts the performance of \algnameshort{}.
Specifically, on \cifar{}, we vary the number of samples $\numprisamples$ by sub-sampling. For each $\numprisamples\in\brc{50000, 20000, 10000, 5000}$:
\begin{packeditemize}
\item We fix $\numgensamples/\numprisamples=1$ when running \algnameshort{}.
\item After \algnameshort{} is done, we use the approach in \cref{sec:algorithm_unlimited} to augment the number of generated samples back to 50000, and compute the FID between the generated samples and \cifar{}.
\item All other hyper-parameters are set the same as the \cifar{} experiments in \cref{fig:cifar_gen_samples}.
\end{packeditemize}
The results are shown in \cref{tab:cifar_num_priv_sample}, which suggests that larger $\numprisamples$ helps the performance of \algnameshort{}, similar to the observation in DP-SGD \cite{anil2021large}.
We hypothesize that the reason is as follows. When we set  $\numgensamples/\numprisamples=1$, although the signal-noise ratio in the \dpvotingname{} remains constant, larger $N_{syn}$ does allow the generated samples to explore larger space, and therefore it is more likely to get a sample closer to the private data, which helps the convergence of \algnameshort{}. Our theorem in \cref{sec:theory} also shows that increasing the number of variations (which is controlled by $N_{syn}$) speeds up the convergence.
}

\begin{table}[ht]
    \centering
    \begin{tabular}{c|c}
    \toprule
        Number of samples & FID \\\midrule
       50000 (original \cifar{})  &  7.87\\
       20000 (2.5x smaller) & 10.47\\
       10000 (5x smaller) & 12.51\\
       5000 (10x smaller) & 18.60\\\bottomrule
    \end{tabular}
    \caption{\revision{FID vs. the size of the private dataset in \cifar{}.}}
    \label{tab:cifar_num_priv_sample}
\end{table}

\FloatBarrier
\section{Generating More Samples}
\label{app:more_samples}

We discussed two ways of generating more samples after \algnameshort{} is done: (1) taking the final generated samples from \algnameshort{} and passing them through \samplevariationapiname{} to get more samples (\cref{sec:algorithm_unlimited}), and (2) using a larger $\numgensamples$ when running \algnameshort{} (\cref{fig:cifar_ablation_num_samples}). While we see that both approaches are promising, they have a key difference: the first approach can be applied post hoc without the need to rerun \algnameshort{}, whereas the second approach requires rerunning \algnameshort{}. This difference would result in a very different number of API calls. More concretely, let’s say we want to generate $N$ more samples after \algnameshort{} is done. In the former approach, we simply take the final generated samples from \algnameshort{} and call \samplevariationapiname{} $N$ times. In contrast, the latter approach requires rerunning the entire \algnameshort{} process and would need to increase the number of API calls by $N\cdot (\packingdegree + 1)$, as we will need to generate more samples at every iteration.
\FloatBarrier
\section{Computational Cost Evaluation}
\label{app:computation_resource}
\label{sec:exp_computation}

We compare the GPU hours of the SOTA DP fine-tuning method \cite{ghalebikesabi2023differentially} %
and \algnameshort{} 
for generating 50k samples in \cref{sec:exp_cifar}. %
Note that \algnameshort{} is designed to use existing pre-trained models and we do so in all experiments. 
In contrast,
DP fine-tuning methods usually require a careful selection of pre-training datasets and architectures (e.g.,  \cite{ghalebikesabi2023differentially} pre-trained their own diffusion models), which could be costly.
Even if we ignore this and only consider the computational cost after the pre-training, %
\emph{the total computational cost of \algnameshort{} is only 37\% of \cite{ghalebikesabi2023differentially}} while having better sample quality and downstream classification accuracy (\cref{sec:exp_cifar}). See \cref{fig:computation_resource} for a detailed breakdown of the computational cost.
Note that \algnameshort{}'s computational cost is mostly on the APIs. %

The key takeaway is that even if practitioners want to run the APIs locally (i.e., downloading the foundation models and running the APIs locally without using public API providers), there are still benefits of using \algnameshort{}: (1) The computational cost of \algnameshort{} can be smaller than training-based methods. (2) Implementing and deploying \algnameshort{} are easier because \algnameshort{} only requires blackbox APIs of the models and does not require code modifications inside the models.

\myparatightestn{Experimental details.}
\revision{In this experiment, we follow the setting in \cref{sec:exp_cifar}, where the user has a private dataset of size $\numprisamples=50000$ and a pre-trained model (hosted locally, running on GPUs), and wants to generate a total number of $\numgensamples=50000$ synthetic data samples. For the DP fine-tuning baseline, the procedure includes (1) DP fine-tuning the pre-trained model, and then (2) using the fine-tuned model to generate $50000$ synthetic samples. Therefore, we break the time into these two parts. For \algnameshort{}, the procedure includes running PE with $\numgensamples=50000$. PE includes the steps of \randomsampleapiname{}, \samplevariationapiname{}, nearest neighbor search, and feature extraction. Therefore, we break down the time of PE into these four parts.}

\revision{In the above evaluation, we did not consider the time of generating more samples beyond $\numgensamples=50000$, and we discuss its effect here. To generate more samples beyond $\numgensamples=50000$ samples, the DP fine-tuning method can directly use the fine-tuned model to generate more samples without more fine-tuning. \algnameshort{} can use the approach in \cref{sec:algorithm_unlimited}, where we pass the generated $\numgensamples$ samples through \samplevariationapiname{} to generate more samples (what we did in \cref{sec:exp_unlimited_number_of_samples}; see results thereof). Note that, for the same size of the model, the same number of generated samples, the same diffusion sampler, and the same total number of denoising steps, \emph{the running time of PE to generate more samples is smaller than the DP fine-tuning baseline.} The reason is as follows. Let’s say we use a diffusion sampler with 100 steps in total. In the DP fine-tuning baseline, generated samples must take all 100 steps (i.e., starting from Gaussian noise, passing the image through the diffusion model 100 times iteratively to get the final generated samples). For \algnameshort{}, \samplevariationapiname{} is implemented by SDEdit \cite{meng2021sdedit} (see \cref{app:cifar}), where we add noise to input images and let the diffusion model denoise them \emph{starting from the middle of the diffusion process} (e.g., adding Gaussian noise to the image, then passing the image through the diffusion model starting from 20th denoising step for all the rest 80 steps iteratively to get the final generated samples). In other words, each generated does not need to go through all 100 steps, but only a fraction of the steps (80 steps in the above example).}

To ensure a fair comparison, we estimate the runtime of both algorithms using 1 NVIDIA V100 32GB GPU. 

To evaluate the computational cost of \cite{ghalebikesabi2023differentially}, we take the open-source diffusion model implementation from \cite{dhariwal2021diffusion}\footnote{\url{https://github.com/openai/guided-diffusion}} and modify the hyper-parameters according to \cite{ghalebikesabi2023differentially}. We obtain a model with 79.9M parameters, slightly smaller than the one reported in \cite{ghalebikesabi2023differentially} (80.4M). This difference might be due to other implementation details that are not mentioned in \cite{ghalebikesabi2023differentially}. To implement DP training, we utilize Opacus library \cite{opacus}. To evaluate the fine-tuning cost, we use \texttt{torch.cuda.Event} instrumented before and after the core logic of forward and backward pass, ignoring other factors such as data loading time. We estimate the total runtime based on the mean runtime of 10 batches after 10 batches of warmup. We do not implement augmentation multiplicity with data and timestep \cite{ghalebikesabi2023differentially}; instead, we use multiplicity=1 (i.e., a vanilla diffusion model), and multiply the estimated runtime by 128, the multiplicity used in \cite{ghalebikesabi2023differentially}. To evaluate the generation cost, we use \texttt{torch.cuda.Event} instrumented before and after the core logic of sampling. We estimate the total runtime based on the mean runtime of 10 batches after 1 batch of warmup.

To evaluate the computational cost of our \algnameshort{}, we use a similar method: we use \texttt{torch.cuda.Event} instrumented before and after the core logic of each component of our algorithm that involves GPU computation. \randomsampleapiname{} and \samplevariationapiname{} are estimated based on the mean runtime of 10 batches after 1 batch of warmup. Feature extraction is estimated based on the mean runtime of 90 batches after 10 batch of warmup. The nearest neighbor search is estimated based on 1 run of the full search. We use faiss library\footnote{\url{https://github.com/facebookresearch/faiss}} for nearest neighbor search. Its implementation is very efficient so its computation time is negligible compared with the total time.

\begin{figure}[ht]
    \centering
    \includegraphics[width=0.5\linewidth]{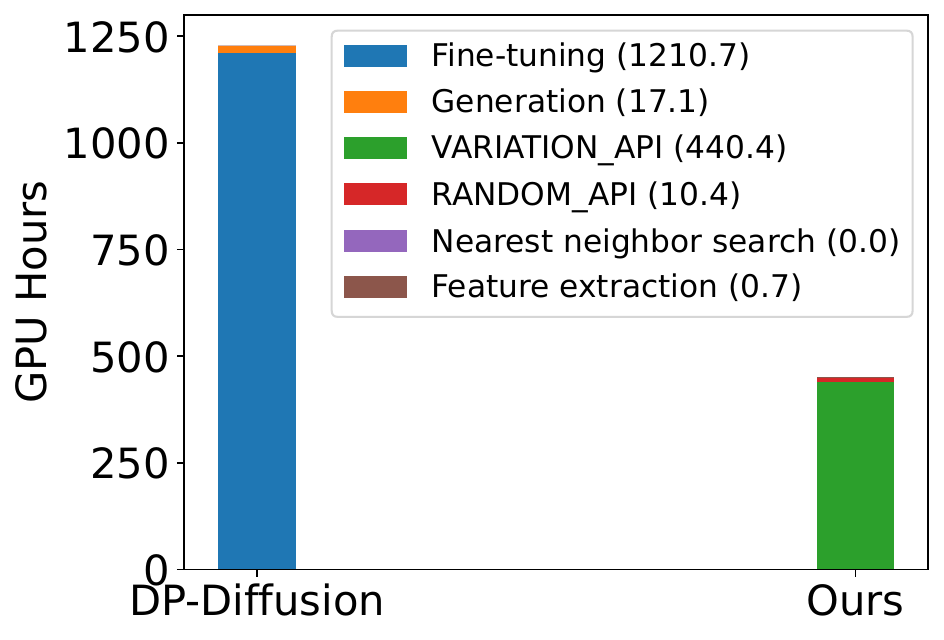}
    \caption{GPU hours (on 1 NVIDIA V100 32GB) required to obtain the samples for \cref{sec:exp_cifar} with DP-Diffusion \cite{ghalebikesabi2023differentially} and ours. The legend denotes the steps and the GPU hours; \cite{ghalebikesabi2023differentially} contains the fine-tuning and generation steps, whereas ours contains the other steps.}
    \label{fig:computation_resource}
\end{figure}

\end{document}